\newtheorem{theorem}{Theorem}[section]
\begin{document}
\title{Counting Objects by Diffused Index: \\
geometry-free and training-free approach}

\author{
Mengyi Tang
\thanks{tangmengyi@gatech.edu,
School of Mathematics, Georgia Institute of Technology
686 Cherry Street, Atlanta, GA 30332 USA.}
\and 
 Maryam Yashtini 
\thanks{my496@georgetown.edu,
Department of Mathematics and Statistics, Georgetown University,
327A St. Mary's Hall, 37th and O Streets, N.W., Washington D.C. 20057 . (The corresponding Author)}
\and Sung Ha Kang 
\thanks{kang@math.gatech.edu,
School of Mathematics, Georgia Institute of Technology
686 Cherry Street, Atlanta, GA 30332 USA. Kang's research was supported in part by Simons Foundation Collaboration Grants 584960.}}

\maketitle

\begin{abstract}
Counting objects is a fundamental but challenging problem.  In this paper, we propose diffusion-based, geometry-free, and learning-free methodologies to count the number of objects in images.  The main idea is to represent each object by a unique index value regardless of its intensity or size, and to simply count the number of index values.    
First, we place different vectors, refer to as  seed vectors, uniformly throughout the mask image. The mask image has boundary information of the objects to be counted.  Secondly, the seeds are diffused using an edge-weighted harmonic variational optimization model within each object. We propose an efficient algorithm based on an operator splitting approach and alternating direction minimization method, and theoretical analysis of this algorithm is given. 
An optimal solution of the model is obtained when the distributed seeds are completely diffused such that there is a unique intensity within each object, which we refer to as an index.   For computational efficiency, we stop the diffusion process before a full convergence, and propose to cluster these diffused index values.  We refer to this approach as Counting Objects by Diffused Index (CODI).
We explore scalar and multi-dimensional seed vectors.  For Scalar seeds, we use Gaussian fitting in histogram to count, while for vector seeds, we exploit a high-dimensional clustering method for the final step of counting via clustering.  
The proposed method is flexible even if the boundary of the object is not clear nor fully enclosed.  We present counting results in various applications such as biological cells, agriculture, concert crowd, and transportation.  Some comparisons with existing methods are presented.  
\end{abstract}

\textbf{NOTE:} Technical details and codes can be found at  https://github.gatech.edu/skang66/CODI

\section{Introduction}
Counting object is an important problem in various applications such as  biological cells \cite{chen1999automatic, drury2011endometrial, kolhatkar2016detection,lu2018class,marsden2018people}, production line items \cite{baygin2018image}, vehicles \cite{lu2018class},  plant organs \cite{ayalew2020unsupervised}, 
animals \cite{marsden2018people},  crowd  \cite{marsden2018people} counting and others. 
In literature, various different approaches have been explored. 
Studies such as watershed \cite{tulsani2013segmentation, chourasiya2014automatic} and floodfill \cite{guo2013method, chen1999automatic} consider cases where the objects to be counted have uniform intensity, similar shapes and sizes, and are disconnected from each other by distinct background color.  
For these classical techniques, the counting results are highly dependent on the quality of segmentation result of a given image.  Utilizing geometrical features of objects can be useful in such cases. Hough Transform is often implemented to segment objects with a similar circular shape \cite{baygin2018image, maitra2012detection,  venkatalakshmi2013automatic}, and aid the segmentation stage.
If the objects have overlapping boundaries, more  preprocessing is required.
For instance, in \cite{berge2011improved}, the authors split the blood cell clumps by finding the maximum curvature on object boundaries and use Delaunay triangulation.
In \cite{kothari2009automated}, the authors first detect concavity at the edge of a cluster to find the points of overlaps between two nuclei, then use the ellipse-fitting technique for segmentation. 
There are other detection oriented segmentation methods, such as, integrating representative \cite{FLCS_count05},  hough transform technique in detection \cite{ GL_count09, MM_count09, detect-hough} and principle component analysis combined with histogram processing \cite{loukas2003image}.

In some recent studies, a density map from an image patch is learned by extracting global features such as texture, gradient, edge features, or local features, then regression technique is used for counting \cite{crowd_dense_est13,   onoro2016towards, marsden2018people, xie2018microscopy, wang2016fast}. 
Based on extraction of these meaningful features,  integration is done on the density function over any subdomain of image, based on their dependence among neighboring patches or on the whole image to give an estimate 
count \cite{interactivecount,  CNNboosting, MCNN,ranjan2018iterative}.
The performance of density-based methods highly depends on the types of features used.
Many methods use manually crafted features to improve the segmentation of objects and background together with a learning a density map \cite{countingwild, cholakkal2019object}.  
There are network methods that implement regression directly on a given image without retrieving a density map to give a count.  In \cite{ding2020classification}, the author formulates the counting task as an image classification problem and takes the counts as class labels.  In \cite{deeppeaple}, a convolution network regression model is learned on extremely dense crowds to directly give a count for a crowd sample.  
These methods are able to handle a large quantity of various objects, but a corresponding database with ground truth and a training process is required.

In this paper, we propose a diffusion-based geometry-free and training-free counting method.  The main idea is to give a unique index to each object regardless of its intensity or size, and to simply count the number of indexes.    
First, we place different-value vectors, i.e. seed vectors, uniformly through out the given image.   
The seed values are independent from requiring precise prior knowledge about the image and objects to be counted. 
Secondly, these seed vectors are diffused using an edge-weighted harmonic variational optimization model to give a unique index to each object.  Our edge-weighted harmonic variational optimization model  is motivated by \cite{kang2007variational,yashtini2019efficient} which was used for color image inpainting \cite{MaryamKangSIIM}. 
Inspired by recent developments on solving structured optimization models 
\cite{DistribOpt, chy13, hnyz15, hestenes1969multiplier, WT10,YangZhangYin2010, YKssvm15, MaryamKangSIIM,yashtini2019efficient}, we exploit variable splitting, alternating direction method of multipliers, as well as periodic boundary condition to develop a fast algorithm to solve this optimization model.  We refer this part as Diffusion Algorithm.
 An optimal solution of the model is reached when the uniformly distributed seeds are diffused and reached different gray-level intensities. At this point, each object in the image has a unique index.
 For efficiency and more flexibility, we cluster the index values of each pixel before the Diffusion Algorithm is fully converged. We investigate both scalar and multi-dimensional seed vectors.  For scalar seed vectors,  we count the number of peaks in the Gaussian fitted curve of the histogram.  For multi-dimensional seed vectors, we use high dimensional density based clustering algorithm. 
 The main contribution of this paper is outlined below:
 \begin{enumerate}
 \item{We introduce new simple geometry-free and training-free counting methodologies.}
 \item{We propose a fast diffusion algorithm and  establish its theoretical analysis. }
 \item{We provide numerical experiments and comparison on various applications.  We present results for counting objects without clear or closed boundaries, and propose a simple extension to counting different size objects separately.  }
 \end{enumerate}

The rest of this paper is organized as follows.   In Section \ref{sec:method}, we present the proposed methodologies.  In Section  \ref{sec:property} and \ref{sec:experiment}, we provide more insight into the method and present various numerical examples and comparisons.
Some concluding results and remarks are given in Section \ref{sec:concluding}.

\section{Counting Objects by Diffused Index}\label{sec:method}
Let us consider a given image in which there are objects to count. We aim to give a unique index to each object regardless of its intensity, shape or size, then we simply count the number of indexes to provide the quantity of the objects.  There are three simple steps to this method: 
\begin{itemize}
\item{\textbf{[Step 1]} Place different gray-value seeds uniformly though out the given image;}
\item{\textbf{[Step 2]} Diffuse the seeds to obtain different index values within each object; }
\item{\textbf{[Step 3]} Counting the different indexes to obtain the number of objects. We can further cluster objects based on their size. }
\end{itemize}
Outline of the proposed method is presented in Figure \ref{f:outline}.
Based on the given image,  in [Step 1]  we put uniformly distributed seed onto a corresponding mask image.  We choose the seeds to be all different from each other.   In [Step 2], the seeds, whether scalar or multi-dimensional, are diffused within each object.  The diffusion process is done by an iterative algorithm where after the decay rate reaches to a certain level,  each object is reached to a different gray-intensity value. 
 This is shown in [Step 3] via histogram of the diffused image. 
 Each object is associated to a peak in the histogram.  In [Step 3], we provide two counting methods for scalar and vectorial seeds. 

 \begin{figure}
\begin{center}
\begin{tabular}{c} 
\includegraphics[width = 0.9\textwidth]{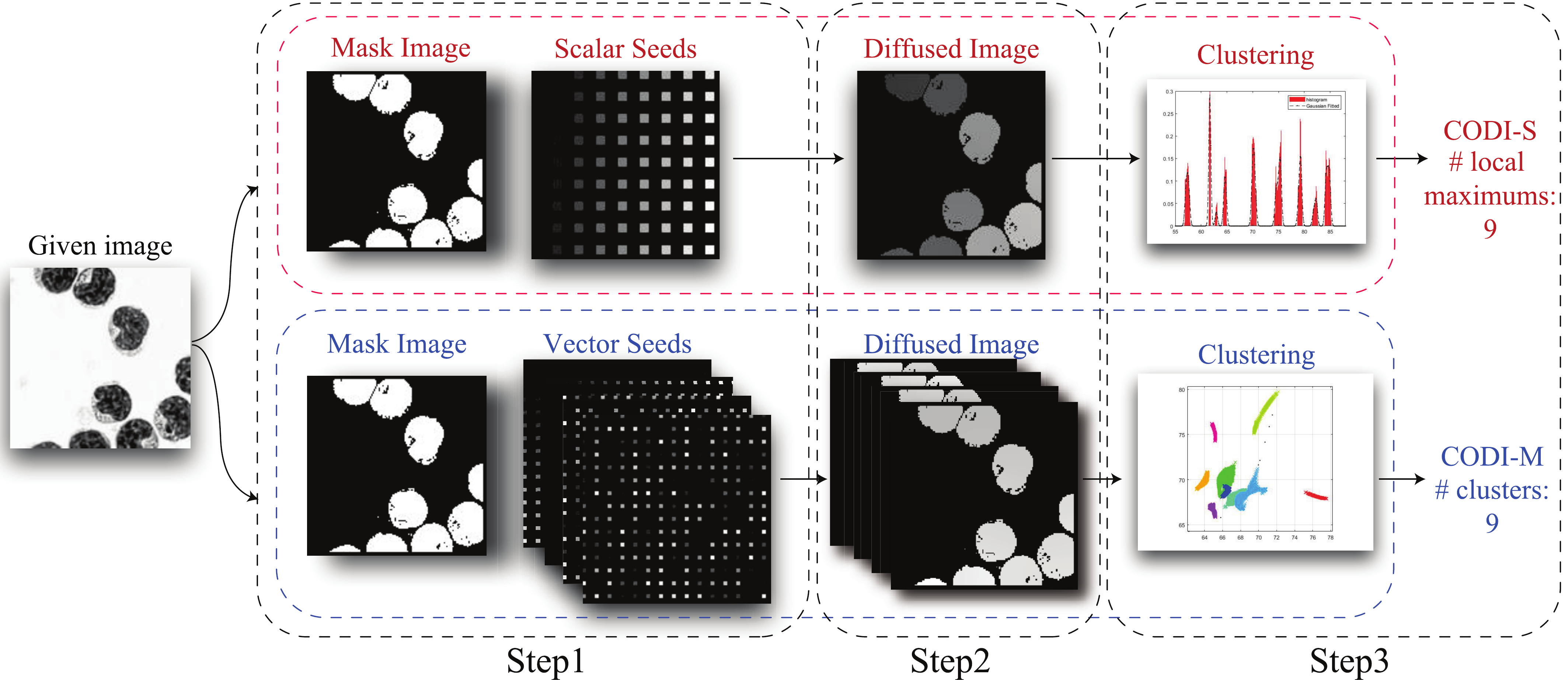}
\end{tabular}
\end{center}
 \caption{Outline of two  proposed counting methods (scalar or vectorial seeds). Given image with 9 cells.    
 \textbf{[Step 1]} Uniformly distributed seed (Scalar or multi-dimensional seeds).
 \textbf{[Step 2]} Diffusion of seeds to find unique index for each object.  
\textbf{[Step 3]} Counting stage: the number of indexes is counted using clustering methods. 
Both methods give 9 objects. }
\label{f:outline}
\end{figure}

\subsection{Ingredients: seed, mask, and edge images [Step 1]}
\label{seed}

Let $\Omega\subset \mathbb R^2$ be the image domain with Lipschitz boundary
and $\Phi_0:\Omega\to\mathbb R$ be the given image.  We place different gray-value seeds through out the given image.  
Let $U_0:\Omega\to [0,255]$ denote the  seed image with $M$ different seeds
$s_{i,j}: \Omega_{i,j} \to v_{i,j}$, where 
$\Omega_{i,j}\subset \Omega$, $i=1,2,\dots,n_1$, $j = 1,2, \dots, n_2$, $ M = n_1n_2$,
and $n_1,n_2\in\mathbb N$. 
We explore both scalar value seed as $ v_{i,j} \in (0,255]$ and multi-dimensional seed as  $ v_{i,j} \in \mathrm{R}^N$.
For multi-dimension seeds, we use superscript to represent each dimension, e.g., $U_0^j$ with $j=1,2,\dots,N$.   
Different seeds are placed on a small region $\Omega_{i,j}\subset \Omega$ such that $D=\cup_{i=1}^{n_1} \cup_{j=1}^{n_2} \Omega_{i,j}\subset \Omega$, and $D^c=\Omega\backslash D\subset \Omega$.  
In practice, $\Omega_{i,j}$ are considered to be square shape, all with the same size, and dimension $d \times d$, and the distance between two adjacent seeds to be $l$.
Outside of the seeded region $D^c$, $U_0$ is set to be zero. 
For scalar seeds, we set a constant gray-scale values $v_{i,j} \in (0,255]$ on each seed domain $\Omega_{i,j}$. Thus, the seed image $U_0$ has $M+1$ gray values $\{0,v_{1,1},\dots, v_{n_1,n_2}\}$ such that for any $x\in\Omega$, ${U}_0(x)= v_{i,j}$ if $x \in \Omega_{i,j}$, $i = 1,2,\cdots, n_1$, $j = 1,2,\cdots, n_2$, and ${U}_0(x)= 0$ otherwise.  Typically, we picked  $v_{i,j} = \frac{255}{M}[(i-1)n_2+j]$ for $i = 1,2,\cdots, n_1, j = 1,2,\cdots, n_2$ as uniformly distributed value in $(0,255]$.

Depending on the image and the objects, to stabilize the small separation between objects and to avoid having the same index for different objects, we also utilize multi-dimensional seeds.
Figure  \ref{F: seed distribution} shows a multi-dimensional seed,
where  each seed dimension is shown separately in (a)-(d).
In the first dimension, we increase the seed values in $x$-direction (horizontally) then $y-$direction (vertically) such that the lowest value is located on the upper-left corner and highest value is on the bottom-right corner. 
This is identical to the scalar seeds, i.e., $U_0^1 = U_0$.   In the second dimension, we start with the bottom-left corner, increase the values in $y$-direction first then increase in $x$-direction, where the lowest gray-value is on the bottom-left corner while the highest gray value  is on the uppper-left corner:  ${U}_0^2(x)= v_{i,j}$ if $x \in \Omega_{i,j}, \; i=1,\dots,M$ where seeds are assigned in the same logic: $v_{i,j} = \frac{255}{M}[n_1n_2 -in_2 + j ]$ for $i = 1,2,\cdots, n_1, j = 1,2,\cdots, n_2$. 
We add two additional dimensions with random seeds given by  a random permutation of the set $\{ v_1,...,v_M\}$.   The values $v_i$ of seeds in different dimensions are identical, but the order of placement is different in each dimension.  
We recommend $p\geq 3$ for multi-dimensional seeds, where $p$ denotes the number of seed dimension, i.e. 
$U_0=[U_0^1,\dots, U_0^p]\in\mathbb R^p$. Through out this paper, we consider $p=4$.

\begin{figure}
\centering
\begin{tabular}{cccc} 
\centering
(a)  &  (b) & (c)  & (d) \\ 
\includegraphics[align = c, height=0.9in]{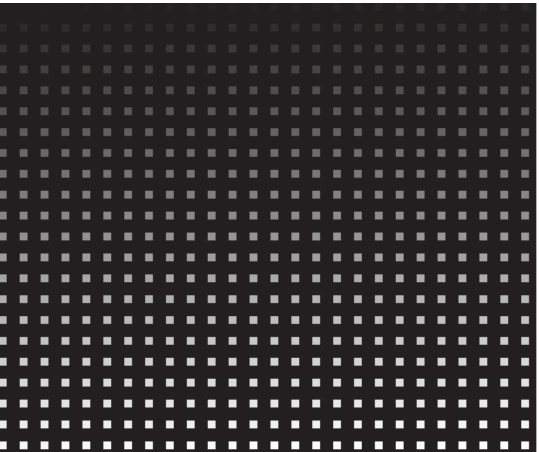}&
\includegraphics[align = c, height=0.9in]{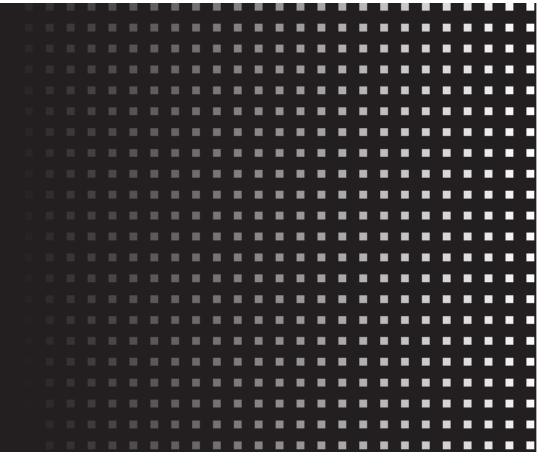}& 
\includegraphics[align = c, height=0.9in]{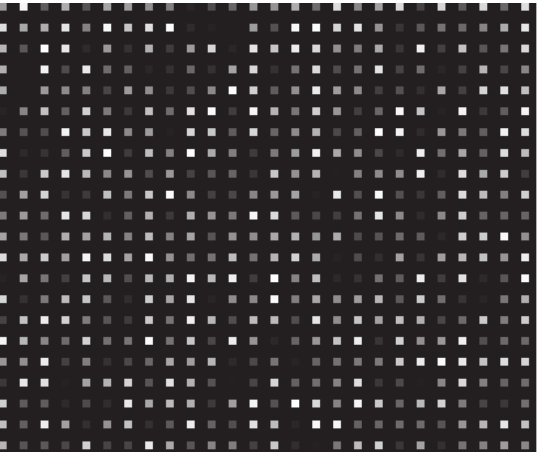} &
\includegraphics[align = c, height=0.9in]{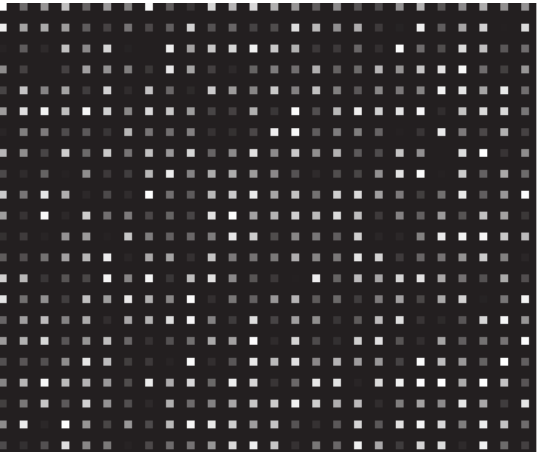}\\
\hspace{0.1in}
\end{tabular}\\
\begin{tabular}{cccccc}
\centering
(e) & (f) $\bar{g}$ & (g) & (h)$\mathcal{M}$ & (i) & (j) $\bar{g} \cdot \mathcal{M}$\\
\includegraphics[align = c,height=1in]{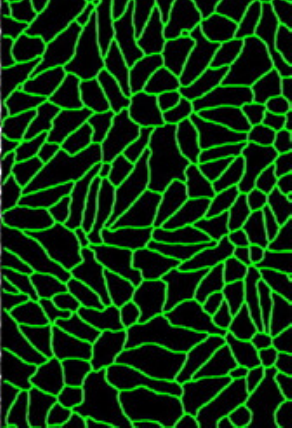}
&\includegraphics[align = c,height=1in]{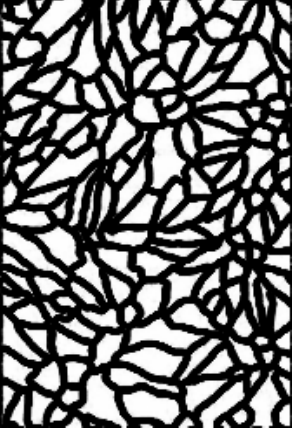}
&\includegraphics[align = c,height=1in]{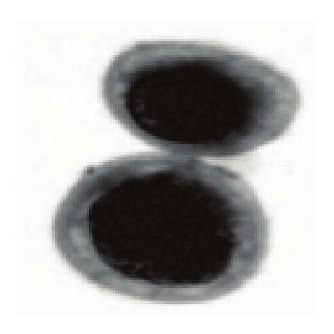}
&\includegraphics[align = c,height=1in]{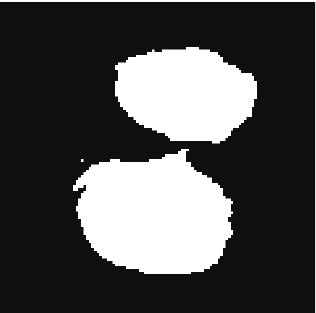} 
&\includegraphics[align = c,height=1in]{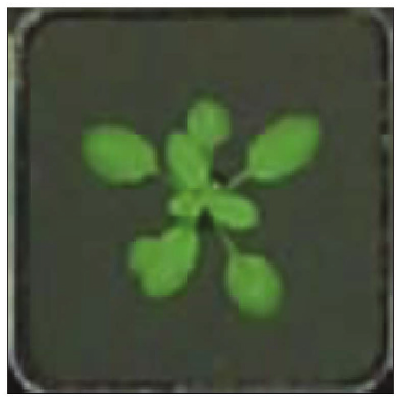} 
& \includegraphics[align = c,height=1in]{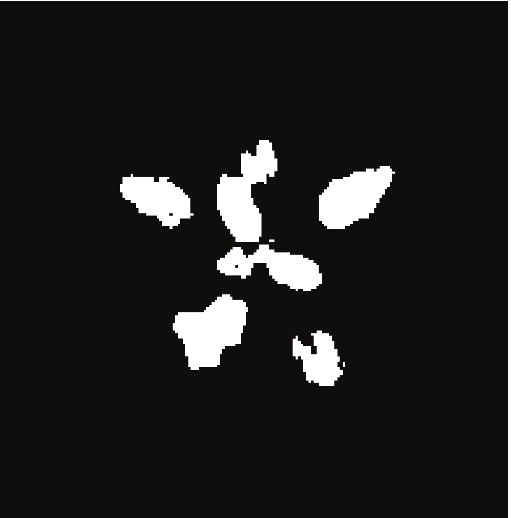} 
\\
\end{tabular}
\caption{[4 dimensional seed, edge function and mask image] (a)-(d) shows an example of multi-dimensional seed. 
(a) $U_0^1$ (horizontal), (b)  $U_0^2$ (vertical), (c) $U_0^3$ (random) (d) $U_0^4$ (random).   (e), (g) and (i) are three given images, (f), (h) and (j) show the corresponding edge function $\bar{g}$, a mask image $\mathcal M$ and a mask and edge function $\mathcal M  \cdot \bar{g}$ used for each image respectively.}
\label{F: seed distribution}
\end{figure}

The most important geometric features of any image are the edges.
When objects in a given image $\Phi_0$ are separated by edges, 
we define a continuous monotone decreasing function $\hat{g}(t): \mathbb R\to [0,1]$ 
such that $\hat{g}(|\nabla \Phi_0|)$ gives the edge information. 
Here $\nabla$ denotes the gradient operator and $|\cdot|$ represents the $\ell_2$ norm. 
Some examples of $\hat{g}$ includes 
\begin{equation}
\tilde{g}(t) = e^{-\tau t^2},\;\;\text{ and } \bar{g}(t) = (1+\tau t^2)^{-1}
\label{e:g}
\end{equation}
with $\tau>0$. 
With the monotone decreasing property of $\hat{g}(t)$ with respect to $t$, the diffusion process stops close to the object boundaries. To eliminate the coarse boundary features and remove noise,  we consider $g(\Phi_0) = \hat g(|\nabla (G_{\sigma}* \Phi_0)|)$,
where $G_\sigma$ denotes the two dimensional Gaussian function with the variance $\sigma$.

If  objects are separated with a different  background color,
we utilize a mask image $\mathcal M:\Omega\to [0,1]$ that is a binary image having zero values on the background and one on the objects. In this case, we set $g(\Phi_0)=\mathcal M$ in the model (\ref{obj:EWH}). Both  $\hat{g}$ and $\mathcal{M}$ can be considered at the same time when it is required to distinguish the objects from background as well as edges from objects. Figure \ref{F: seed distribution} gives three examples of using edge function and mask images, where  $\hat{g}$, $\mathcal M$ and $\mathcal M  \cdot \hat{g}$ are suggested for image (e), (g) and (i) respectively.

\subsection{Diffusion Phase [Step 2]}\label{ssec: diffusion process}

The weighted harmonic variational diffusion model is given by
\begin{eqnarray}\label{obj:EWH}
&\displaystyle{\min_U\;\big\{ F_{\eta}[U] \big| \;\; U\in BV(\Omega;\mathbb R^2) \quad a\le U(x)\le b \big\}} , \text{ where  }&\\[.1in]
&\displaystyle{F_{\eta}[U]=\int_{\Omega} g(\Phi_0)|\nabla U|^2 dx+\frac{\eta}{2}
\int_{D^c \cap \mathcal{M}} |U-U_0|^2 dx}. & \nonumber
\end{eqnarray}
The first term is the regularization term and the second term is the data fidelity term,
$\eta>0$ is the fidelity parameter, $0<a<b<255$, $\nabla U$ denotes the gradient 
of the image $U$ defined by $\nabla U:=(\partial_x U,\partial_y U)$,
where $\partial_x$ and $\partial_y$ are the partial derivatives  along the horizontal and vertical directions, and the function $g$ is described near (\ref{e:g}) in subsection \ref{seed}. 
The parameter $\eta$ in the fidelity term enforces the solution to stay close
to the seed image $U_0$ on the regions $\Omega_{i,j},$ $i=1,\dots,n_1$, $j=1,\dots,n_2$.
To obtain the unique index in each object, $\eta$ should not chosen too large.

We propose the diffusion algorithm to solve (\ref{obj:EWH}) efficiently,
by exploiting  variable splitting and alternating direction method of multiplier 
\cite{DistribOpt, hestenes1969multiplier, WT10,YangZhangYin2010, MaryamKangSIIM,yashtini2019efficient}.
We let  the auxiliary variable ${V}\in L^2(\Omega;\mathbb R)$ and 
we define $\Gamma :=\{ V\in L^2(\Omega;\mathbb R)| a\le V(x)\le b\}$.
We rewrite (\ref{obj:EWH}) equivalently as the following constrained optimization problem
\begin{align}\label{obj:cp}
&\displaystyle{
\min_{{ U}, {V}} \Big\{
\int_{\Omega}  g(\Phi_0)|\nabla{ U}|^2 \; dx
+\frac{\eta_D}{2} \int_{\Omega}  |{V}-{U}_0|^2} dx\Big\}& \\
&\textrm{subject to} \;\;\;  { V}={ U} \;\;{\rm and}\;\;  V \in\Gamma,&\nonumber
\end{align}
where $\eta_D:\Omega\to (0,+\infty)$ is given by  
\begin{eqnarray}
\eta_{D}(x) =
\left\{
\begin{array}{cc}
0, &x\in D\\
\eta, &\;\;\;x\in D^c \cap \mathcal{M}.
\end{array}
\right.
\end{eqnarray}
We let  ${\lambda}$ be the Lagrange multiplier associated with the 
linear constraint $V-U=0$. The augmented Lagrangian functional associated to (\ref{obj:cp}) is given
by
\begin{eqnarray*}
\mathcal L_{\mu}({U}, {V}, {\lambda})=
\displaystyle{
\int_{\Omega}
\Big\{
g(\Phi_0)|\nabla{U}|^2 
+\frac{\eta_D}{2} |{ V}-{U}_0|^2 }
+  \langle {\lambda},  {V-U}\rangle +
\frac{\mu}{2}  |{V-U}|^2 
+ \chi_{\Gamma}({V}) 
\Big\}dx, 
\end{eqnarray*}
where $\mu>0$ penalty parameter, $\chi_{\Gamma}( V)$
is the indicator function given by 
\[
\chi_{\Gamma}({ V}):=
\left\{
\begin{array}{cc}
0, & \quad {V}\in\Gamma\\
+\infty,& {\rm otherwise}.
\end{array}
\right.
\]

The algorithm to solve (\ref{obj:cp}) is given as follows. 
We initially set $k=0$, and let ${V}^{(0)}=0$ and ${\lambda}^{(0)}=0$ be the initial values.
For any $k\ge 1$, given ${V}^{(k)}$ and ${\lambda}^{(k)}$, we compute ${ U}^{(k+1)}$
by solving 
\[
{U}^{(k+1)}=\arg\min_{ U} \mathcal L({U}, { V}^{(k)}, {\lambda}^{(k)}).
\]
More precisely, we compute ${U}^{(k+1)}$ by solving 
\begin{eqnarray}\label{usub-orig}
\min_{U}
\int_{\Omega} \Big\{ g(\Phi_0) |\nabla{ U}|^2 + 
\langle {\lambda},  { V}^{(k)}-{U}\rangle+\frac{\mu}{2} |{ V}^{(k)}-{ U}|^2\Big\} dx.
\end{eqnarray}
To find the close-form solution and to encourage a fast diffusion,
we modify this energy functional as follows.
We let $G_0=\max\big\{g(\Phi_0(x))\;\big| {x}\in\Omega\big\}$,
$\mathcal H({ U})=\big(g(\Phi_0(x))-G_0\big) |\nabla { U}|^2$, and 
write $g(\Phi_0) |\nabla { U}|^2 =
G_0  |\nabla {U}|^2 + \mathcal H({ U}). $
We exploit the second order Taylor polynomial approximation of  $\mathcal H(U)$ about  
${ U}^{(k)}$ to get
\begin{eqnarray*}
\mathcal H({U})&\approx& 
\mathcal H({U}^{(k)})+
\Big\langle \nabla \mathcal H({U}^{(k)}),{U}-{U}^{(k)} \Big\rangle
+\frac{\theta}{2}|{U}-{U}^{(k)}|^2\\
&=&\Big(g(\Phi_0)-G_0\Big) |\nabla {U}^{(k)}|^2
+\Big\langle 2 \nabla \cdot \Big((G_0-g(\Phi_0) \nabla {U}^{(k)}\Big),
{U}-{ U}^{(k)}\Big\rangle
+ \frac{\theta}{2}|{ U}-{U}^{(k)}|^2,
\end{eqnarray*}
where $\theta>0$ is a scalar.
With this approximation, the $U$-minimization subproblem (\ref{usub-orig}) becomes
\begin{eqnarray*}
{U}^{(k+1)} &=& 
\displaystyle{
 \arg\min_{U} \;\; 
 \int_{\Omega}
G_0  |\nabla {U}|^2 \;dx 
+ \int_{\Omega} \Big\langle 2 \nabla \cdot \Big(G_0-g(\Phi_0)\nabla {U}^{(k)}\Big) ,{U} \Big\rangle \;dx}
 \\
&&
\displaystyle{+ \frac{\theta}{2} \int_{\Omega} |{U}-{U}^{(k)} |^2 dx
+ \frac{\mu}{2} \int_{\Omega} |{U}-{V}^{(k)} - \mu^{-1}{\lambda}^{(k)}|^2 dx }.\nonumber
\end{eqnarray*}
The first-order optimality conditions of this problem is  given by
\begin{eqnarray*}\label{Eq:uu2}
\big((\theta+\mu)\mathcal I-2G_0\Delta\big) {U}^{(k+1)}
=\theta {U}^{(k)}+
2 \nabla \cdot \Big((g(\Phi_0)-G_0) \nabla {U}^{(k)}\Big)
+\mu{V}^{(k)} + {\lambda}^{(k)}.
\end{eqnarray*}
We exploit the Fast Fourier Transform (FFT) to obtain the closed form solution of this problem.
Since $\mathcal F\mathcal F^{-1}=\mathcal I$ we then obtain 
\begin{eqnarray*}\label{Eq:uu22}
{U}^{(k+1)}
=\mathcal F^{-1}\Big[
\mathcal F\Big(\theta {U}^{(k)}+
2 \nabla \cdot \Big((g(\Phi_0)-G_0) \nabla {U}^{(k)})
+\mu{V}^{(k)} + {\lambda}^{(k)}\Big)/\mathcal D\Big],
\end{eqnarray*}
where $\mathcal D=(\theta +\mu)\mathcal I-2G_0\mathcal F(\Delta)\mathcal F^{-1}$.

%
Next, we compute ${V}^{({k+1})}$ given ${U}^{(k+1)}$ and ${\lambda}^{(k)}$  by solving 
\[
{V}^{(k+1)}=\arg\min_{V}
\mathcal L({U}^{(k+1)}, {V}, {\lambda}^{(k)}),\;\;\;{\rm where }
\]
\[
\mathcal L({U}^{(k+1)}, {V}, {\lambda}^{(k)})
=\int_{\Omega} \Big\{\frac{\eta_D}{2}  |{ V}-{U}_0|^2 
+  \langle {\lambda}^{(k)},  {V-U^{(k+1)}}\rangle +
\frac{\mu}{2}  |{V-U^{(k+1)}}|^2 
+ \chi_{\Gamma}({V}) 
\Big\}dx.
\]
The objective function is the sum of a qudratic functional and an indicator function,
so the solution is given in a closed form in form of projection as follows
\[
{V}^{(k+1)}(x)={\rm Proj}_{\Gamma} (\gamma),
\quad {\rm where} \quad \gamma=
\frac{\eta_D{ U}_0(x)+ \mu{U}^{(k+1)}(x)-{ \lambda}^{(k)}(x) }{\eta_D (x)+ \mu}.
\]
By the definition of $\Gamma$, then we have
\begin{eqnarray*}
{V}^{(k+1)}(x)
=\left\{
\begin{array}{cc}
a &  \;\;  \gamma \le a\\
\gamma & a\le \gamma \le b\\
b & \gamma \ge b
\end{array}.
\right.
\end{eqnarray*}

Finally, we update the multiplier ${\bf \lambda}^{(k+1)}$ by 
\[
{\bf \lambda}^{(k+1)}={\bf \lambda}^{(k)}+\mu ({V}^{(k+1)} - {U}^{(k+1)}).
\]
This algorithm is referred as Diffusion Algorithm, summarized in Algorithm \ref{alg:diff}.

\begin{algorithm}
\begin{algorithmic}
\STATE {\bf Data}: A digital image $\Phi_0$, mask image $\mathcal M$, seed image $U_0$ 

\STATE {\bf Output:} Diffused Image $U_*$ 

\STATE {\bf Initialization:} $k=0$; $V^{(0)}={\bf 0}$, $\lambda^{(0)}={\bf 0}$

\STATE {\bf Parameters:} $\mu>0, \theta>0, \eta>0$

\STATE Set $\mathcal D=(\theta +\mu)\mathcal I-2G_0\mathcal F(\Delta)\mathcal F^{-1}$

\STATE For $k=1,2,\dots$  do
\begin{eqnarray*}
\begin{array}{l}
{U}^{(k+1)}=\mathcal F^{-1}\Big(
\mathcal F\big(\theta {U}^{(k)}+
2 \nabla \cdot \big((g(\Phi_0)-G_0) \nabla {U}^{(k)})
+\mu{V}^{(k)} + {\lambda}^{(k)}\big)/\mathcal D\Big);\\
{V}^{(k+1)}(x)={\rm Proj}_{\Gamma} (\gamma(x)),
\;\;\;\gamma(x)= (\eta_D{U}_0(x)+ \mu{U}^{(k+1)}(x)-{ \lambda}^{(k)}(x) )/(\eta_D (x)+ \mu);\\
\lambda^{(k+1)}={\bf \lambda}^{(k)}+\mu ({V}^{(k+1)} - {U}^{(k+1)});
\end{array}
\end{eqnarray*}

\STATE If stopping criteria satisfied, set $U_*=U^{(k+1)}$. \\
\end{algorithmic}
\caption{The Diffison Algorithm}\label{alg:diff}
\end{algorithm}

\begin{theorem}
Suppose that $\{(U^{k},  V^{(k)}, {\bf \lambda}^{(k)})\}_k\in \mathbb N$ be the sequence generated by the proposed method. 
Let us define the error sequences
$
 U_{e}^{(k)}=U^{(k)}-{U^*}, \;\;
V_{e}^{(k)}=V^{(k)}-{V^*},\;\;
\lambda_{e}^{(k)}=\lambda^{(k)}-{\lambda^*},\;\;
$
where $(U^*, V^*, \lambda^*)$ satisfies
the first-order optimality conditions (\ref{obj:cp}),
that is,
\begin{eqnarray}\label{kkt}
-2\nabla \cdot (g\nabla U^*)-\lambda^*=0,
\quad
\langle \eta_D(V^*-U_0)+\lambda^*, V-V^* \rangle \ge 0,
\quad
V^*-U^*=0.
\end{eqnarray}
The following statements hold:
\begin{itemize}
\item [a.]
The quantity
\[
E_k = \frac{\theta}{2}\|U_e^{(k)}\|^2 +\frac{\mu}{2} \|V_e^{(k)}\|^2+ \frac{1}{2\mu}\|\lambda_e^{(k)}\|^2
\]
is a monotone decreasing function of all $k\in\mathbb N$, and $\mu>0$ and $\theta>0$.
\item[b.] 
$\lim_{k \to \infty}  \|U^{(k+1)}-U^{(k)}\| =
\lim_{k \to \infty}   \|V^{(k+1)}-V^{(k)}\|  =
\lim_{k \to \infty}   \|\lambda^{(k+1)}-\lambda^{(k)}\|  = 0.$
\item[c.] Any limit point of the sequence $(U^{(k)}, V^{(k)}, \lambda^{(k)})$
is an stationary point.
\item [d.] The sequence $\{(U^{(k)}, V^{(k)}, \lambda^{(k)})\}_{k\in\mathbb N}$ is convergent.
\end{itemize}
where $\|f\|=\int_{\Omega} |f|^2 dx$.
\label{thm 2}
\end{theorem}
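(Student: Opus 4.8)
The plan is to regard Algorithm~\ref{alg:diff} as a linearized (proximal) ADMM for (\ref{obj:cp}) and to prove (a) first, since (b)--(d) then follow by standard Lyapunov arguments. Write $J(U)=\int_\Omega g(\Phi_0)|\nabla U|^2\,dx$, so that $\nabla J(U)=-2\nabla\cdot(g(\Phi_0)\nabla U)$ and the first equation of (\ref{kkt}) reads $\nabla J(U^*)=\lambda^*$. The $U$-step solves the \emph{linearized} subproblem, whose first-order condition, after eliminating $U^{(k+1)}-V^{(k)}$ through the multiplier update $\lambda^{(k+1)}=\lambda^{(k)}+\mu(V^{(k+1)}-U^{(k+1)})$ and subtracting $\nabla J(U^*)=\lambda^*$, can be put in the error form
\[
\big(\nabla J(U^{(k+1)})-\nabla J(U^*)\big)-\lambda_e^{(k+1)}+\mu\big(V_e^{(k+1)}-V_e^{(k)}\big)+\theta\big(U_e^{(k+1)}-U_e^{(k)}\big)+S^{(k)}=0,
\]
where $S^{(k)}:=2\nabla\cdot\big((g(\Phi_0)-G_0)\nabla(U^{(k+1)}-U^{(k)})\big)$ is the residual produced by the Taylor linearization of $\mathcal H$.

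For the $V$-step I would use that $V^{(k+1)}$ minimizes a strongly convex functional over $\Gamma$, so its variational inequality, written with the updated multiplier, is $\langle \eta_D(V^{(k+1)}-U_0)+\lambda^{(k+1)},\,V-V^{(k+1)}\rangle\ge 0$ for all $V\in\Gamma$. Testing this with $V=V^*$ and the second relation of (\ref{kkt}) with $V=V^{(k+1)}$, then adding, yields $\langle \lambda_e^{(k+1)},V_e^{(k+1)}\rangle\le -\int_\Omega \eta_D|V_e^{(k+1)}|^2\,dx\le 0$. The core computation is to take the inner product of the displayed $U$-error identity with $U_e^{(k+1)}$, to insert $\lambda_e^{(k+1)}-\lambda_e^{(k)}=\mu(V_e^{(k+1)}-U_e^{(k+1)})$, and to convert every cross term through the polarization identity $\langle a-b,a\rangle=\tfrac12(\|a\|^2-\|b\|^2+\|a-b\|^2)$. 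The monotonicity $\langle \nabla J(U^{(k+1)})-\nabla J(U^*),U_e^{(k+1)}\rangle=2\int_\Omega g(\Phi_0)|\nabla U_e^{(k+1)}|^2\,dx\ge0$ and the $V$-inequality both contribute nonnegative terms, while the polarization identities regroup the remaining terms exactly into $E_k-E_{k+1}$ plus a nonnegative remainder, giving $E_k-E_{k+1}$ bounded below by a nonnegative combination of $\|U^{(k+1)}-U^{(k)}\|$, $\|V^{(k+1)}-V^{(k)}\|$ and $\|\lambda^{(k+1)}-\lambda^{(k)}\|$.

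The main obstacle is the linearization residual $S^{(k)}$; in exact ADMM it is absent and the descent is routine. I would handle it by integrating by parts, $\langle S^{(k)},U_e^{(k+1)}\rangle=2\int_\Omega(G_0-g(\Phi_0))\,\nabla(U_e^{(k+1)}-U_e^{(k)})\cdot\nabla U_e^{(k+1)}\,dx$, and then exploiting the single most important structural fact of the construction, $G_0=\max_x g(\Phi_0(x))$, which makes the weight $G_0-g(\Phi_0)\ge 0$. With this sign the polarization identity turns $\langle S^{(k)},U_e^{(k+1)}\rangle$ into the difference of the weighted Dirichlet energies $\int_\Omega(G_0-g(\Phi_0))|\nabla U_e^{(k+1)}|^2\,dx$ and $\int_\Omega(G_0-g(\Phi_0))|\nabla U_e^{(k)}|^2\,dx$, plus the nonnegative term $\int_\Omega(G_0-g(\Phi_0))|\nabla(U^{(k+1)}-U^{(k)})|^2\,dx$; the telescoping part can be absorbed and the remainder kept on the favorable side. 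Verifying that this regrouping closes for \emph{every} $\theta>0$ and $\mu>0$, rather than only for $\theta$ above some curvature bound, is the delicate point I would check most carefully.

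Given (a), part (b) is immediate: $E_k\ge0$ is monotone decreasing, hence convergent, so $E_k-E_{k+1}\to0$, and the descent lower bound forces $\|U^{(k+1)}-U^{(k)}\|,\|V^{(k+1)}-V^{(k)}\|,\|\lambda^{(k+1)}-\lambda^{(k)}\|\to0$; in particular $V^{(k+1)}-U^{(k+1)}=\mu^{-1}(\lambda^{(k+1)}-\lambda^{(k)})\to0$, which is asymptotic feasibility of $V=U$. For (c) I would note that $E_k\le E_0$ bounds $\{(U^{(k)},V^{(k)},\lambda^{(k)})\}$, so in the discrete FFT setting the sequence lies in a bounded subset of a finite-dimensional space and has a convergent subsequence; passing to the limit in the two subproblem optimality conditions---using (b) to identify the limits of $U^{(k+1)},V^{(k+1)},\lambda^{(k+1)}$ with those of their predecessors, and invoking closedness of the normal cone of $\Gamma$ together with continuity of $\nabla J$---shows the limit point satisfies (\ref{kkt}) and is stationary. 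Finally, for (d) I would apply (a) with $(U^*,V^*,\lambda^*)$ chosen to be the stationary limit point supplied by (c); then $E_k\to0$ along the corresponding subsequence, and monotonicity promotes this to $E_k\to0$ for the whole sequence, yielding $(U^{(k)},V^{(k)},\lambda^{(k)})\to(U^*,V^*,\lambda^*)$ and hence convergence of the entire sequence.
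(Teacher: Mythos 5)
Your proposal follows essentially the same route as the paper's proof: the same Lyapunov quantity $E_k$, the same error-form subproblem optimality conditions paired with the KKT relations (\ref{kkt}), polarization identities for the cross terms, and---crucially---the same exact telescoping of the weighted Dirichlet energy with weight $G_0-g(\Phi_0)\ge 0$ to absorb the linearization residual $S^{(k)}$, which is precisely why the argument closes for all $\theta,\mu>0$ with no curvature threshold (the delicate point you flagged resolves favorably). Parts (b)--(d) also match the paper's structure: summing the descent inequality to get vanishing successive differences, boundedness plus passage to the limit in the optimality conditions for stationarity of limit points, and re-running part (a) with the stationary limit point as $(U^*,V^*,\lambda^*)$ so that monotonicity of $E_k$ upgrades subsequential convergence to convergence of the whole sequence.
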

\begin{proof} The proof is given in Appendix \ref{proof of thm}.
\end{proof}

For multi-dimensional seeds, the minimization problem (\ref{obj:EWH}) is solved for each seed dimension separately and can be performed in parallel for efficiency.  As a result, the diffused image is also multi-dimensional.   

The converged image $U_*$ has a diffused value of nearby seeds.   The parameter $\eta$ in (\ref{obj:EWH}) controls how close the value in the domain $\Omega_{i,j}$ to the given seed $s_{i,j}$, and it is not necessary to keep $\eta$ very large.   Since the edge function $g$ gives information about the boundary of the objects, the diffusion will stop or slow down near the boundary, and give unique index to each object.  The value of the diffused image $U$ gives unique index $d_i$ to each object for $i=1,\dots,K$, i.e., we refer to this as  the diffused index image.

\subsection{Clustering and Counting [Step 3]}
\label{ssec:countingprocess}

During the diffusion process, seeds within each object start to converge to an unique index $d_i$, for $i=1,\dots,J$.  
Considering the histogram $H(U)$ of image $U$, the number of local maximum $J$ in $H(U)$ starts from the total number of seed plus zero value on $D^c$, i.e., $M+1$, and converges to $K$, the number of objects.  Since different values of seeds are placed uniformly, especially when multidimensional seeds are used, it is highly unlikely for  two objects in different locations to converge to the same index.   Local seeds converge to one unique index $d_i$ as long as they are within one object.  

In \textbf{[Step 3]}, we count the number of local maximum 
by clustering the histogram $H(U)$ of $U$.  Each local maximum represents $d_i$, a unique index for an object, and the number of such local maximums $K$ is the number of the objects in the image and big$\Phi_0$.

For one-dimensional scalar seed image,
we consider Gaussian fitting on $H(U)$  and we refer to it as Counting Objects by Diffused Index - Scalar seed clustering (CODI-S). 
The histogram can be described as a discrete function $h(r_k)=n_k/N$ where $r_k$ is the $k$th gray level intensity within the range $[0,255]$ in $U$, $n_k$ is the number of pixels having the intensity value $r_k$,
and $N$ is the total number of pixels in the image.  A discrete Gaussian filter 
$p(i)=\frac{1}{\sigma \sqrt{2 \pi}} e^{-i^2/2\sigma^2}$ for $i = -r, \cdots, r$, (where $\sigma>0$ denotes the variance) is considered onto $H(U)$ to obtain a smooth fitting curve. 
The number $K$ of local maximums is counted by implementing binary search recursively.  A larger $r$ and bigger $\sigma$ results in smoother curve which gives a fewer number of local minimums.  A smaller $r$ and smaller $\epsilon$ involved more details from the labeled pixels that it can give larger number $K$.   CODI-S has a large stable range of optimal parameters, due to smoothing $H(U)$ with the Gaussian convolution.  We used $\sigma \in [0.05,1.2]$, and $r = 5$ though out this paper. 
Figure \ref{Geometry} demonstrates the result obtained by the CODI-S on the 
cell image shown in (a). The mask image  described in [Step 1] is 
shown in (b). Notice the open boundaries between the three cells. (c) demonstrates the histogram and Gaussian fitted curve 
after the diffusion process [Step 2]; we observe three peaks where each peak corresponds to
each cell. The visualization of the clustering by CODI-S is also shown in (d), where the histograms in (c) are split into 3 sets at the two minimum values between the local maximums.

\begin{figure}
\centering
\begin{tabular}{cccccc}
(a) & (b) & (c) & (d) & (e) & (f) \\
{\includegraphics[height=.13\textwidth]{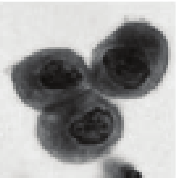}}&
{\includegraphics[height=.13\textwidth]{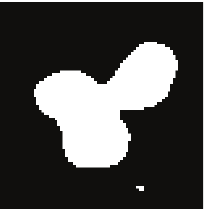}} &
{\includegraphics[height=.13\textwidth]{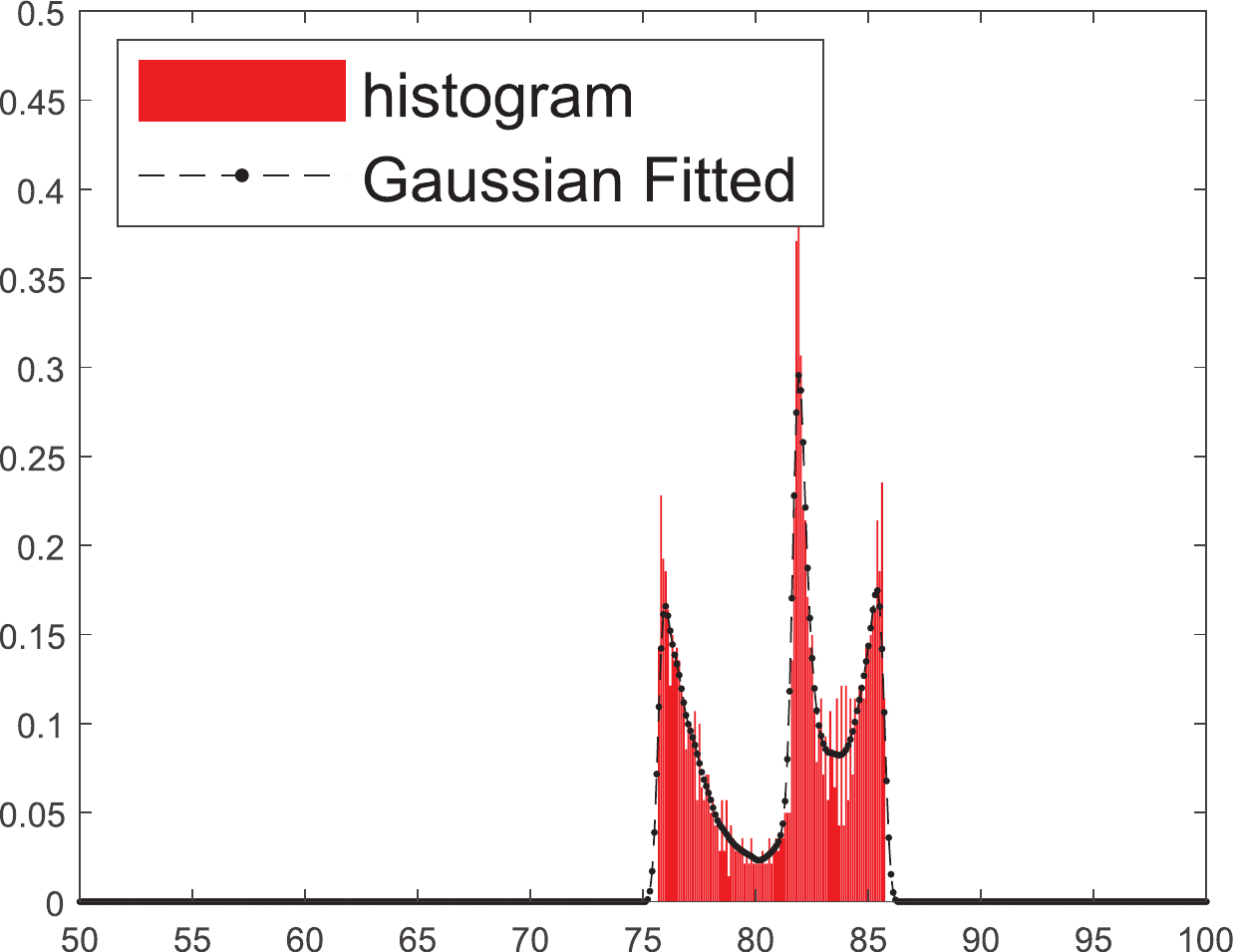}}&
{\includegraphics[height=.13\textwidth]{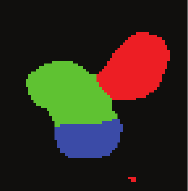}}&
{\includegraphics[height=.13\textwidth]{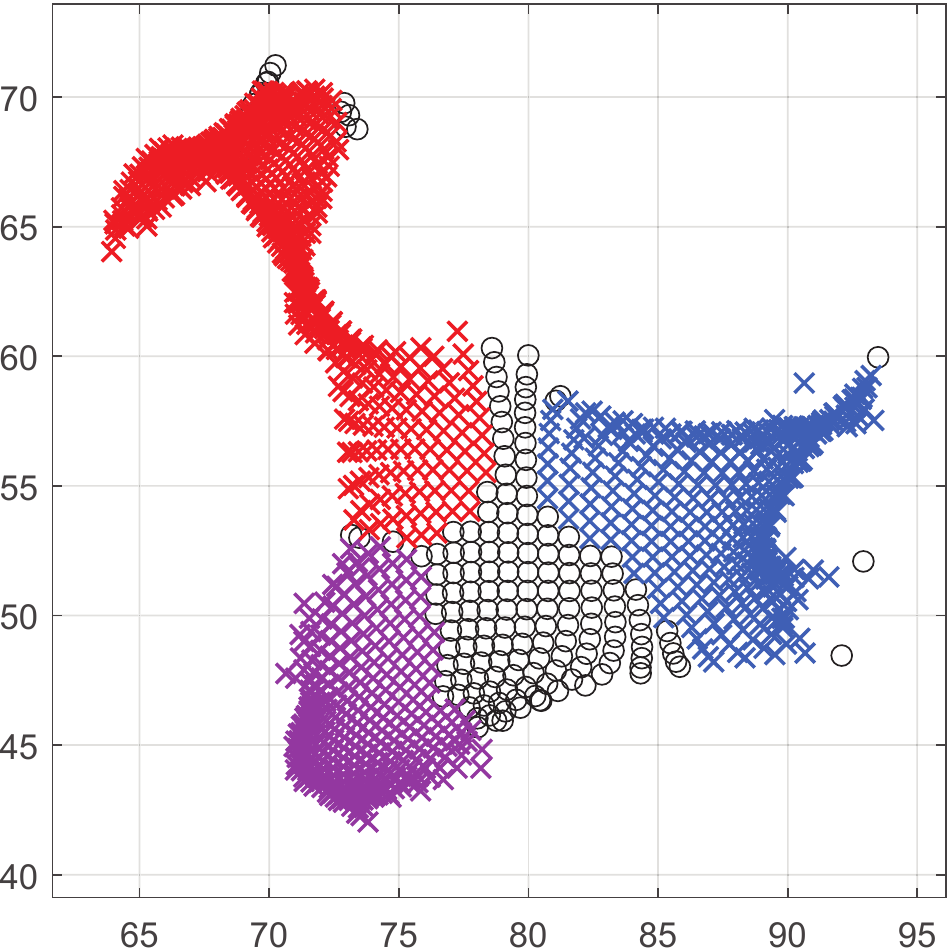}}&
{\includegraphics[height=.13\textwidth]{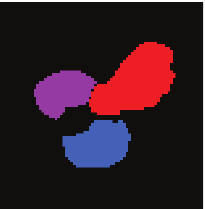}}
\end{tabular}
\caption{[CODI-S and CODI-M] 
(a) Given image with three cells
(b) The mask image showing open boundaries between the objects.
(c) The histogram and Gaussian fitted curve of  CODI-S.
(d) The visualization of CODI-S clustering in image domain. 
(e) The clustering results of CODI-M, projected onto two dimension for visualization.
(f) The visualization of CODI-M clustering in image domain. Both methods counts three cells.
}\label{Geometry}

\end{figure}

For multi-dimensional seeds, we use  high dimensional density method, such as DBSCAN, and refer to as Counting Objects by Diffused Index - Multi-dimensional seed clustering (CODI-M).  
Using DBCAN  \cite{ester1996density},  the seed vector similarity is tracked by the density, defined by $\epsilon$ and {\rm MinPts} via $l_2$ Euclidian distance norm.  
Here $\epsilon$ defines  $\epsilon$-neighborhood,  $N_{\epsilon}(\boldsymbol{x}) = \{\boldsymbol{y} \in \mathbb{R}^p : dist(\boldsymbol{x}, \boldsymbol{y}) \leq \epsilon \}$, and {\rm MinPts} is the minimum number of points required within
 $\epsilon$-neighborhood  to be connected as one cluster.  
 This property is called \textit{direct density reachability} of $\boldsymbol{x}$ from $\boldsymbol{y}$.  For points $\boldsymbol{x}$ that does not have density reachable points in its $\epsilon$-neighborhood, they are classified as noise.  
To find a cluster of 4-dimensional histogram $\boldsymbol{H(U)}$ of $\boldsymbol{U}$, we start with an arbitrary point $\boldsymbol{U}(x)$ and retrieves all  density-reachable points from $\boldsymbol{U}(x)$  with respect to given $\epsilon>0$ and  ${\rm MinPts}>0$. 
The reaching procedure ends when all points in a cluster has been visited and all the neighboring points in distance $\epsilon$ from any of the point in this cluster have been included in this cluster. The next step is to move onto the next unvisited point.
The accuracy of the method depends on the two parameters $\epsilon$  and {\rm MinPts}.
A relatively small {\rm MinPts} and large $\epsilon$ gives fewer and bigger clusters, while a relatively large {\rm MinPts} and small $\epsilon$ leads to more and smaller clusters. 
In this paper, we use $\epsilon \in [1, 1.2],  \text{MinPts} \in [ 12, 18 ] $ as the optimal range.

In CODI-M with DBCAN, clusters $\{ C_i | i = 1, \dots, K \}$ are formed, where the centroid is the unique index $\boldsymbol{d_i}$ for each object.   In Figure \ref{Geometry} (e) is a projection in two directions for visualization.  Each object is visualized in multi-dimensional space with a different color.   The number of different colors accurately gives the number of cells $K$. 
For each data $x \in \Omega$,  it is associated with the diffused index value and a cluster 
$$\{(x, \boldsymbol{U}(x), C_i ) :  \; \; U(x) \in C_i, \;\; i = 0,1,\dots, K\},$$
where $C_i$ denotes the $i$-th cluster in the high dimensional histogram domain, and $K$ denotes the count. Let $C_0$ stores $x$ which is considered as noise, and in \ref{Geometry} (e), $C_0$ is marked as black circles.  (f) shows a visualization of each cluster $C_i$ in $\Omega$ for $i=1,2$ and 3. 

%

\section{Properties of the proposed methods} \label{sec:property}

In this section, we focus on a few interesting properties of the propose methods.  Since the proposed CODI counts the diffused index  before the full convergence of diffusion algorithm,  we utilize this aspect to count objects which has open boundaries and explore this aspect.  Secondly, since we use clustering methods to count the diffused index, we can further extend this idea, and count different sized objects separately.  By using regularized $k$-means \cite{kang2011regularized}, we show how different size objects can be separately counted just by one simple additional step.

\subsection{Open boundary and counting accuracy}
\label{subsec: open boundary}

The proposed CODI counts the diffused index  before the full convergence of diffusion algorithm has reached.  These method can handle not fully closed boundaries in the objects, and we present the effect of such cases. 
In Figure \ref{F: Open boundaries MI-40iter},  three synthetic images  with different open boundaries are presented: (a) thick and narrow boundary opening, (b) thin and narrow boundary opening, and (c) thin and wide boundary opening.   The given image size is  $47 \times 91$ with the sizes of gaps as (a) $9\times 15$, (b) $9\times 3$  and (c) $21\times 3$.   Identical seeds distribution $U^1_0$ is used for CODI-S and the first dimension of CODI-M.  For CODI-M, we use  $U^2_0$ for second dimension, and two random seeds similar to the idea in Figure \ref{F: seed distribution} for third and forth dimensions.  The third and forth columns demonstrate CODI-S and the fifth and sixth columns demonstrate CODI-M  after $40$ and $80$ iterations of the diffusion process respectively.
 
We observe that even after short iterations for images (a) and (b), both CODI-S and CODI-M find two objects clearly, even with partially opened boundary.  (a1)- (a4) and (b1)-(b4) all finds two objects.   When the boundary opening is large and separation between the objects are not very clear like image (c), it is better for CODI-S to have smaller number of iteration while CODI-M needs a larger number of iterations.

\begin{figure}
\begin{center}
\begin{tabular}{cccccc}
\multicolumn{2}{c}{Seeds \& Images } & CODI-S-40 iter &  CODI-S-80 iter &CODI-M-40 iter& CODI-M-80 iter \\
(a) & $U^1_0$  & (a1)  & (a2) & (a3) & (a4) \\
\includegraphics[align=c,height= 0.9in]{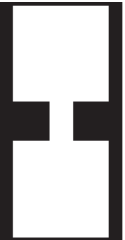} & 
\includegraphics[align=c,height= 0.9in]{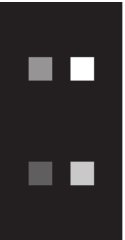} &
\includegraphics[align=c,height= 0.9in]{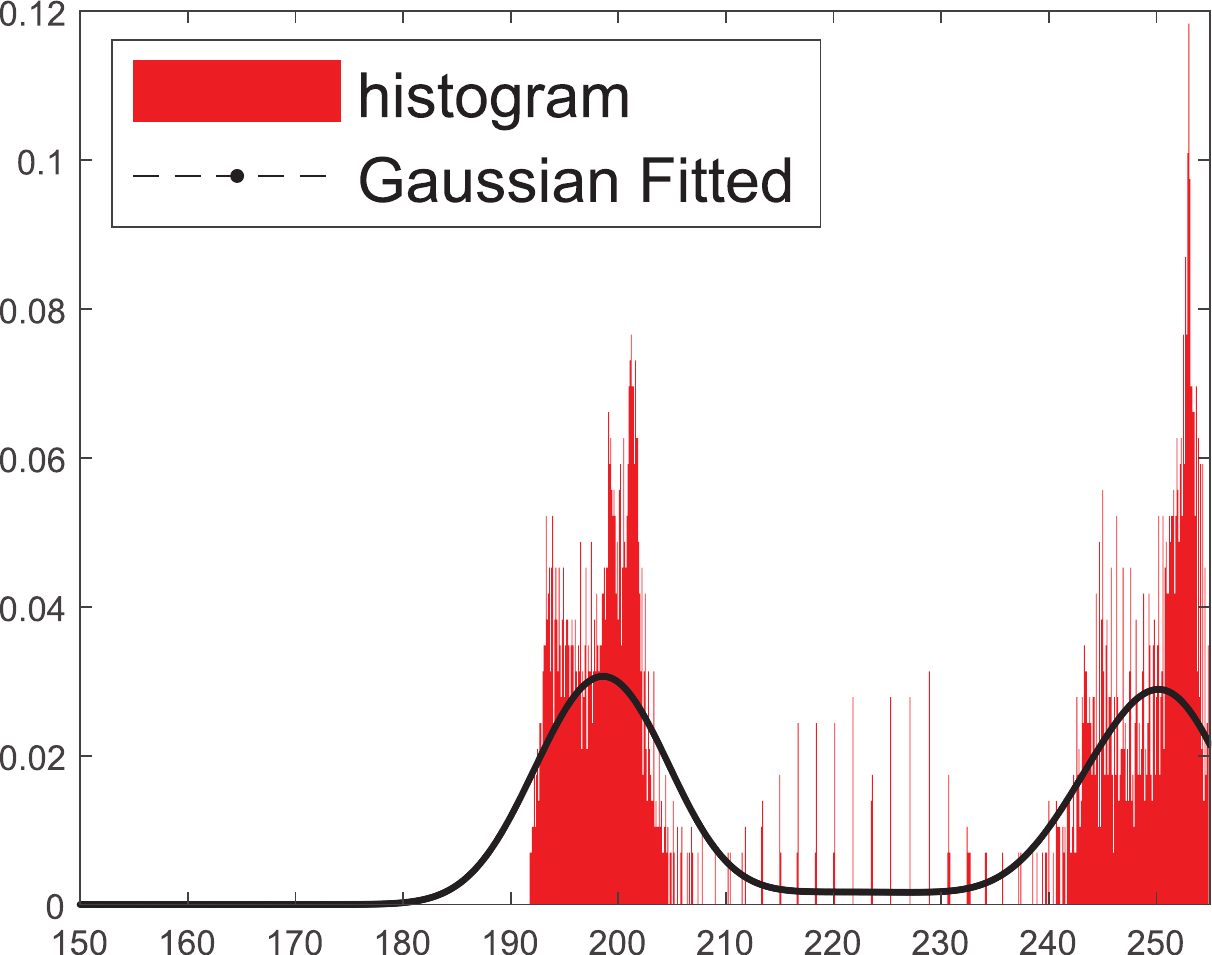} & 
\includegraphics[align=c,height= 0.9in]{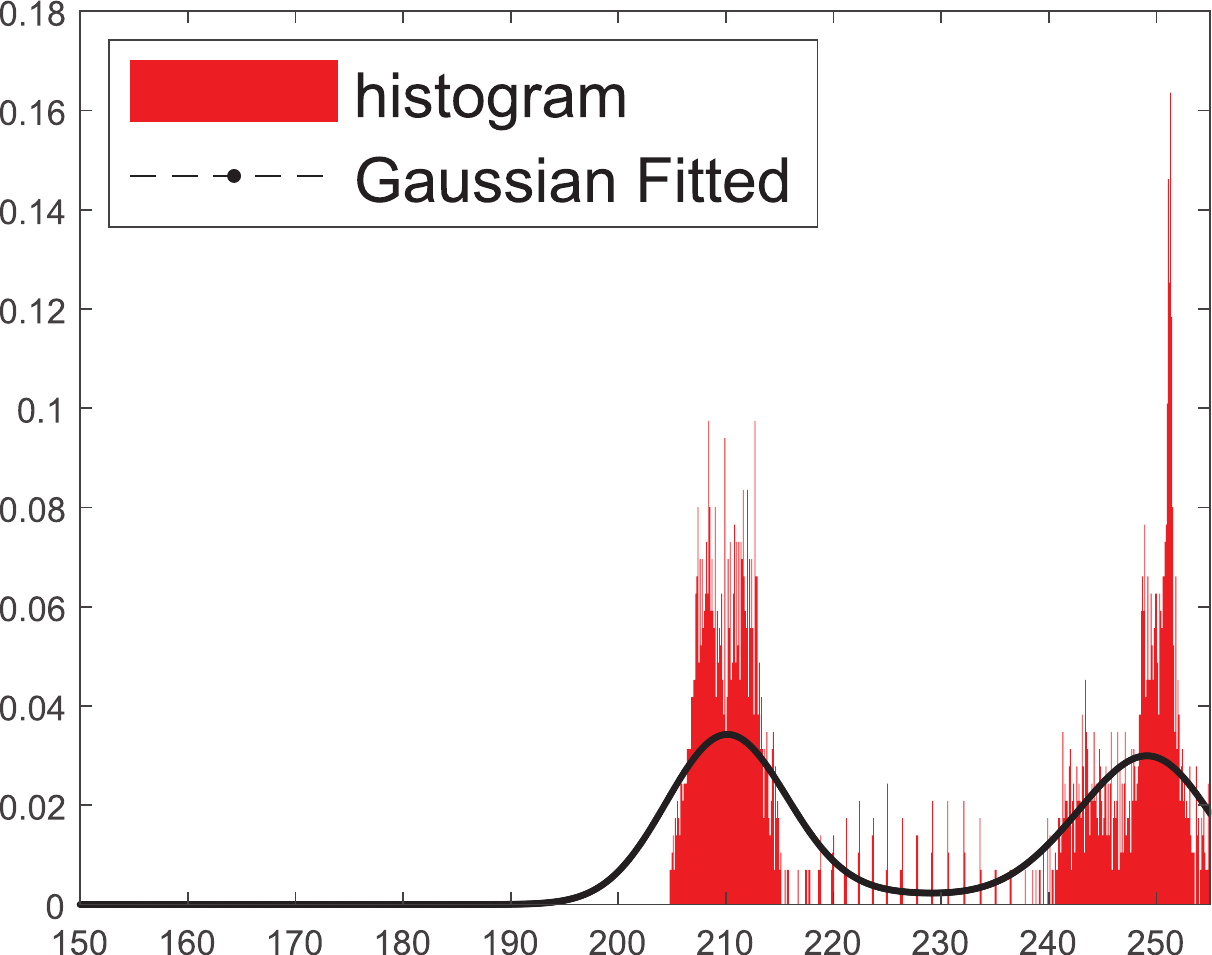}&
\includegraphics[align = c,height =  0.9in]{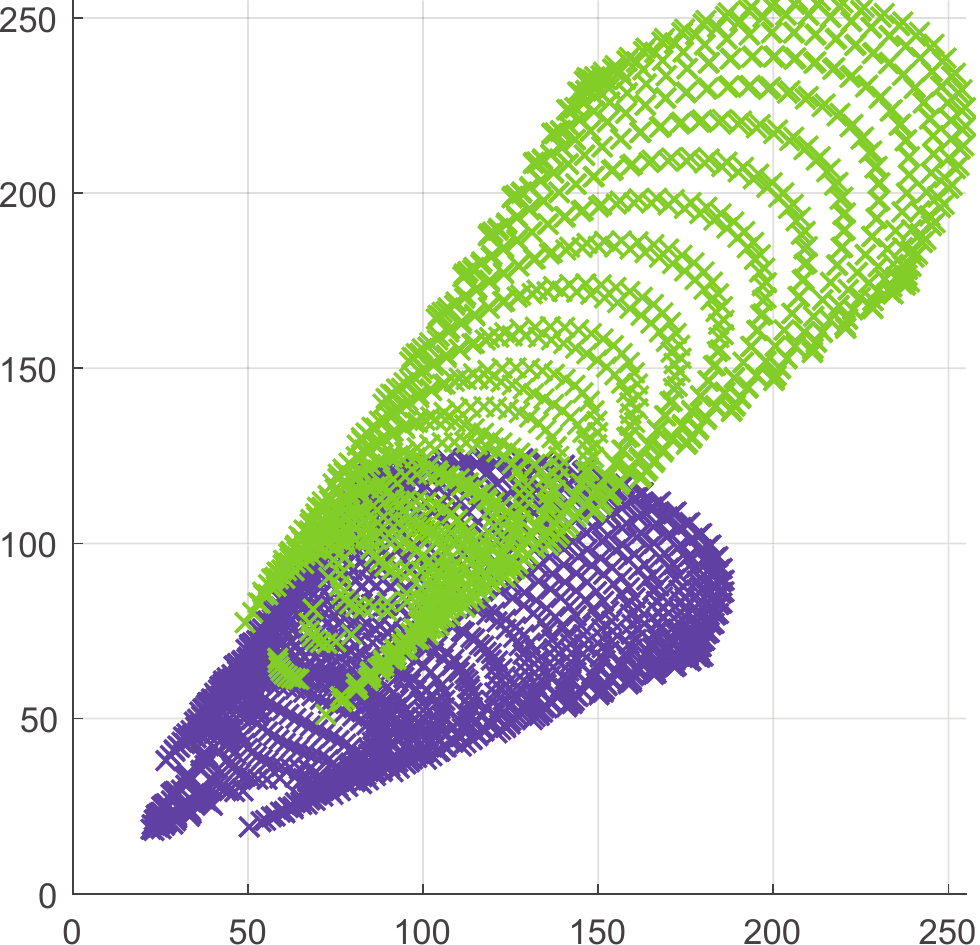} &
\includegraphics[align = c,height = 0.9in]{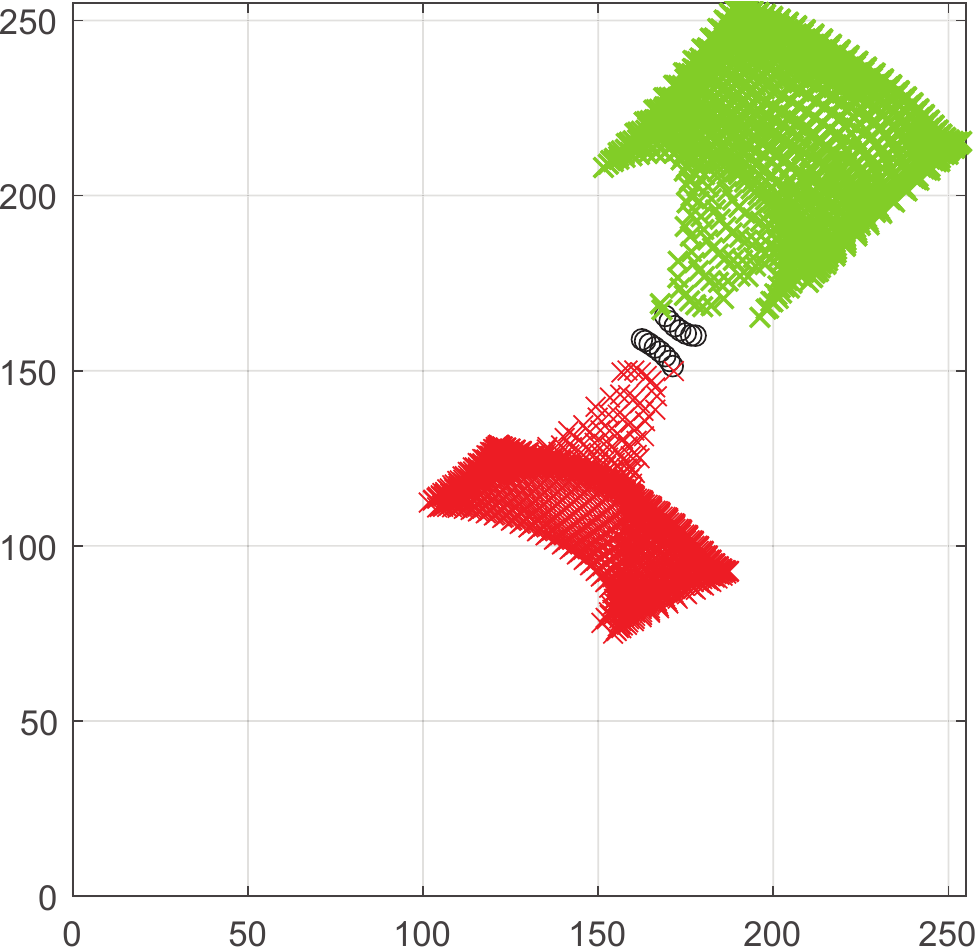} 
 \\
\hspace{0.2cm}
(b) &  $U^2_0$   & (b1)  & (b2) & (b3) & (b4)  \\
  \includegraphics[align=c,height= 0.9in]{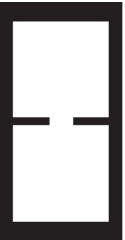} &
  \includegraphics[align=c,height= 0.9in]{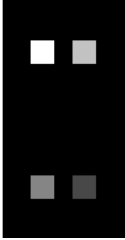} & 
 \includegraphics[align=c,height= 0.9in]{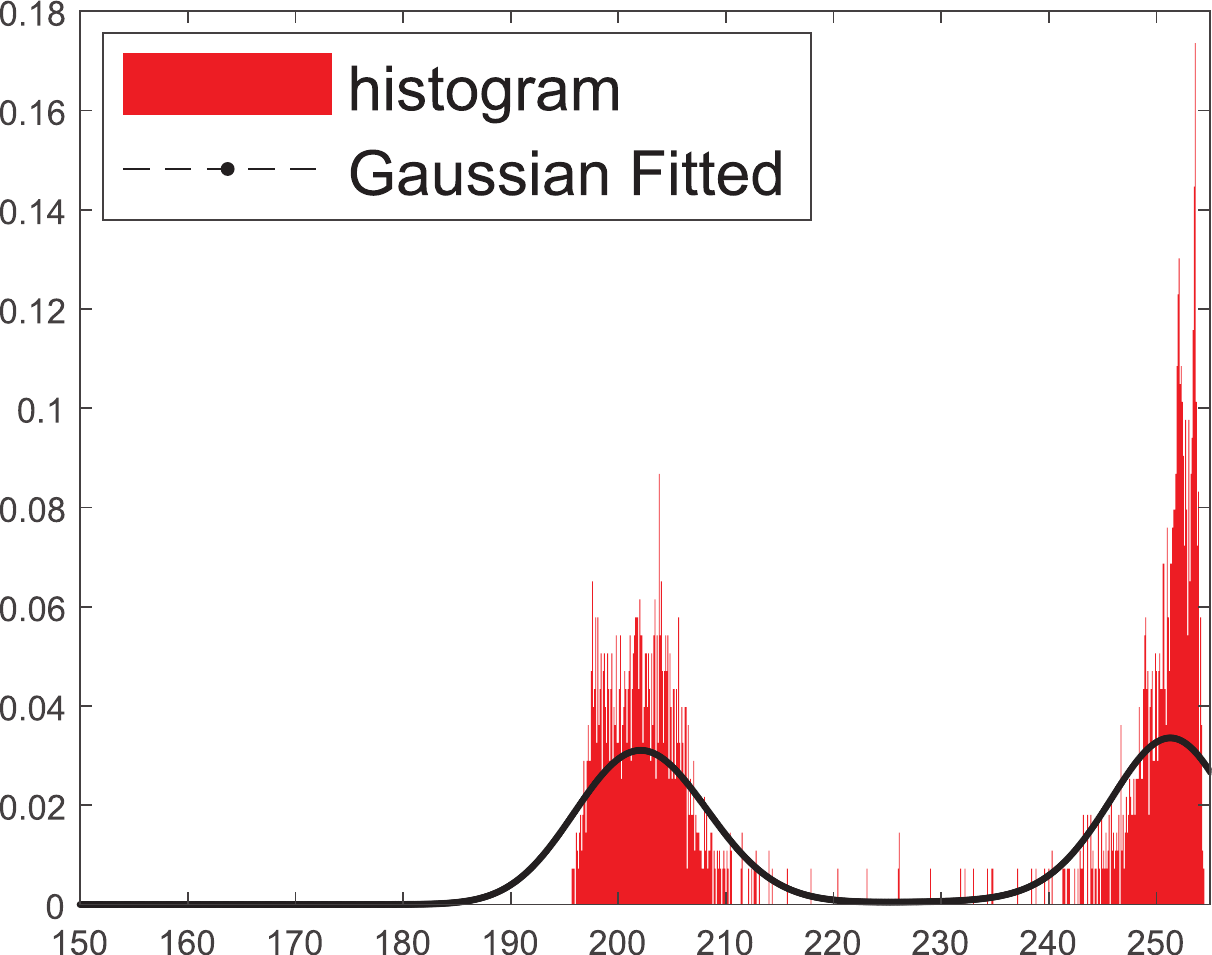}&
 \includegraphics[align=c,height= 0.9in]{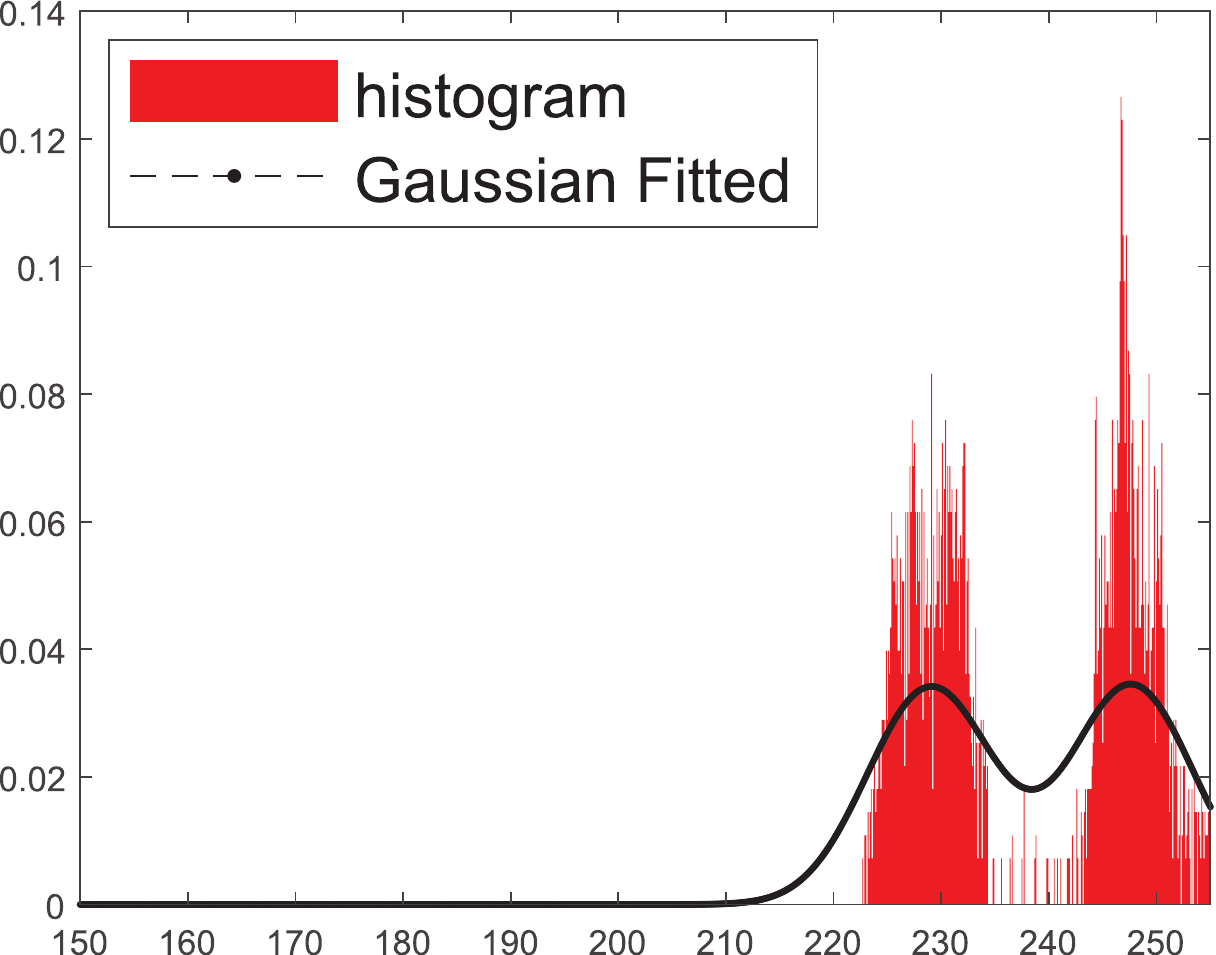}&
   \includegraphics[align = c,height = 0.9in]{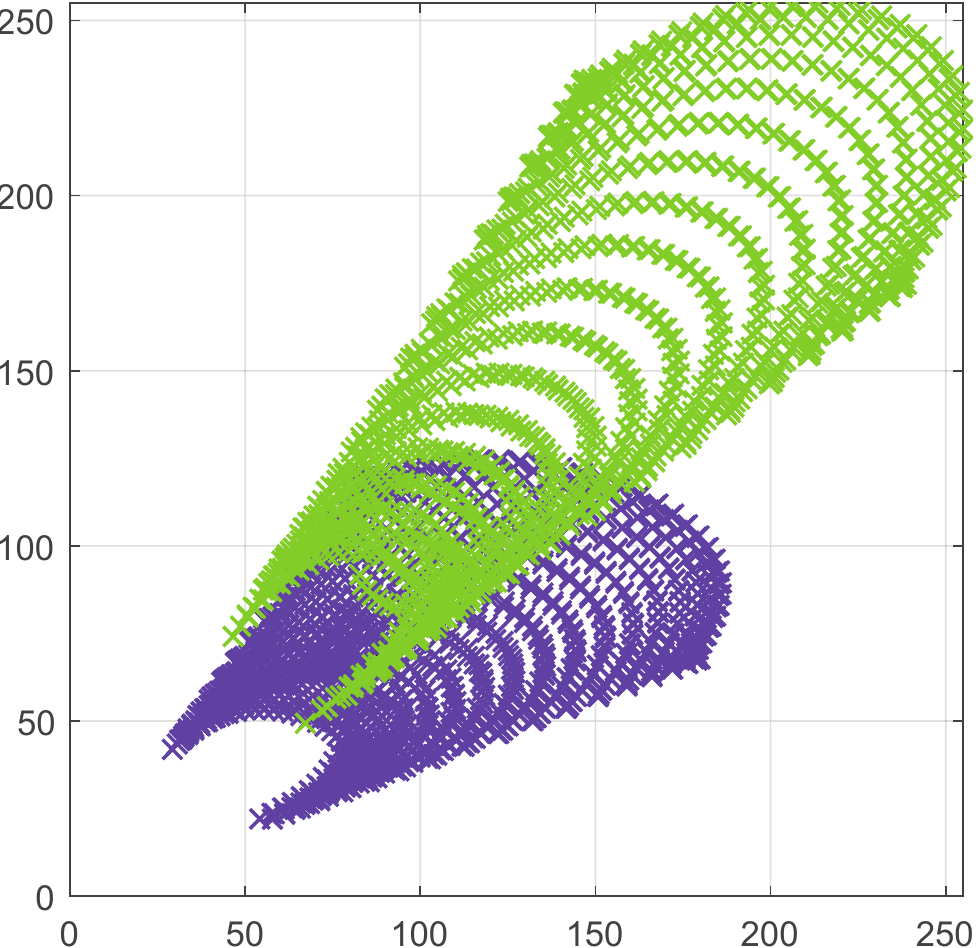}&
\includegraphics[align = c,height = 0.9in]{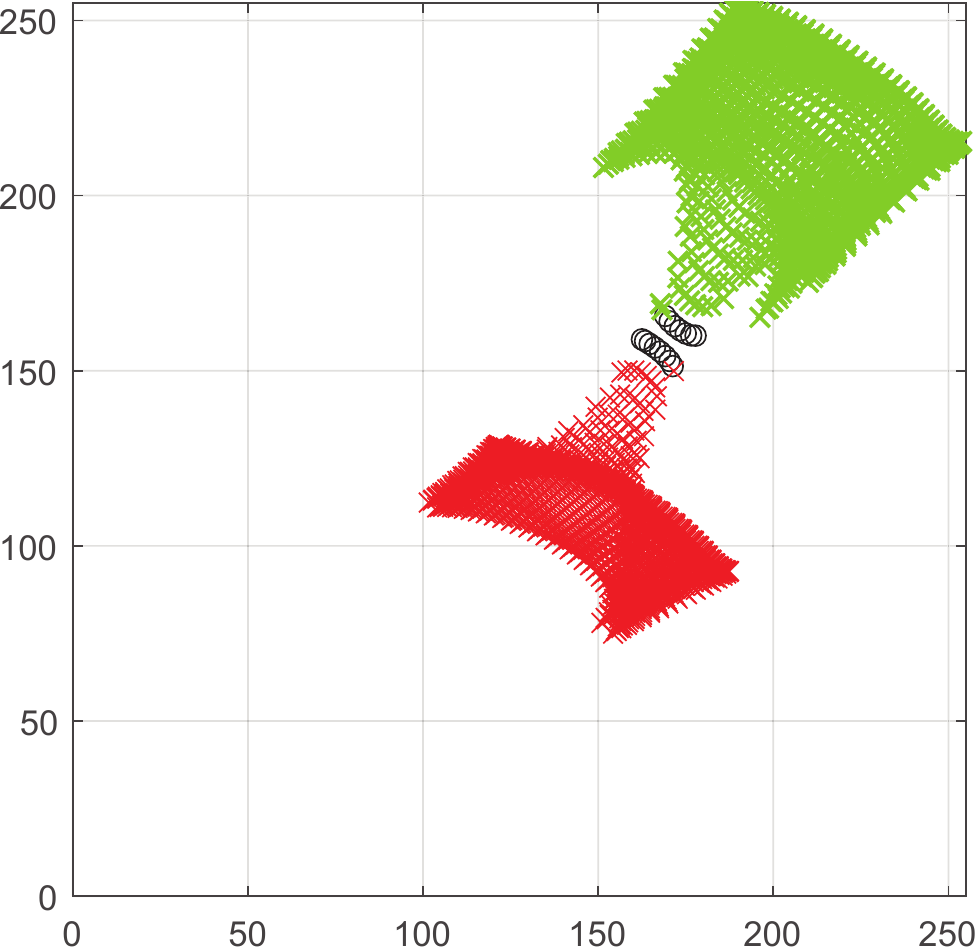}
\\
(c) &    & (c1)  & (c2) & (c3) & (c4)  \\
 \includegraphics[align=c,height= 0.9in]{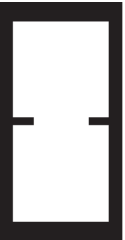}  & 
 & 
 \includegraphics[align=c,height= 0.9in]{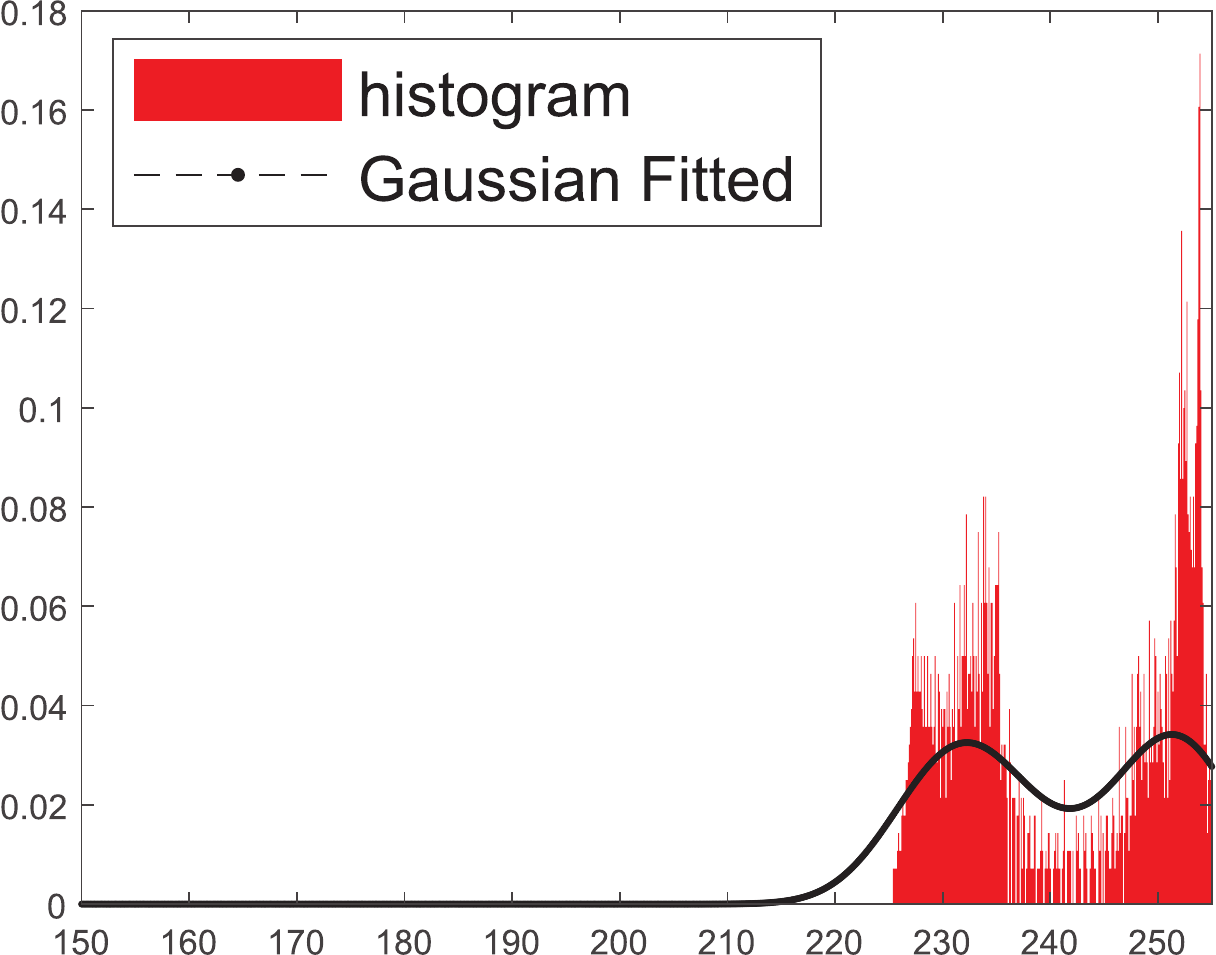} &
\includegraphics[align=c,height= 0.9in]{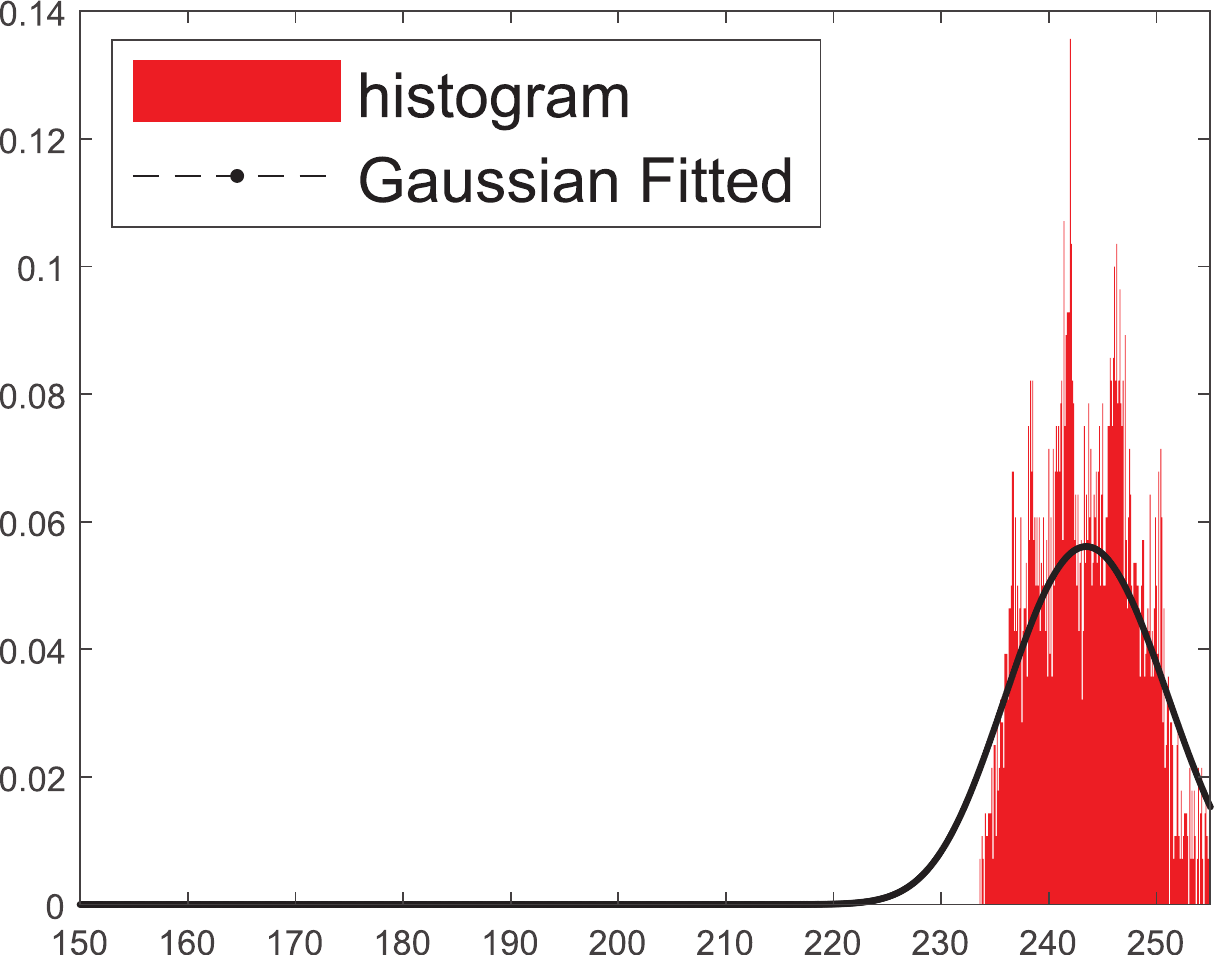}&
 \includegraphics[align = c,height = 0.9in] {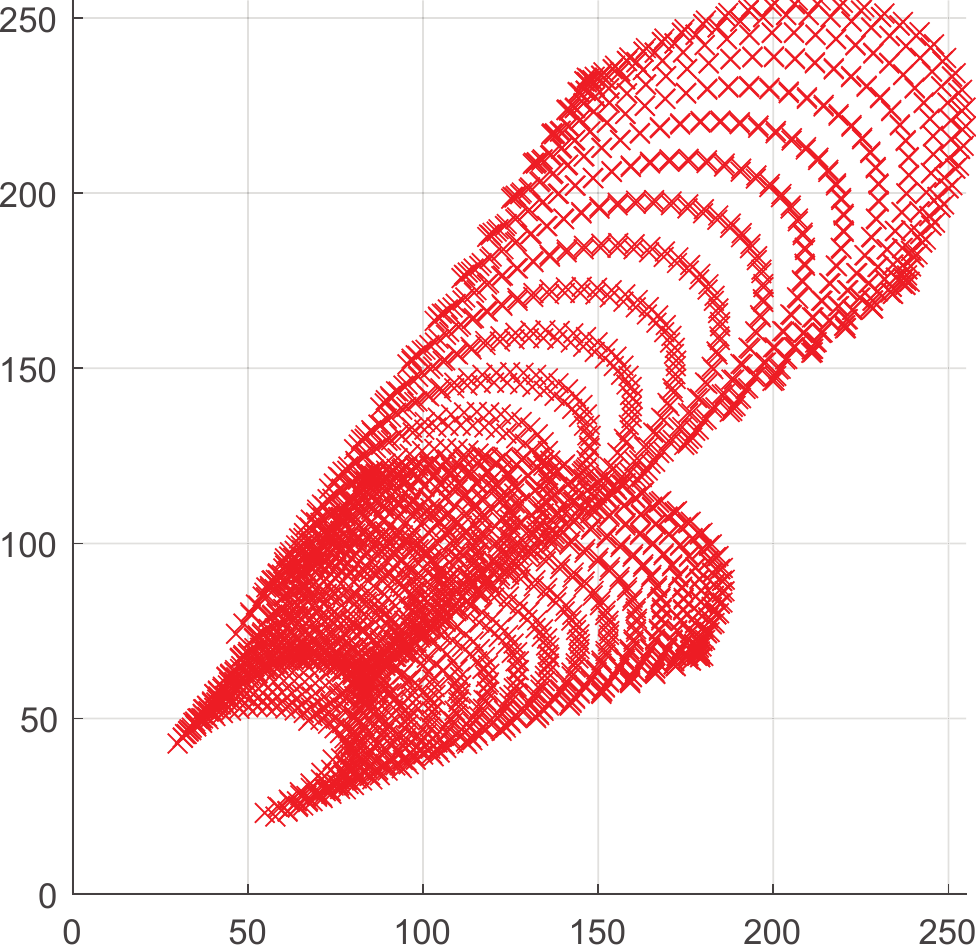} &
 \includegraphics[align = c,height = 0.9in] {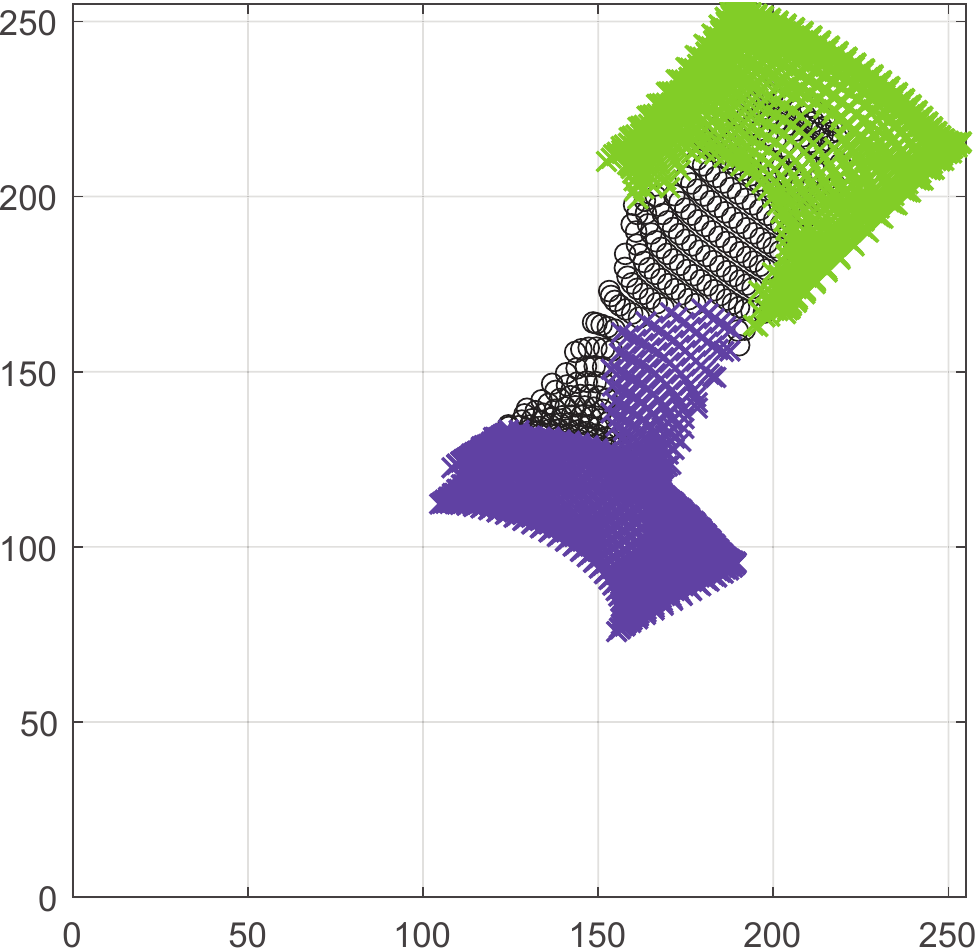}\\
\end{tabular}
\end{center}
\caption{[Open boundary experiments] (a), (b) and (c)  show three synthetic images where two square 
objects are separated with various size of gaps.  An identical seed image $U^1_0$ is  used for CODI-S and the first dimension of CODI-M.  $U^2_0$ is used for second, and two random seed images for third and forth dimensions.   The third and forth columns show CODI-S, and the fifth and sixth columns CODI-M after $40$ and 80 iterations respectively.  When the gaps between objects are wide and thin, it is helpful to have diffusion iteration small for CODI-S and large for CODI-M. }\label{F: Open boundaries MI-40iter}
\end{figure}

\subsection{Further grouping counts of similar sized objects}

\begin{figure}
\centering
\begin{tabular}{cc}
(a) &  \\
\includegraphics[align = c, width = 0.55\textwidth]{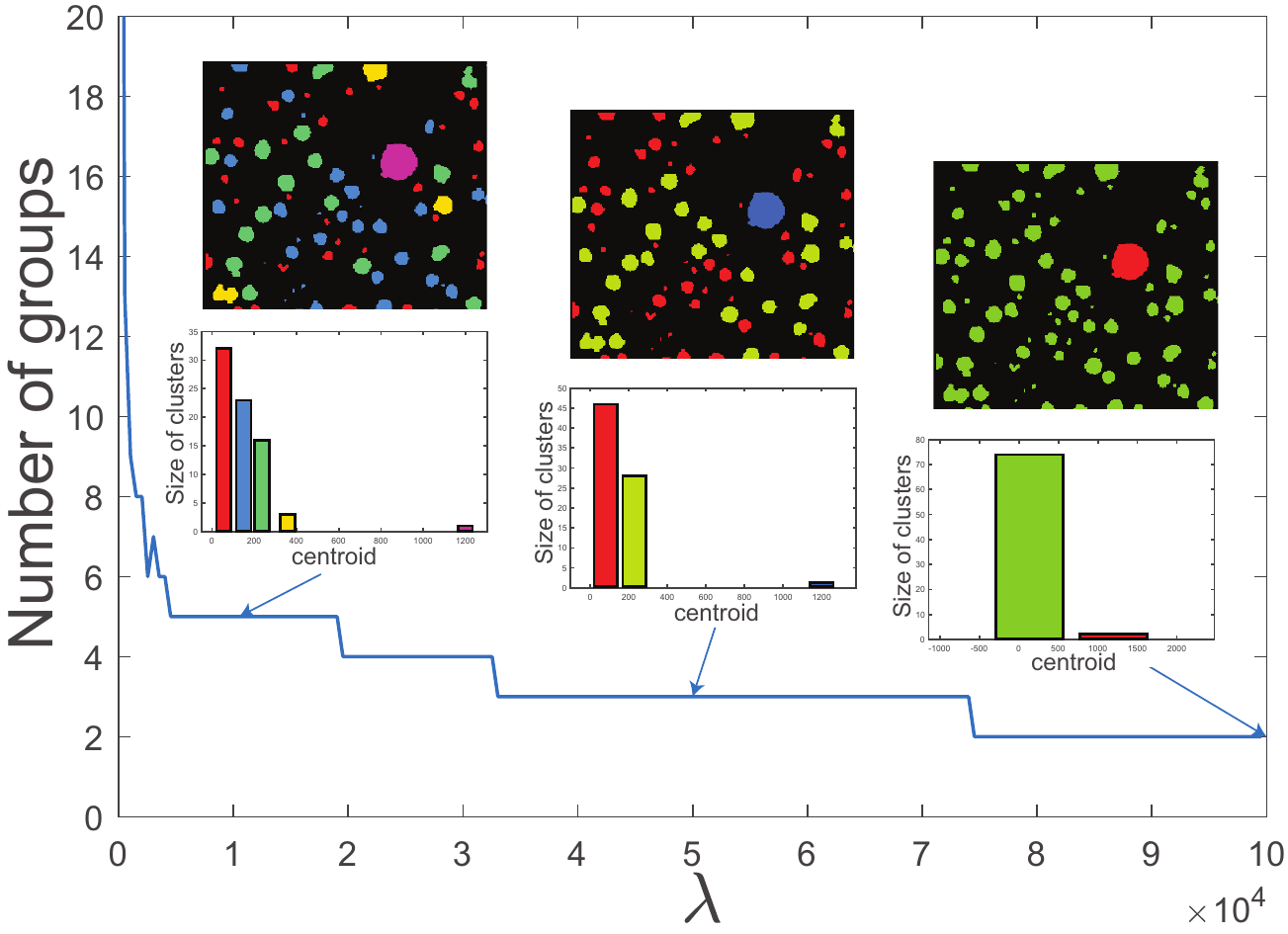} &
\begin{tabular}{cc}
(a1) & (a2) $\lambda= 1 \times 10^4$ \\
\includegraphics[align = c,  width = 1.1in]{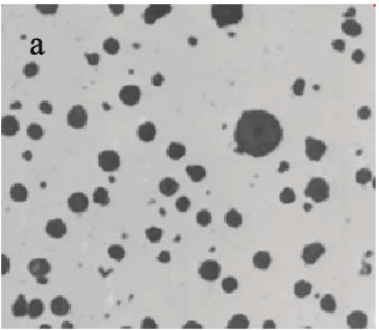} 
&
\small 
 \begin{tabular}{ccc}
 \midrule
$I_i$ &  $c_i$ & $|I_i|$ \\
\midrule
 1  &    56.18    &       33\\
 2  &    156.74   &        23\\
 3  &    242.14   &        14\\
 4  &    357.33    &        3\\
 5  &      1199    &        1\\
\midrule
\end{tabular}\\
& \\
(a3) $\lambda= 5 \times 10^4$ & (a4)  $\lambda= 1 \times 10^5$ \\
\small
 \begin{tabular}{ccc}
 \midrule
$I_i$ & $c_i$ & $|I_i|$ \\
\midrule
 1   &    78.82   &        45\\
2   &    229.25     &      28\\
 3   &     1199.00     &       1\\
\midrule
\end{tabular}
&
\small
 \begin{tabular}{ccc}
 \midrule
$I_i$ & $c_i$ & $|I_i|$ \\
\midrule
1   &    135.9     &      73\\
 2   &     1199     &       1\\
\midrule
\end{tabular}
\end{tabular} \\
(b) &  \\
\includegraphics[align = c, width = 0.55\textwidth]{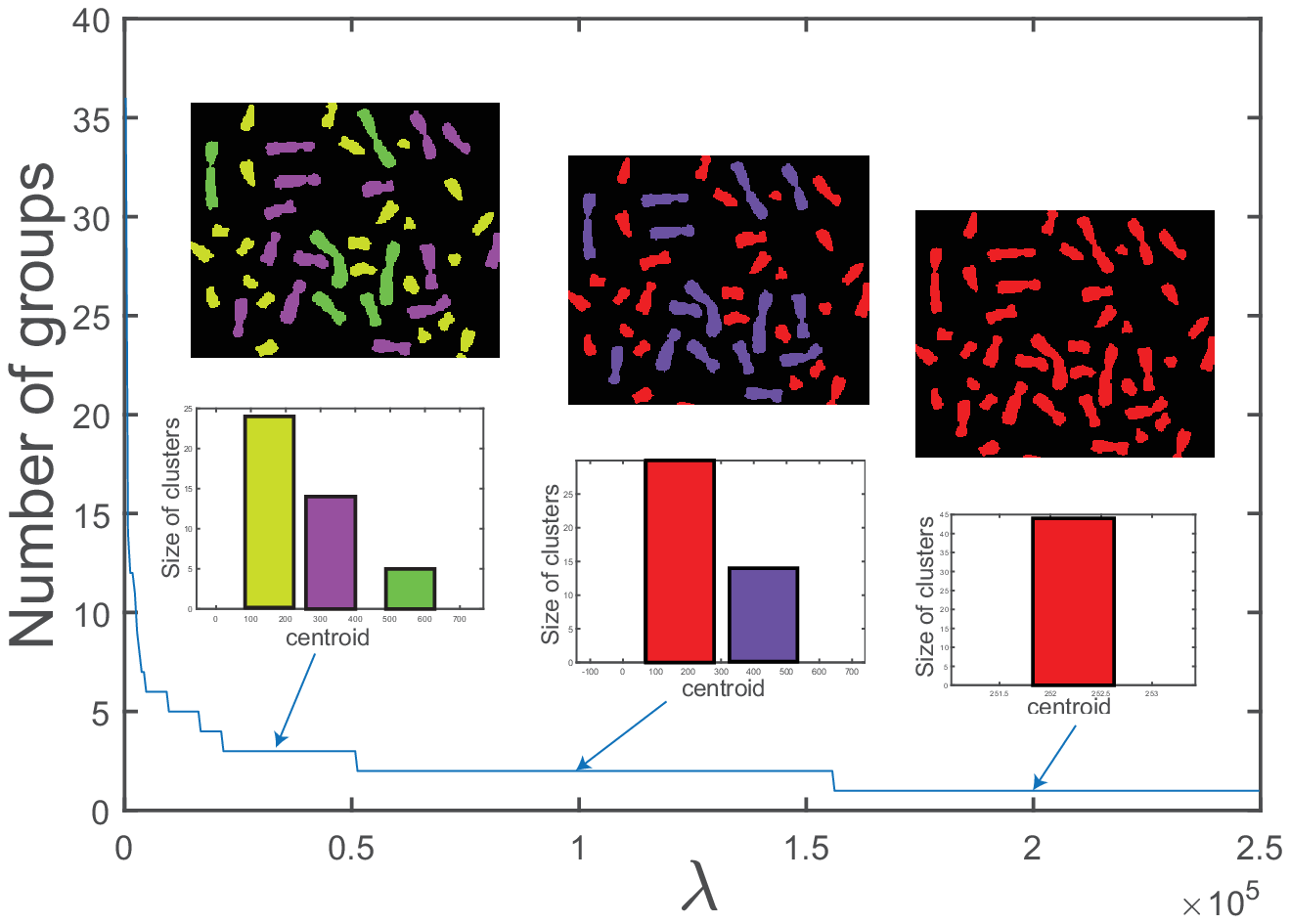} &
\begin{tabular}{cc}
(b1) & (b2) $\lambda= 3 \times 10^4$ \\
\includegraphics[align = c,  width = 1.1in]{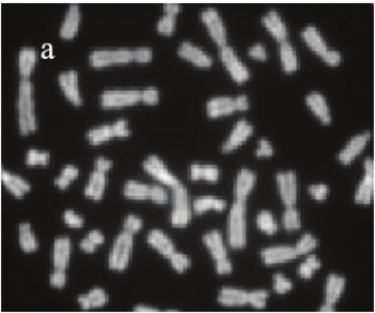}
&\small 
 \begin{tabular}{ccc}
 \midrule
$I_i$ &  $c_i$ & $|I_i|$ \\
\midrule
 1   &   153.17     &      24\\
 2   &   327.71       &    14\\
 3  &    558.00      &     5\\
\midrule
\end{tabular}\\
& \\
(b3) $\lambda= 1 \times 10^5$ & (b4)  $\lambda= 2 \times 10^5$ \\
\small
 \begin{tabular}{ccc}
 \midrule
$I_i$ &  $c_i$ & $|I_i|$ \\
\midrule
1&       173.90     &      29\\
  2&     429.36       &    14\\
\midrule
\end{tabular}
&
 \small
 \begin{tabular}{ccc}
 \midrule
$I_i$ &  $c_i$ & $|I_i|$ \\
\midrule
1&       257.07    &      43\\
\midrule
\end{tabular}
\end{tabular}
\end{tabular}
\caption {[Counting similar size objects] (a1) and (b1) are given images from \cite{guo2013method}, and CODI-M found $K=74$ and $K=43$ cells respectively. 
(a) and (b) are graphs of  $\lambda$ vs. the number of groups.  Three $\lambda$ values for Regularized $k$-mean (\ref{E:regK}) is picked from plateaus $\lambda = 1 \times 10^4,  5 \times 10^4, $$ 1 \times 10^5$ for (a1) and $\lambda = 3 \times 10^4,  5 \times 10^4, $ $ 1 \times 10^5$  for (b1).   Each $\lambda$ shows different grouping depending on the size of objects  from $S$.  (a3) shows grouping to three different sizes, while (a4) shows grouping to two groups: one with one big object and another with all others. (b3)-(b5) also show different grouping possibilities. }
\label{F: two examples in counting cells-cluster} 
\end{figure}

Since the proposed method utilize clustering of $\boldsymbol{H(U)}$, we can further distinguish different sizes after the clusters $C_i$s are found.   The clustering of $\boldsymbol{H(U)}$ gives data $x \in \Omega$ in the form of  
\[ \{(x, U(x), C_i ) :  \; \; U(x) \in C_i, \;\; i = 0,1,\dots, K\},\]
where $C_i$ denotes the $i$-th cluster in the multidimensional histogram domain, and $K$ denotes the counting result.    Now, we consider the size of each clusters $S= \{ |C_i| | i= 1, \dots,K \}$ and use the Regularized $k$-means algorithm  \cite{kang2011regularized} to further cluster this set $S$.  The Regularized $k$-mean energy is given by 
\begin{equation}\label{E:regK}
E[k, \{I_i\}, \{c_i\} |S ] = \lambda \left( \sum_{i=1}^k \frac{1}{n_i} \right) + \sum_{i=1}^k \sum_{|C_j| \in \{I_i\}} ||C_j| - c_i|^2,
\end{equation}
which is minimized for given size of each cluster $|C_i|$.  Here 
$k$ is the number of groups found in the grouping process, $n_i = |I_i|$ is the number of objects that are contained in the group $I_i$, and $c_i =\{\frac{1}{k} \sum_{j=1}^k |C_j| : |C_j| \in I_i \}$  is the average object size in the group $I_i$.  This  $l_i$  represents the group with similar size objects, and this similarity of the sizes are determined by the $\lambda$.   This model  automatically picks a reasonable number of cluster $k$ with a parameter $\lambda$. A large $\lambda$ gives fewer clusters while a small $\lambda$ gives more number of clusters.
 
In Figure \ref{F: two examples in counting cells-cluster}, (a1) is the given image from  \cite{guo2013method} where we used the edge function as 
 $g(t) = \chi_{t < 130}$ and $g(t) = \chi_{t > 125}$, with  $\displaystyle{
    \chi_{t \in \Omega}(t) = \begin{cases}
    1 & t \in \Omega \\
    0 & o.w.
    \end{cases}}$ to threshold the given image.
As a counting result, CODI-M identifies $K= 74$ objects.  From the given image (a1) and its counting result $ \{(x, U(x), C_i ) :  \; \; U(x) \in C_i, \;\; i = 0,1,\dots, K\}$,  (a) shows a graph of  $\lambda$ vs. the number of groups.  Notice that the Regularized $k$-mean (\ref{E:regK}) has large plateaus showing the clustering result (the number of $k$) is not very sensitive against the choice of $\lambda$.   We picked three $\lambda$ values around different plateaus  $\lambda = 1 \times 10^4,  5 \times 10^4, $ $ 1 \times 10^5$ for (a1) and $\lambda = 3 \times 10^4,  5 \times 10^4, $$ 1 \times 10^5$ for (b1).    The colored image shows different size objects identified by different colors, and the histogram of $S$ and tables below show more details.  
In each histogram, each bar denotes a group of different size objects.  The horizontal axis -- centroid size of each group -- is the average size of objects in each group. The height of bars denote the number of objects that belongs to the same group.   In the table (a2)-(a4), $I_i$ shows how many different kinds of sizes are identified, $c_i$ represents the average size in that group, and $|I_i|$ represents how many  of such object exists in each group.  For example in (a2), the table shows there are 5 different size of objects in image (a), with 32 number of the size around 54 objects, 23 of bigger objects of size 150, 16 of bigger ones of size 236, 3 of bigger ones of size 357, and one very big one of size 1199.    As $\lambda$ gets bigger the grouping gets simplified: (a3) separates objects to three, two of smaller sizes (46 of size around 78, and 28 number of size around 230), and one big one of size 1199.  (a4) shows it can be also separated to two different sizes one big one and all other smaller of average size 135. 

The sizes of cells in (b1) are similar size. Table (b2) shows that when using $3 \times \lambda = 10^4$, 3 groups are formed, where the  largest group has 24 objects of average size $327.71$ pixels, and the smallest group contains 5 objects of size 558 pixels.  
As shown in table (b3), as $3\lambda$ increases to as large as $10^5$, 2 groups are formed, where the  larger group has 30 objects with averages size to be 169.57 pixels, which distinguishes the longer cells and shorter cells. 
When $\lambda = 2 \times 10^5$, all objects move  into one single group of average size to be 252.33 as shown in table (b4). 
In conclusion,  a smaller $\lambda$ gives more groups  and the plateaus of $k-\lambda$ curves in Regularized K-means  provide  meaningful justification about the number of groups of objects with respect to the distribution of size of objects in a given image.

\section{Numerical Experiments and Comparisons} \label{sec:experiment}
In this section, we demonstrate the effectiveness and efficiency of the proposed  
methods on various examples.
All experiments are performed on MATLAB using Intel\textregistered Core i5-9600K processor with 3.7GHz 6Core CPU and 16 GB of RAM. 
In all experiments, we fix $\mu= 5 \times 10^{-5}$,
$\theta=1$, and $\eta=0.0001$ in Diffusion Algorithm. 
In some cases, downsampling of original image is used for computational efficiency.  An artificial outline is added  on the boundary of the image domain $\Omega$ to prevent merging of objects near the boundary due to the effect of Fast Fourier transform.
For CODI-S, we use a horizontal seed and for CODI-M we use a 4-dimensional seed involving one vertical, one horizontal, and two random seeds, as shown in Figure \ref{F: seed distribution}.  Due to the two dimensions with random seeds, multiple tests are performed.  

{\bf Cell counting:}  We experiment on cells images in Figure \ref{F: two examples in counting cells-cluster} (a1) and (b1). The counting results are illustrated in Figure \ref{F: two examples in counting cells-cluster-codis}. (a5) and (b5) show the results from \cite{guo2013method}. (a6)-(a7) and (b6)-(b7) are results by CODI-S and CODI-M respectively.  The method in \cite{guo2013method} counted 74 cells in (a5) and 43 cells in (b5).  The CODI-S  count 70 cells in (a6) and 45 in (b6). The CODI-M  count 73 cells in (a7) and 43 cells in (b7).  For CODI-M, experiments are performed 20 times, and the counting results varies between $[72,74]$ for (a1) where 74 cells are found in 18 out of 20 trials.  For (b1), the counting result locates between 42 and 46 among 16 out of 20 trials.  The average cpu time is 0.82 second and 0.77 second for (a1) and (b1) respectively.  This shows that the CODI-M and CODI-S are both comparable to \cite{guo2013method}, and geometry-independent, and able to count cells of various sizes and shapes very efficiently.
\begin{figure}
 \begin{center}
\begin{tabular}{cccc} 
 (a1) & (a5) & (a6) &(a7)\\
\includegraphics[align = c,  width = .18\textwidth]{figures/ct_by_size_1_original_image}  &
 \includegraphics[align = c,  height = 0.16\textwidth]{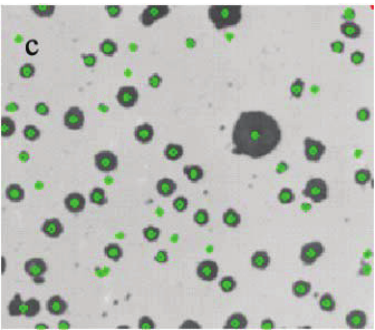} &
 \includegraphics[align = c, height = 0.17\textwidth]{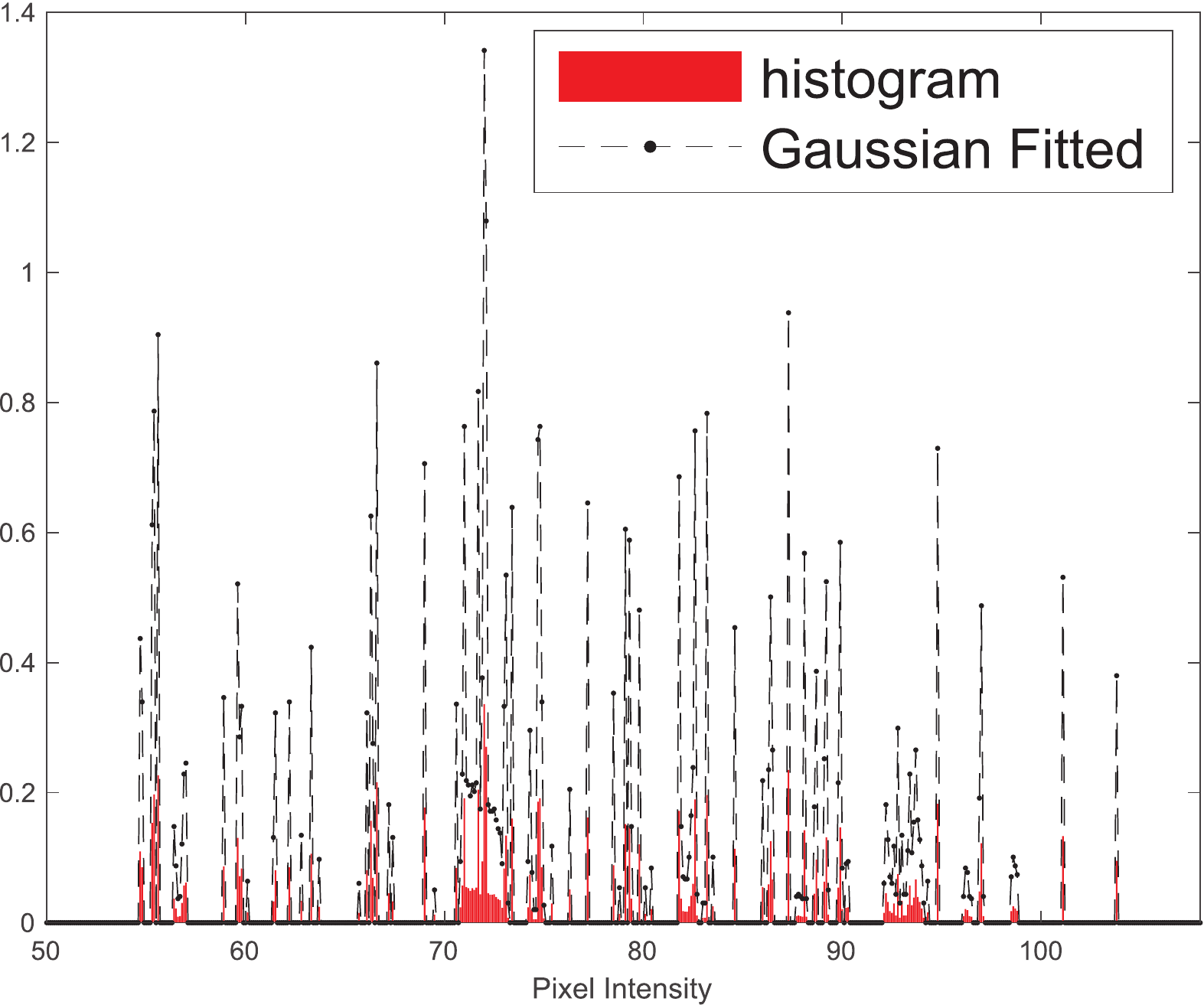}
& \includegraphics[align = c, width = 0.18\textwidth]{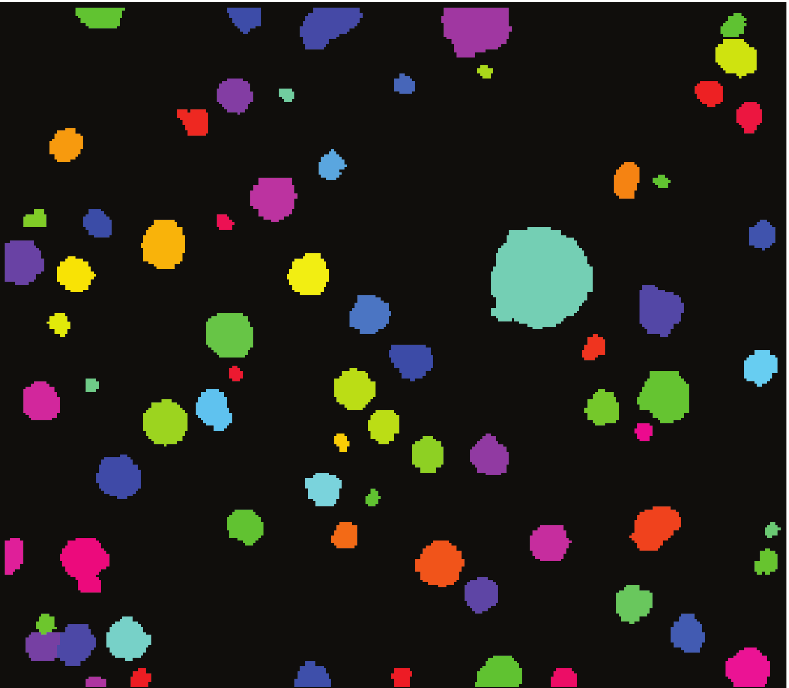} \\ 
 (b1) & (b5) & (b6) &(b7)\\
 \includegraphics[align = c,  width = .18\textwidth]{figures/ct_by_size_2_original_image}  & 
 \includegraphics[align = c,  height = 0.16\textwidth]{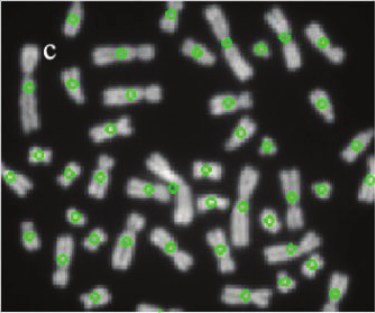} &
 \includegraphics[align = c, height = 0.17\textwidth]{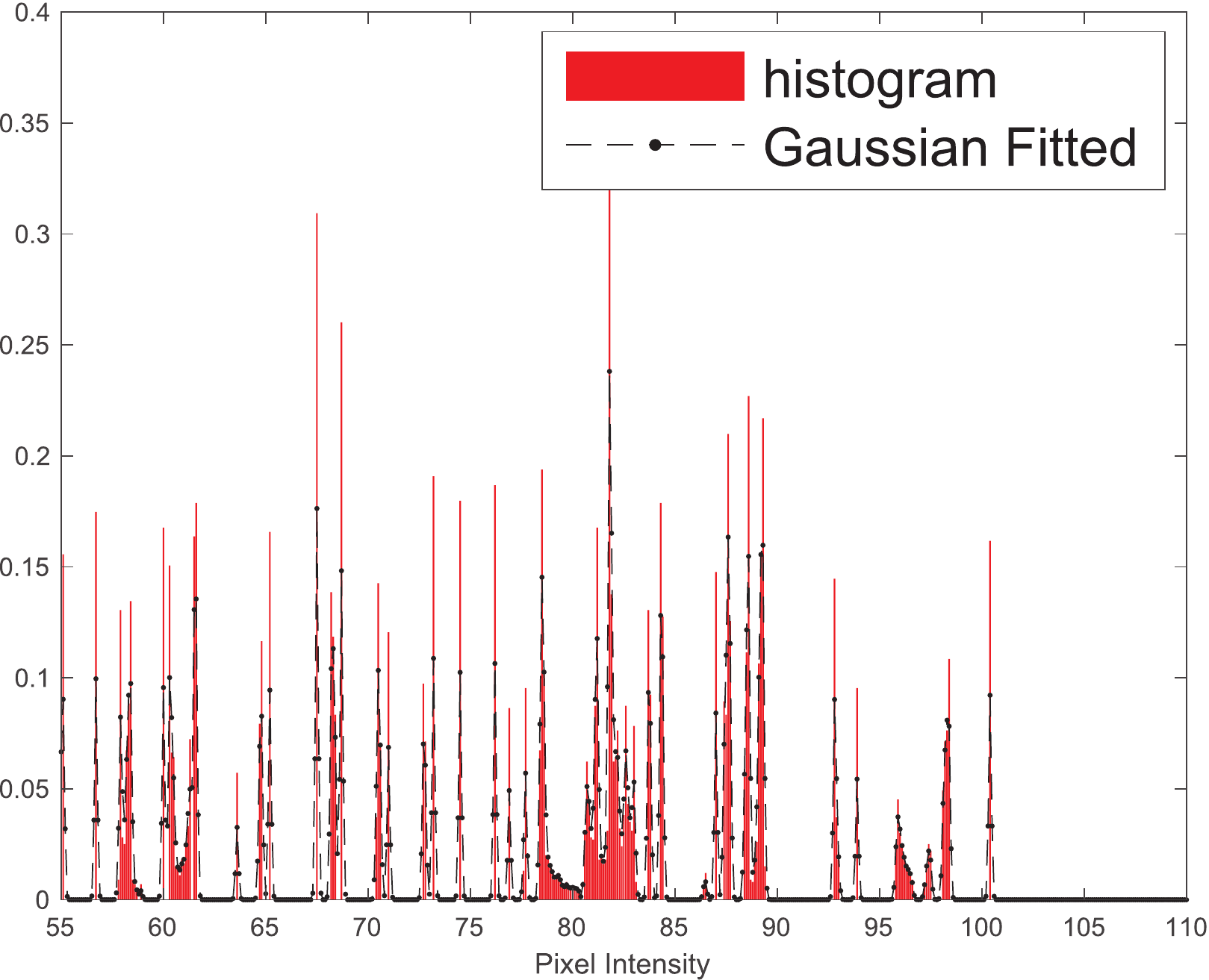}
& \includegraphics[align = c, width = 0.18\textwidth]{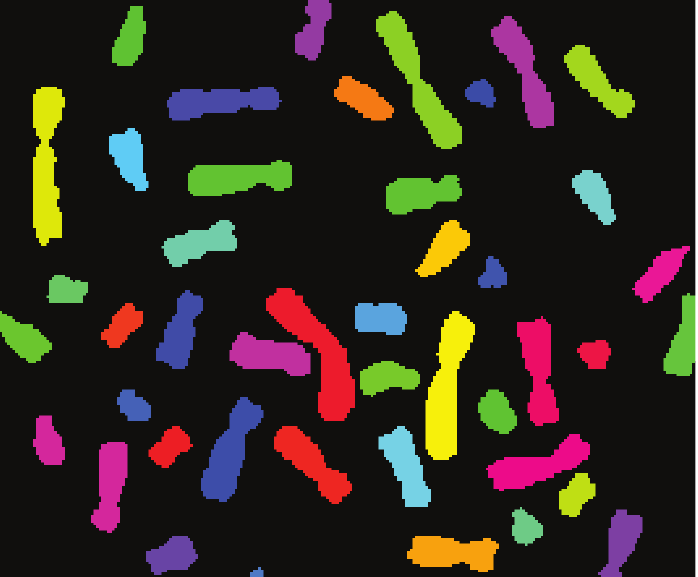} \\  & \\
\end{tabular}
\begin{tabular}{cccc} 
\toprule
Given image& CODI-S & CODI-M     & Existing result    \\
\midrule
(a1) & 70 (1.68s) & 74  (0.82s) & 74 from \cite{guo2013method}\\
(b1)    & 45 (1.87s)& 43 (0.77s) & 43 from \cite{guo2013method}\\
\bottomrule
 \end{tabular}
\end{center}
\caption {[Cell counting] (a1) and (b1) are two cells images in Figure \ref{F: two examples in counting cells-cluster} from \cite{guo2013method}.  (a5) and (b5) are results from \cite{guo2013method}. (a6) and (b6) are results of CODI-S.  (a7) and (b7) are results of CODI-M.   CODI-M experiments are performed 20 times, with the counting results between $[72,74]$ for (a1) where 74 cells are found in 18 out of 20 trials.  For (b1), the counting between 42 and 46 among 16 out of 20 trials. The average cpu time is 0.82 second and 0.77 second for (a1) and (b1) respectively. CODI is geometry-independent, and able to count cells of various sizes and shapes, very efficiently.  }
\label{F: two examples in counting cells-cluster-codis} 
\end{figure}

{\bf Counting Hela Cells:} In Figure \ref{F: examples in hela}, we present  three cell images from the Hela Cells Data set introduced in \cite{arteta2012learning}. These images have a low percentage of overlapping cells where cells are separated by the bright edge boundaries.
The  results obtained by CODI-M  and CODI-S  methods are compared to Class Agnostic method 
\cite{lu2018class} and  Singletons \cite{arteta2016detecting} methods.  In \cite{arteta2016detecting}, a tree-structured discrete graphical model is used to classify non-overlapping regions by optimizing of a classification score. The detection is learned within the structured output SVM framework through dynamic programming on a tree structured region graph. In \cite{lu2018class}, the problem is formulated as a matching problem and the image self similarity property is used. Then a Generic Matching Network is trained using a few labeled examples. Figure \ref{F: examples in hela} shows that both
CODI-S  and CODI-M  methods are comparable to \cite{arteta2016detecting} and  \cite{lu2018class} without any need of learning/training.

\begin{figure}
\begin{center}
\begin{tabular}{cccc} 
 (a) & (a1) \cite{lu2018class} & (a2) CODI-S  & (a3) CODI-M    \\
\includegraphics[align = c,height =1.1 in]{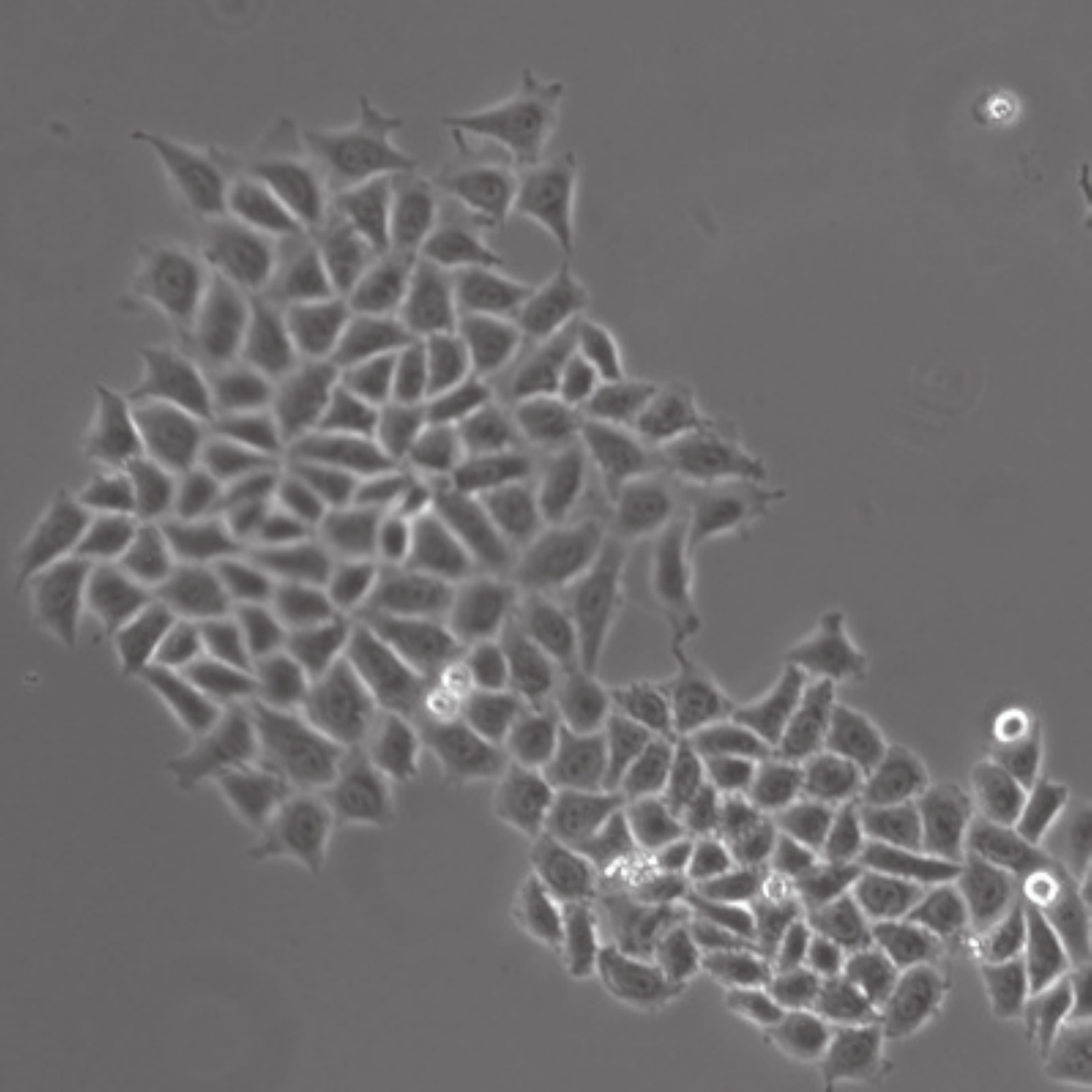}&
\includegraphics[align = c,height = 1.1in]{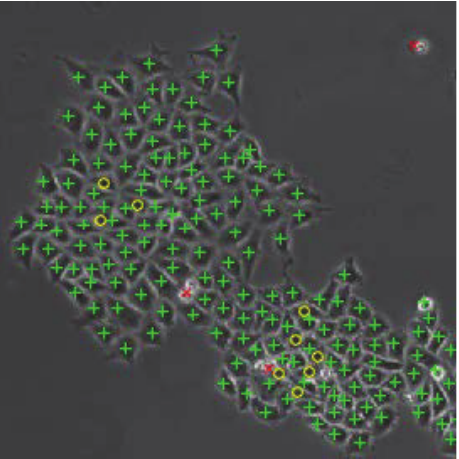} &
\includegraphics[align = c,height = 1.2in]{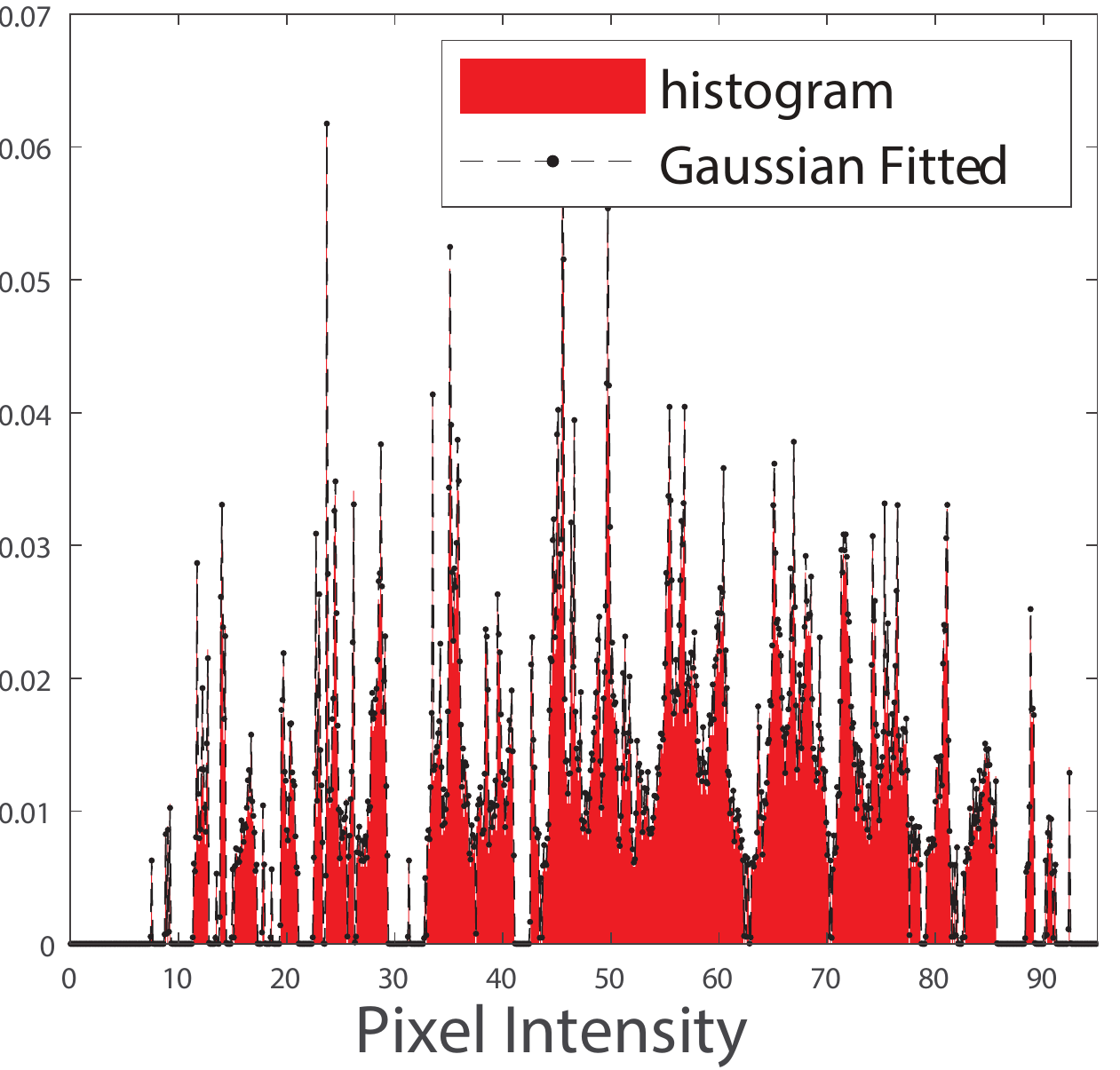}  &
\includegraphics[align = c,height = 1.1in]{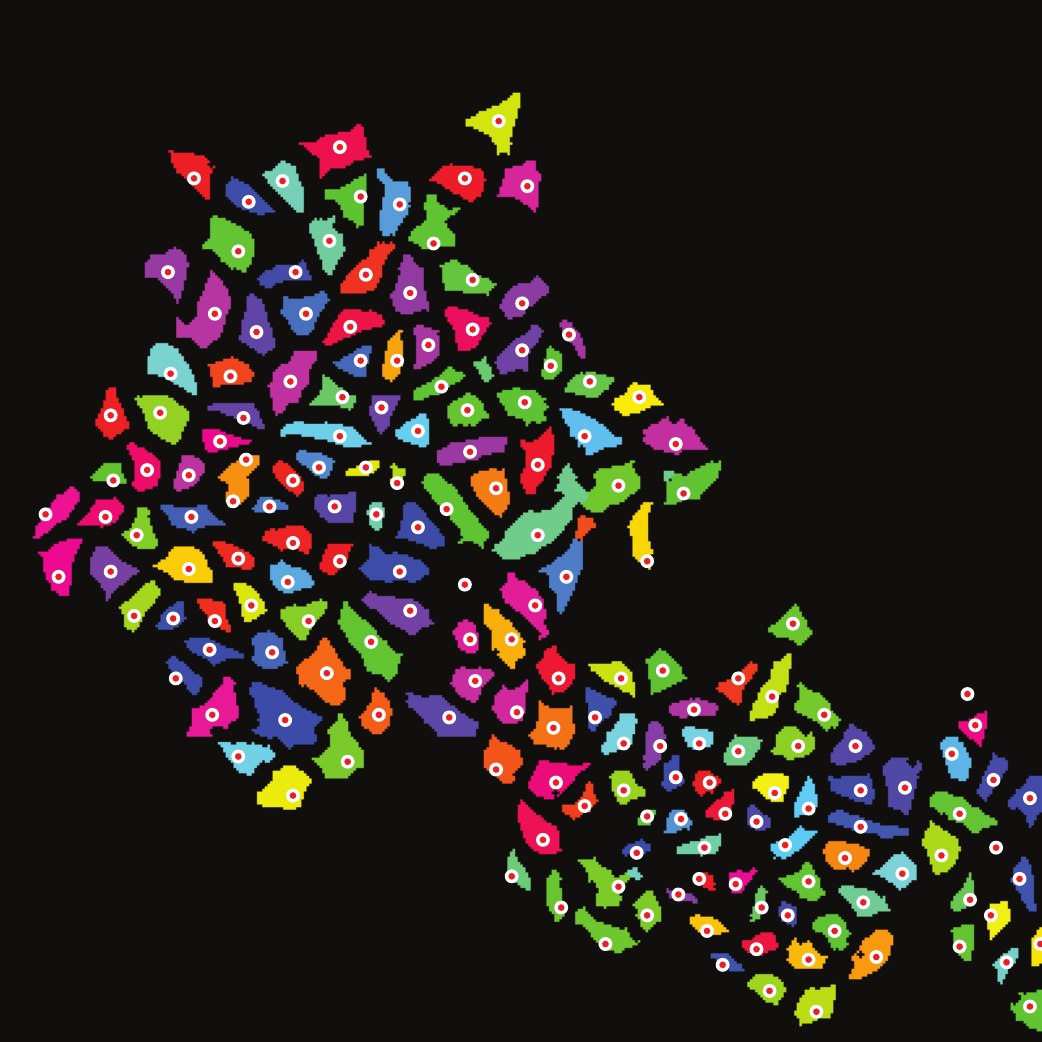} \\
  (b) & (b1)  \cite{lu2018class} & (b2) CODI-S  & (b3) CODI-M  \\
\includegraphics[align = c,height = 1.1 in]{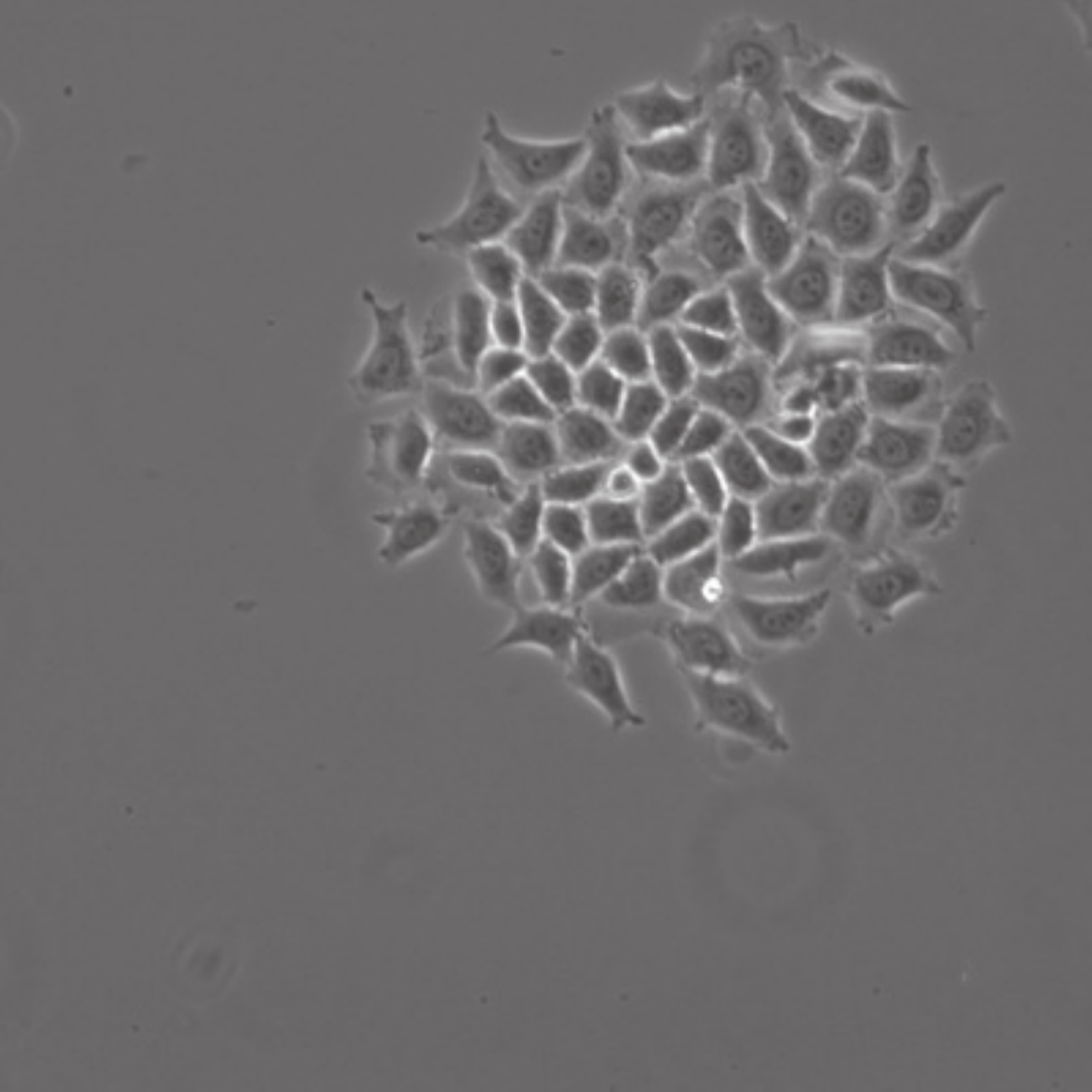}&
\includegraphics[align = c,height = 1.1in]{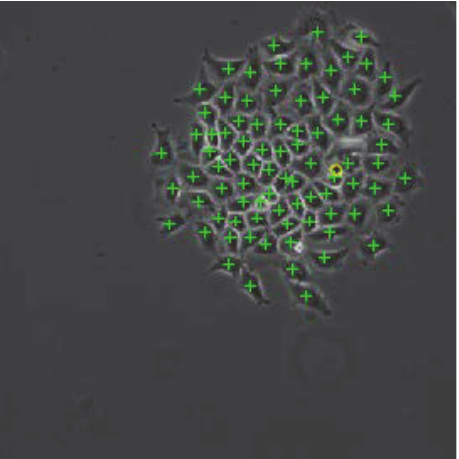} &
\includegraphics[align = c,height = 1.2in]{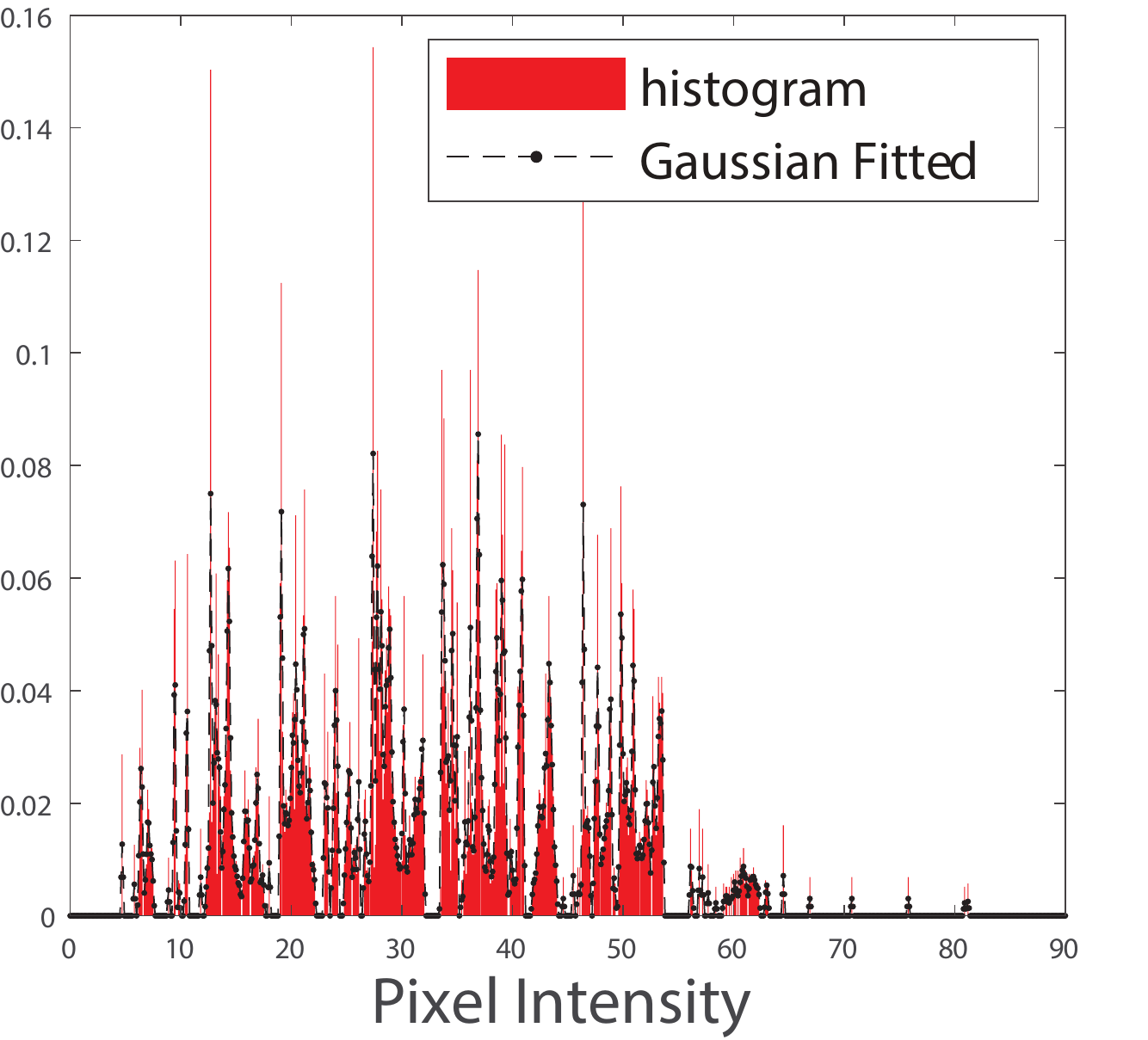}  &
 \includegraphics[align = c,height = 1.1in]{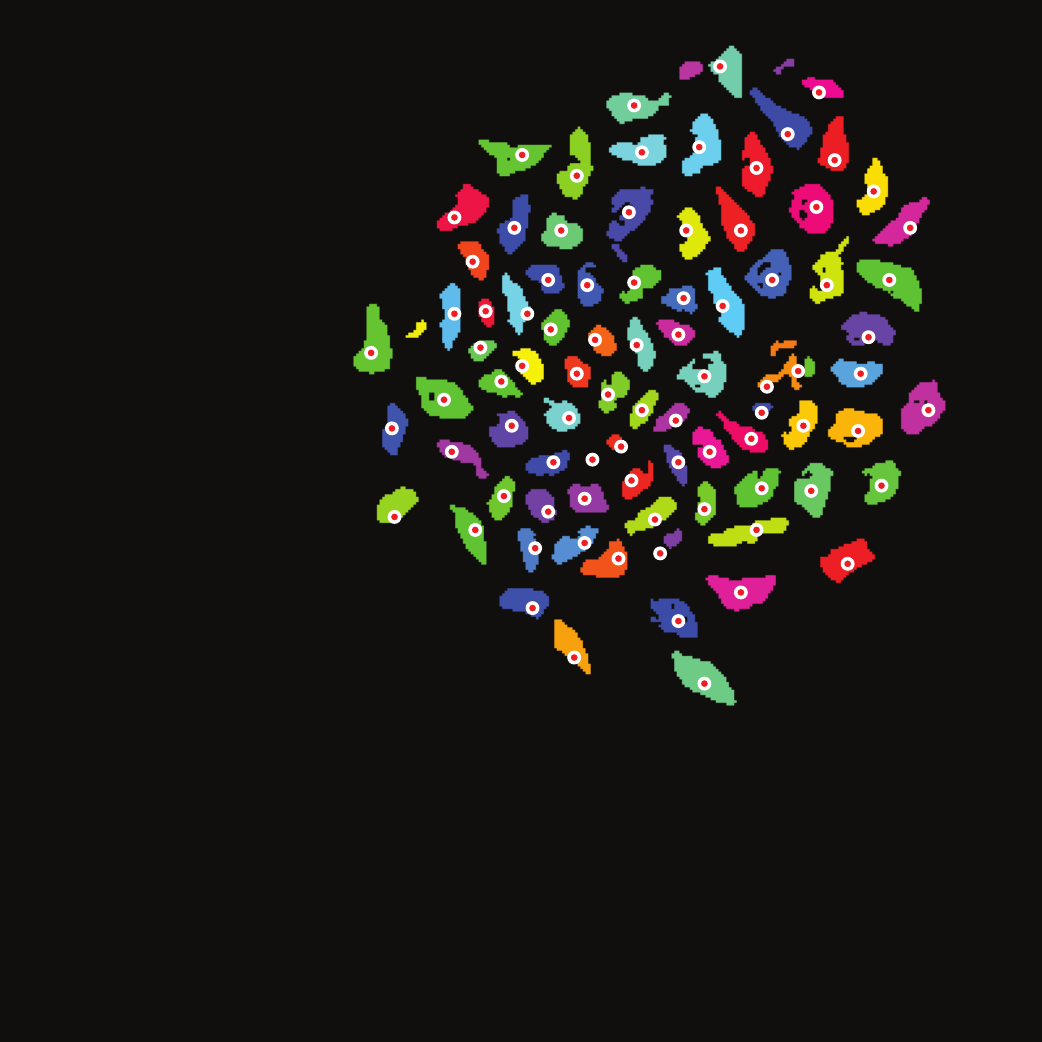}\\
  (c) &  (c1)  \cite{arteta2016detecting}  & (c2) CODI-S & (c3) CODI-M  \\
\includegraphics[align = c,height =1.1in]{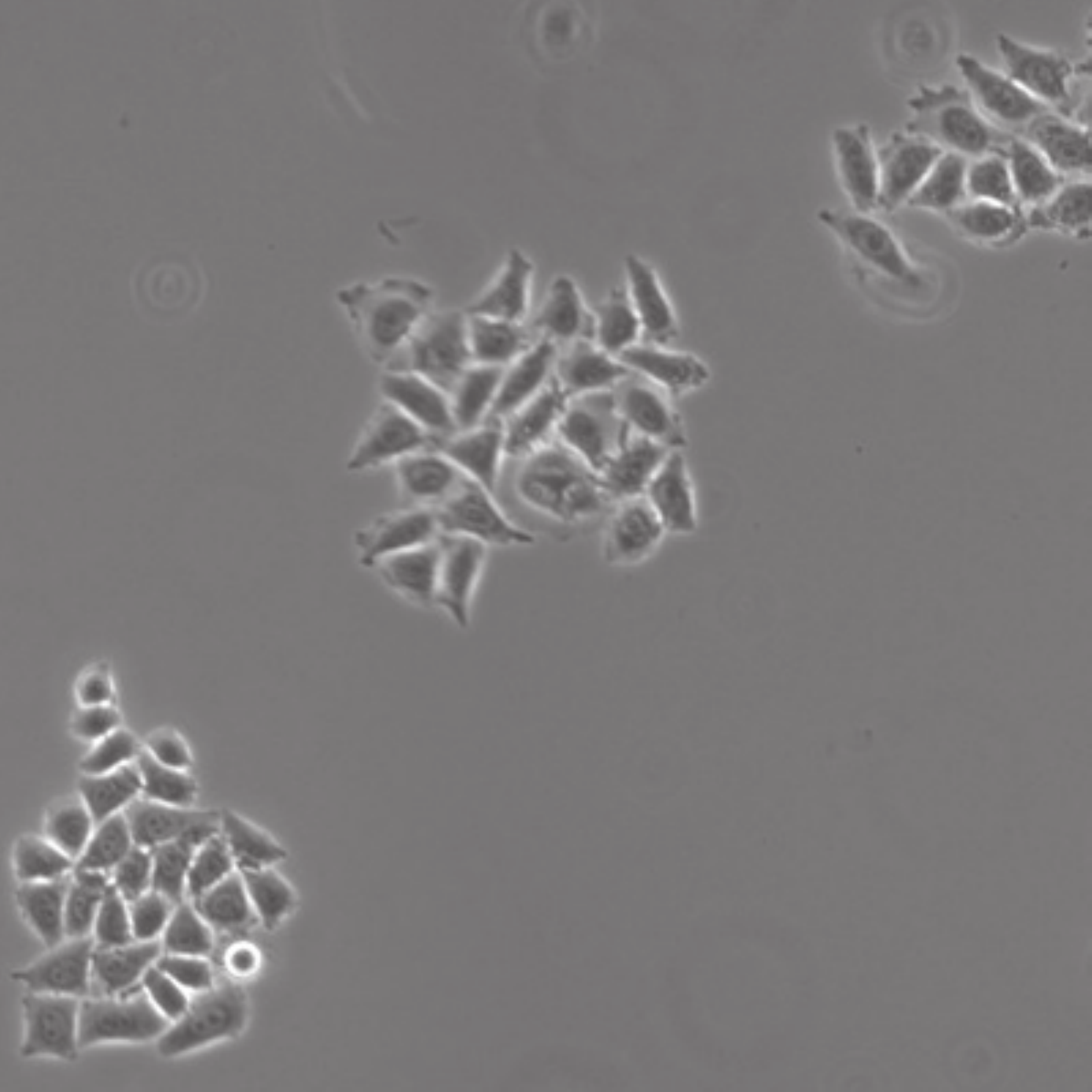}&
 \includegraphics[align = c,height = 1.1in]{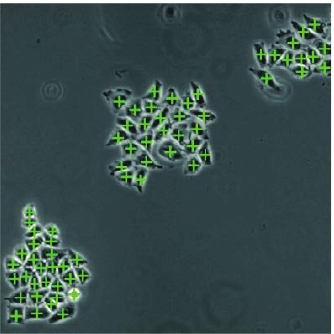}&
 \includegraphics[align = c,height = 1.2in]{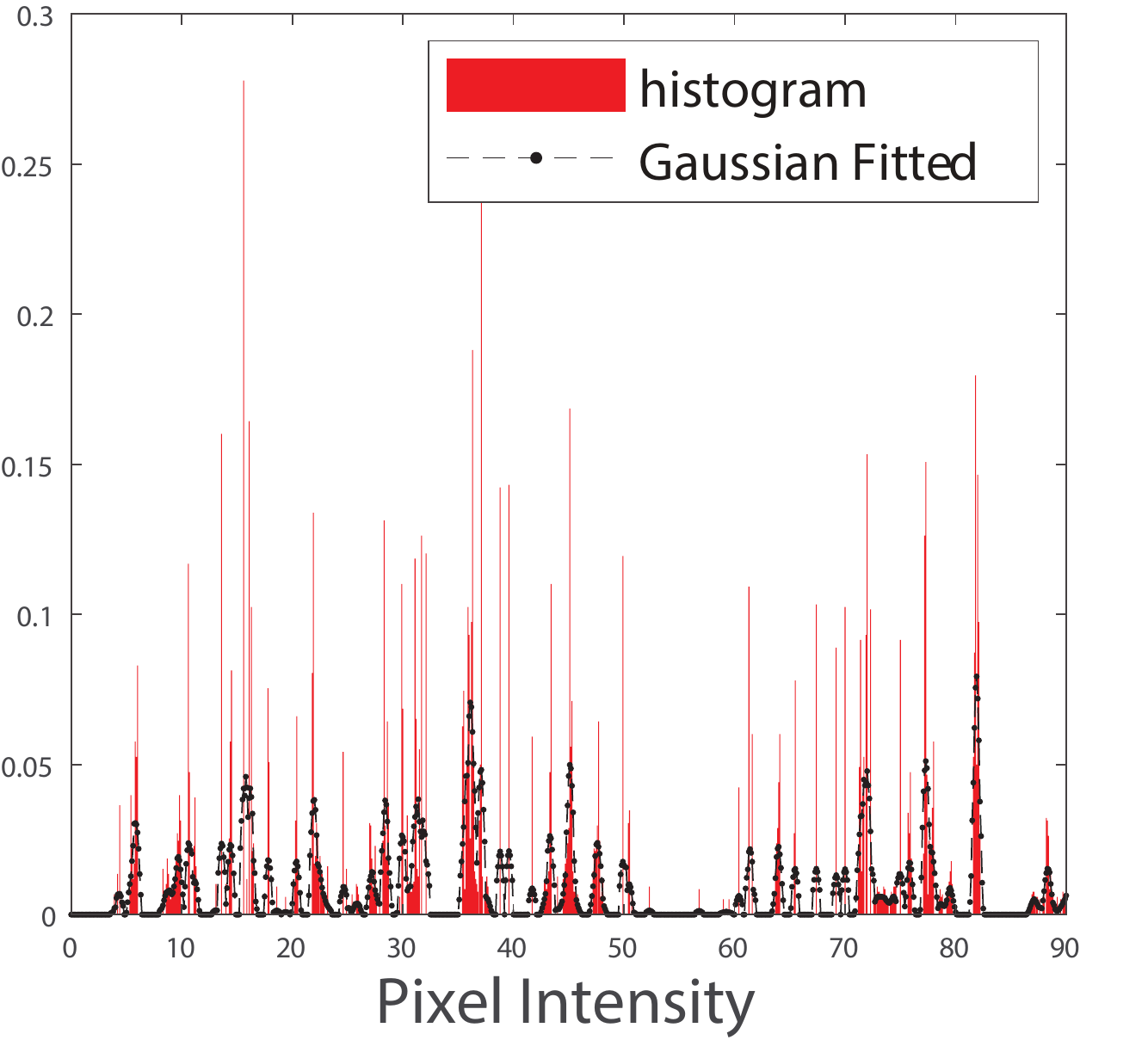}&
  \includegraphics[align = c,height = 1.1in]{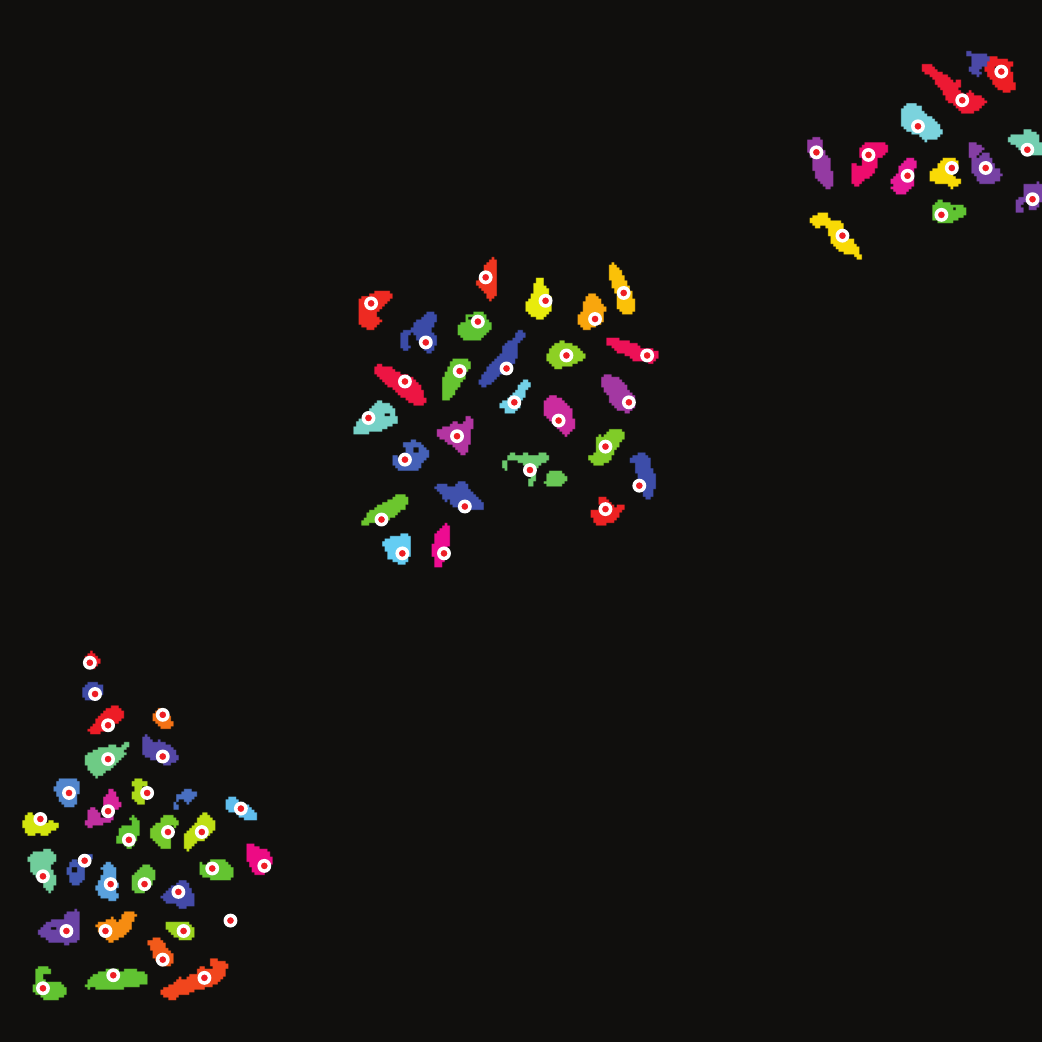}\\ & & & \\
\end{tabular}
\begin{tabular}{cccrrr} 
\toprule
Hela Cells & Size & Ground truth     & CODI-S   & CODI-M  & Others \\
\midrule
 (a) & $400\times 400$ & 177 &  177 (0.95s) & 175 (5.34s)  &171 \cite{lu2018class}  \\
(b) & $400\times 400$  & 85 &  85 (2.06s)  &  88 (3.07s) & 84  \cite{lu2018class} \\
(c)& $400\times 400$ & 67  & 65 (1.95s)  &68 (1.39s) & 67  \cite{arteta2016detecting} \\ 
      \bottomrule
    \end{tabular}
    \end{center}
\caption{[Hela cell counting] Hela cell images from \cite{arteta2012learning}.  CODI-S and CODI-M give comparable counting results to \cite{lu2018class} and \cite{arteta2016detecting}.   In the parenthesis, we show the CPU times for each computation.  CODI-M experiments are performed 5 times, and the best results are presented here, while all comparisons are given in Table \ref{F: statistical results on Hela data set}.}
\label{F: examples in hela}
\end{figure}

The statistical counting results on the whole Hela Data set, containing 11 test images, are given in Table \ref{F: statistical results on Hela data set}.  The comparisons are made with 
Singletons \cite{arteta2016detecting}, 
Full system w/o surface \cite{arteta2016detecting}, 
and Class Agnostic method \cite{lu2018class},
which their data are taken from \cite{lu2018class}. 
All these methods require training and learning procedure. 
The CODI-S  and CODI-M  do not require any training thus to obtain some statistics,
we exploit CODI-S  once and CODI-M  five times on each image in the training data set,
containing 11 images. For $g$, we implemented a threshold with $\chi_{t \leq 100}$, contrast enhancement \cite{zuiderveld1994contrast}, a threshold with $\chi_{t \leq 70}$, and a dilation step  with  structuring element parameters to be (disk,1,4).  We let $0.1\le \sigma\le 2.7$ and $1\le r \le 10$ for CODI-S  and 
$\epsilon =1.5$ and ${\rm MinPts}=20$ for CODI-M  method.

For numerical comparison measures, we use Mean Average Error (MAE) = $\frac{1}{n}\sum_{i=1}^n |y-y^*|$ for the number of objects.
Here $y^*$ represents the ground truth counting number,
$y$ is the computed number, and $n$ is the number of images in the test set.  
Note that a lower MAE is preferable.
In Table \ref{F: statistical results on Hela data set}, we observe that CODI is comparable to the exsiting method, but without any training. CODI-M is also able to track the objects location in the image. 

\begin{table}
\begin{center}
  \begin{tabular}{lccccc}
\multicolumn{6}{c}{Comparison results on Hela Cell Dataset}\\
\midrule
Methods 
 & \multicolumn{1}{c}{Singletons \cite{arteta2016detecting}} 
 & \multicolumn{1}{c}{Full system w/o surface \cite{arteta2016detecting}}
 & \multicolumn{1}{c}{Class Agnostic \cite{lu2018class}} 
  & \multicolumn{1}{c}{CODI-S} 
 & \multicolumn{1}{c}{CODI-M} \\
 \midrule
MAE  & $2.35 \pm 0.67 $ &  $3.84\pm 1.44$ &  $3.53 \pm 0.18 $  & 2.36 
 & $3.32 \pm 0.28$ \\
\bottomrule
\end{tabular} 
\end{center}
\caption{[Hela cell counting] 
Comparison results on 11 Hela images.  
We let $0.1\le \sigma\le 2.7$ and $1\le r \le 10$ in CODI-S,
and $\epsilon =1.5$ and ${\rm MinPts}=20$ in CODI-M.
CODI is comparable to the existing methods without any training process.  
CODI-M experiments are performed 5 times, and the mean and standard deviation of MAE are presented in the table.}
\label{F: statistical results on Hela data set}
\end{table}

{\bf Counting seamless leaf patterns: }
We consider a  seamless leaf image with lace veins patterns in
Figure \ref{F: veins} (a). The manual counting give the number between $[70,72]$.  
For $g$, we use the edge detecting function $\bar{g}$ in (\ref{e:g}) where $\bar{g} > 0.7$  is considered as 1 as the binary output.  In this example,  CODI-S  and CODI-M  find 72 and 70 objects respectively.

\begin{figure}
  \begin{center}
  \begin{tabular}{cccc} 
   (a)  Given image  & (b) Edge function &
 (c) CODI-S & (d) CODI-M\\
\includegraphics[align = c,height = 1in]{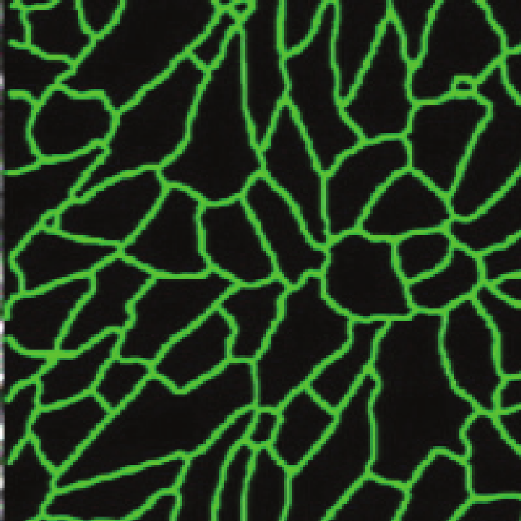}  &
\includegraphics[align = c,height = 1in]{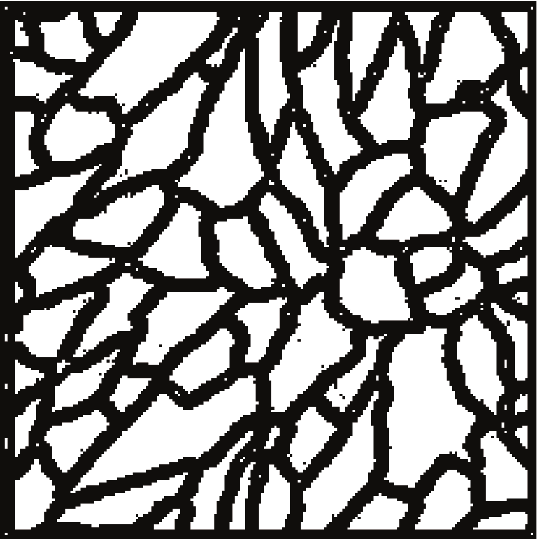} &
\includegraphics[align = c,height = 1.1in]{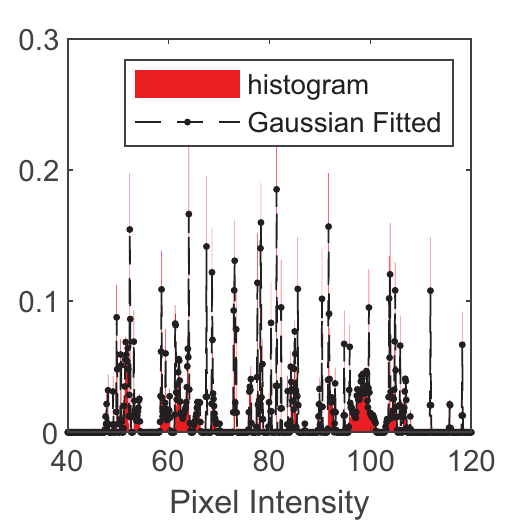}  &
 \includegraphics[align = c,height = 1in]{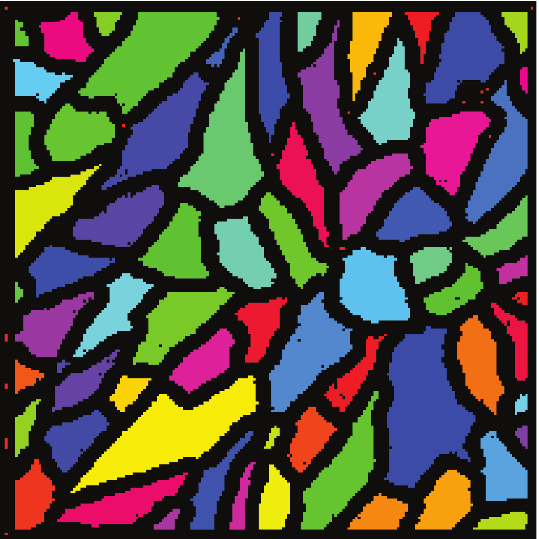} \\ & & & \\
\end{tabular}
 \begin{tabular}{ccccc} 
\toprule
Given image & Manual Count  & CODI-S  & CODI-M  \\
\midrule
(a)  & [70,72] & 72 (2.16s) & 70 (2.81s)  \\
      \bottomrule
    \end{tabular}
\end{center}
\caption {[Counting seamless leaf patterns ]
(a) Given image of   where manual counting is given between 70 and 72.
(b) The edge function $\tilde{g}$.
(c) CODI-S  counts 72 leafs.
For 20 CODI-M experiments, the counts varies between [68,70] and 13 out of 20 trials results in count 70.  The subtle uncertainty comes from the the small objects in the original image. The average cpu time is 2.81 second. Figure (d) shows one representative result of CODI-M. 
}
\label{F: veins} 
\end{figure}


{\bf Arabidopsis plant leafs: }
We consider an image of Arabidopsis plant from  MSU-PID dataset \cite{cruz2016multi} shown in 
Figure \ref{F: leaf} (a).  In the ground truth image in (b) shows 10 leafs. 
We compare our methods with \cite{ayalew2020unsupervised}, a Domain-Adversarial Neural Network (DANN) where the counting is done by the density map estimation shown in Figure \ref{F: leaf}(c). For $g$, we used a threshold with $\chi_{t > 137}$. To further separate the edges between the overlapping leaves, an edge detecting function $\bar{g}$ in (\ref{e:g}) where $\bar{g} > 0.8$ is considered as 1 as the binary output. It finds 8 leafs, while CODI-S  and CODI-M  find 9 and 10 leafs, respectively. 

\begin{figure}
  \begin{center}
  \begin{tabular}{ccccc} 
   (a)  Given image  & (b)  Ground truth &
 (c) \cite{ayalew2020unsupervised} & (d) CODI-S & (e) CODI-M\\
\includegraphics[height = 1in]{figures/leaf_original_image}  &
\includegraphics[height = 1in]{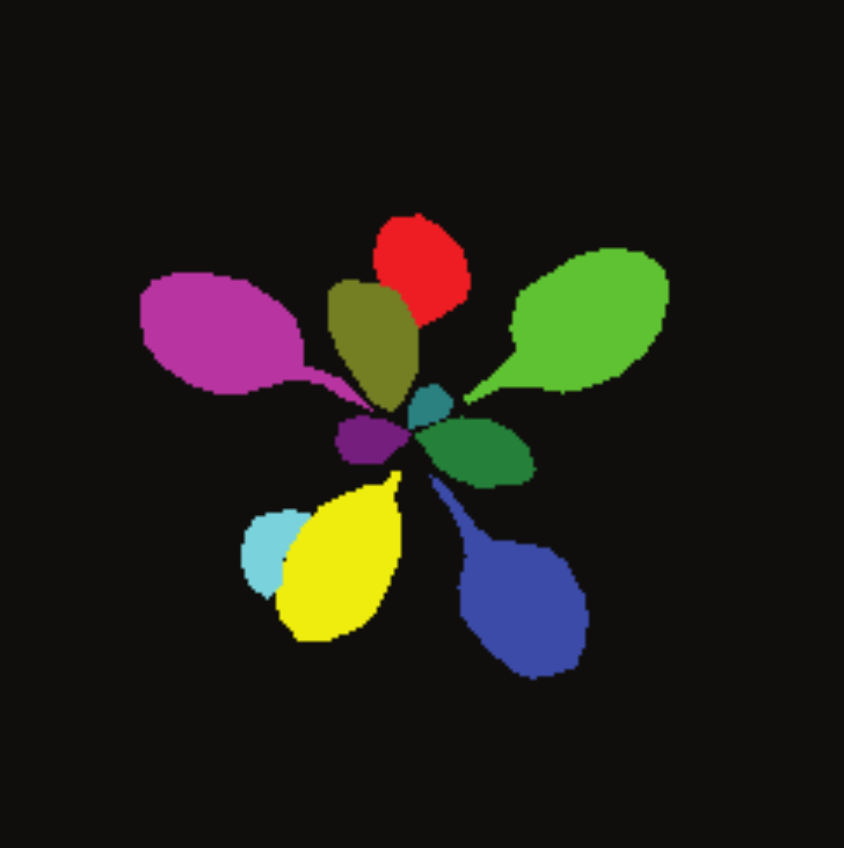} &
\includegraphics[height = 1in]{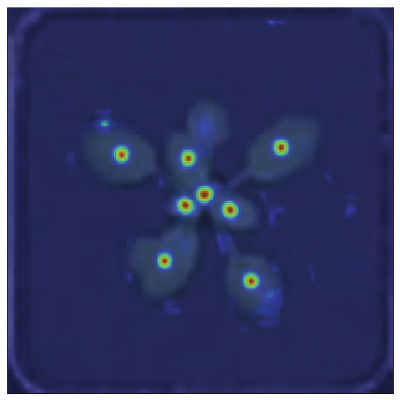}  &
 \includegraphics[height = 1in]{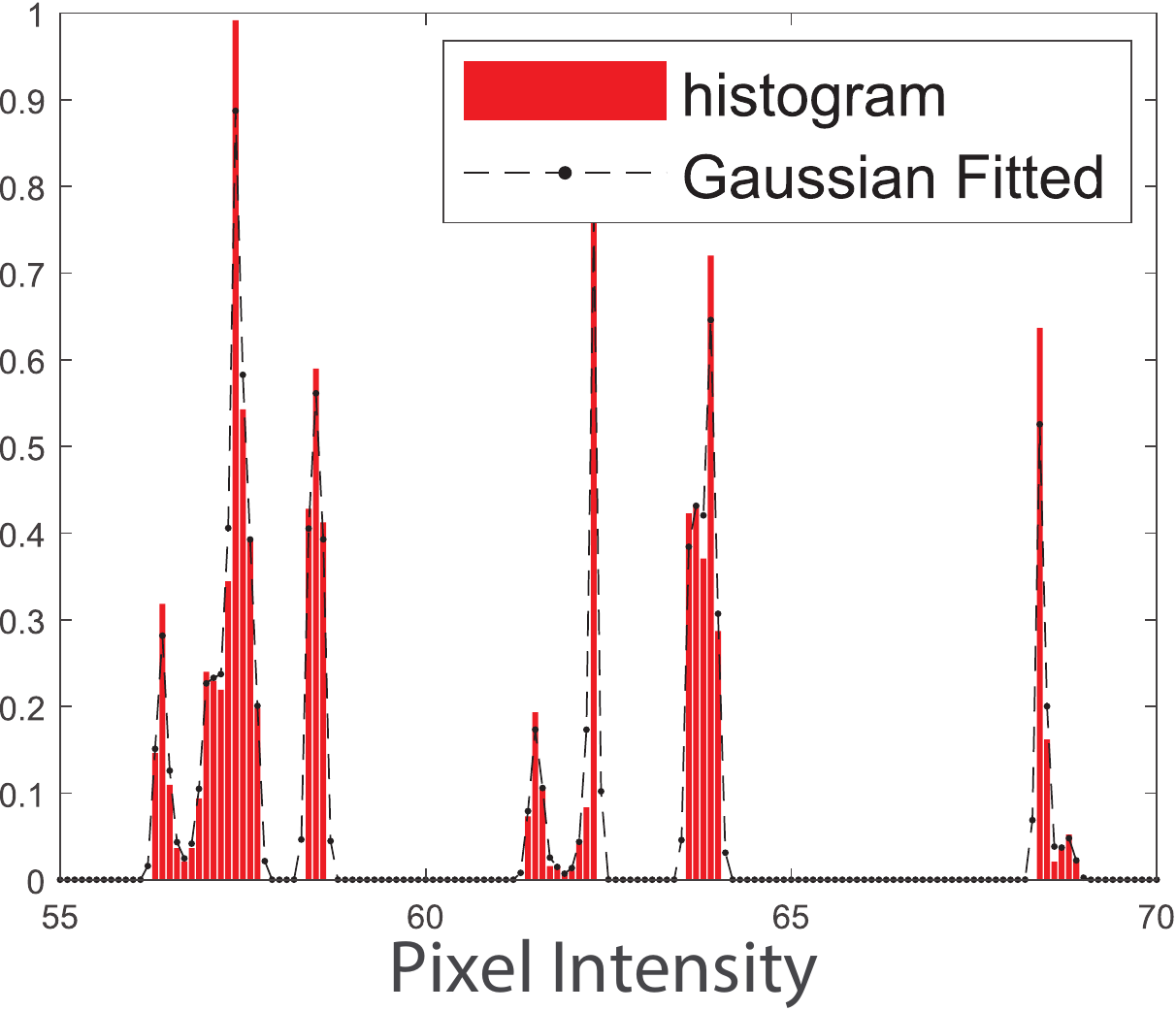}& 
  \includegraphics[height = 1in]{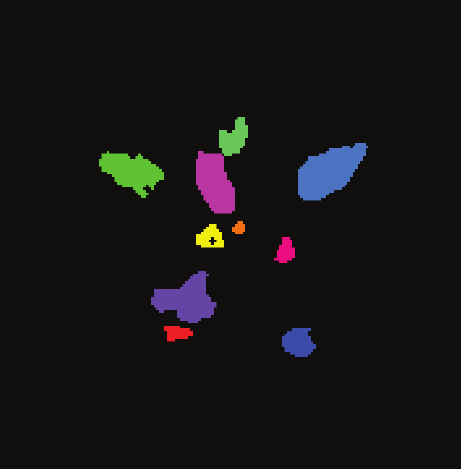} \\ &&&&\\
\end{tabular}
    \begin{tabular}{ccccc} 
\toprule
Given image & Manual Count  & CODI-S  & CODI-M & Other \\
\midrule
from \cite{cruz2016multi}   & 10 & 9 (1.33s) & 10  (0.07s) & 8  \cite{ayalew2020unsupervised}\\
      \bottomrule
    \end{tabular}
\end{center}
\caption {[Arabidopsis plant leaf counting]
(a) Given image of Arabidopsis plant \cite{cruz2016multi}.
(b) The ground truth image in \cite{cruz2016multi} showing $10$ leafs.
(c) The  density map estimation \cite{ayalew2020unsupervised}, showing 8 leafs.
(d) CODI-S  counts 9 leafs.
(e) CODI-M  counts 10 leafs.
 Experiments are performed 20 times on CODI-M, where 10 out of 20 trials results in count between 9 and 11.  The subtle uncertainty comes from the the delicate boundaries between the leaves in the original image.  The average cpu time is 0.07 second. Figure (e) shows the best results among 20 CODI-M experiments. 
}
\label{F: leaf} 
\end{figure}

{\bf Agriculture and fruits: } We consider agriculture images  in Figure \ref{F: fruits}:
(a) an apple tree ($594\times 800$) and (b) a bunch of cherries ($800\times 800$).
These are color images where the fruits are red and the rest of image 
is roughly green. For $g$, we subtracted the green channel (the second dimension)  from the red channel (the first dimenstion) followed by a thresholding $\chi_{t > 80}$, $\chi_{t > 110}$ for (a) and (b) respectively.
Since there are many overlapping objects, a rough estimate of manual counts are provided in form of intervals.
We apply the proposed methods to count the number of apples in (a)
and cherries in (b). 
Figure \ref{F: fruits} shows that the proposed methods are able to find a correct estimation for the 
number of fruits.  

\begin{figure}
  \begin{center}
\begin{tabular}{cc} 
(a) &(b)\\
\includegraphics[align = c, height = 1.5in]{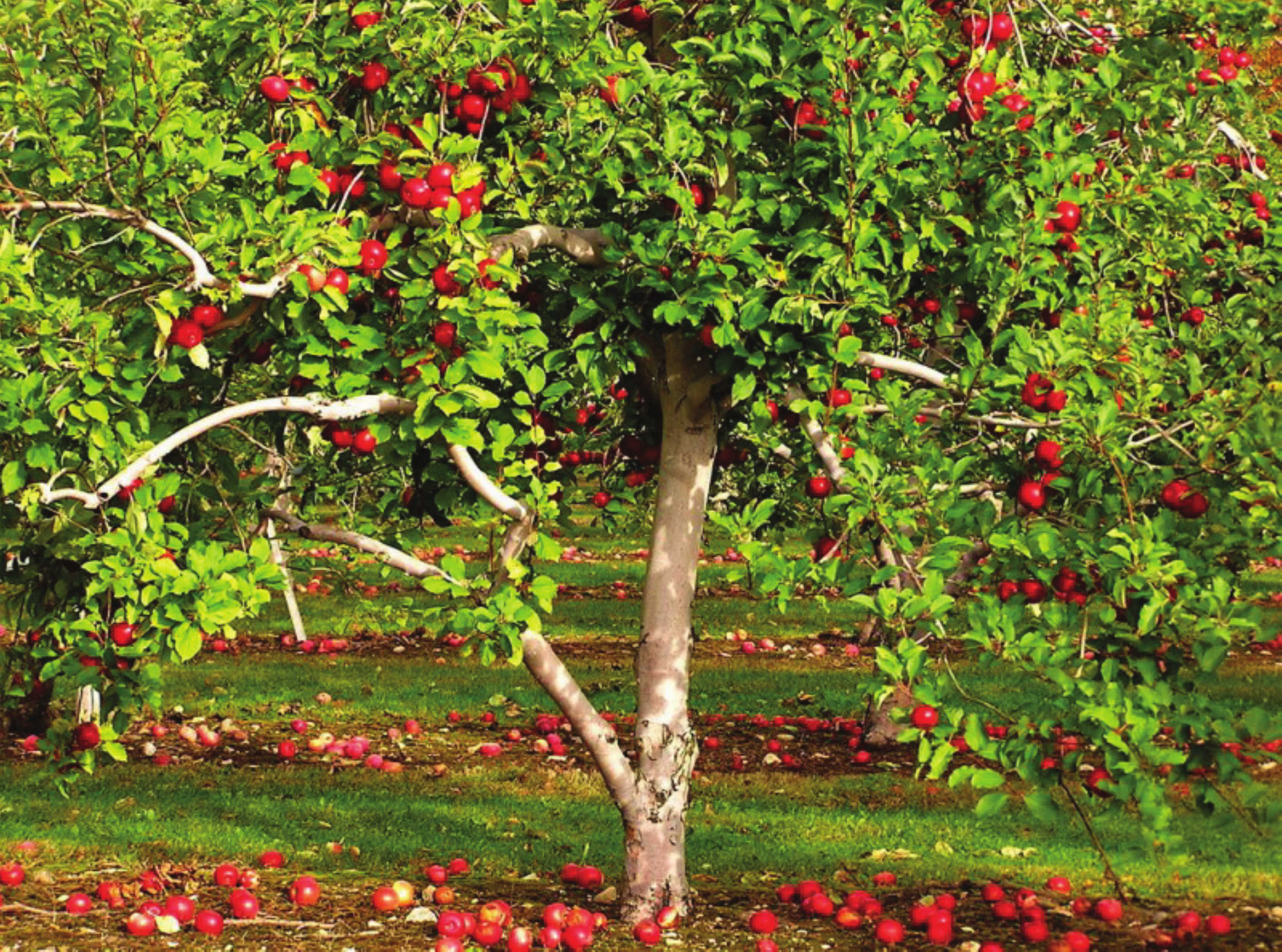}  & 
\includegraphics[align = c, height = 1.5in]{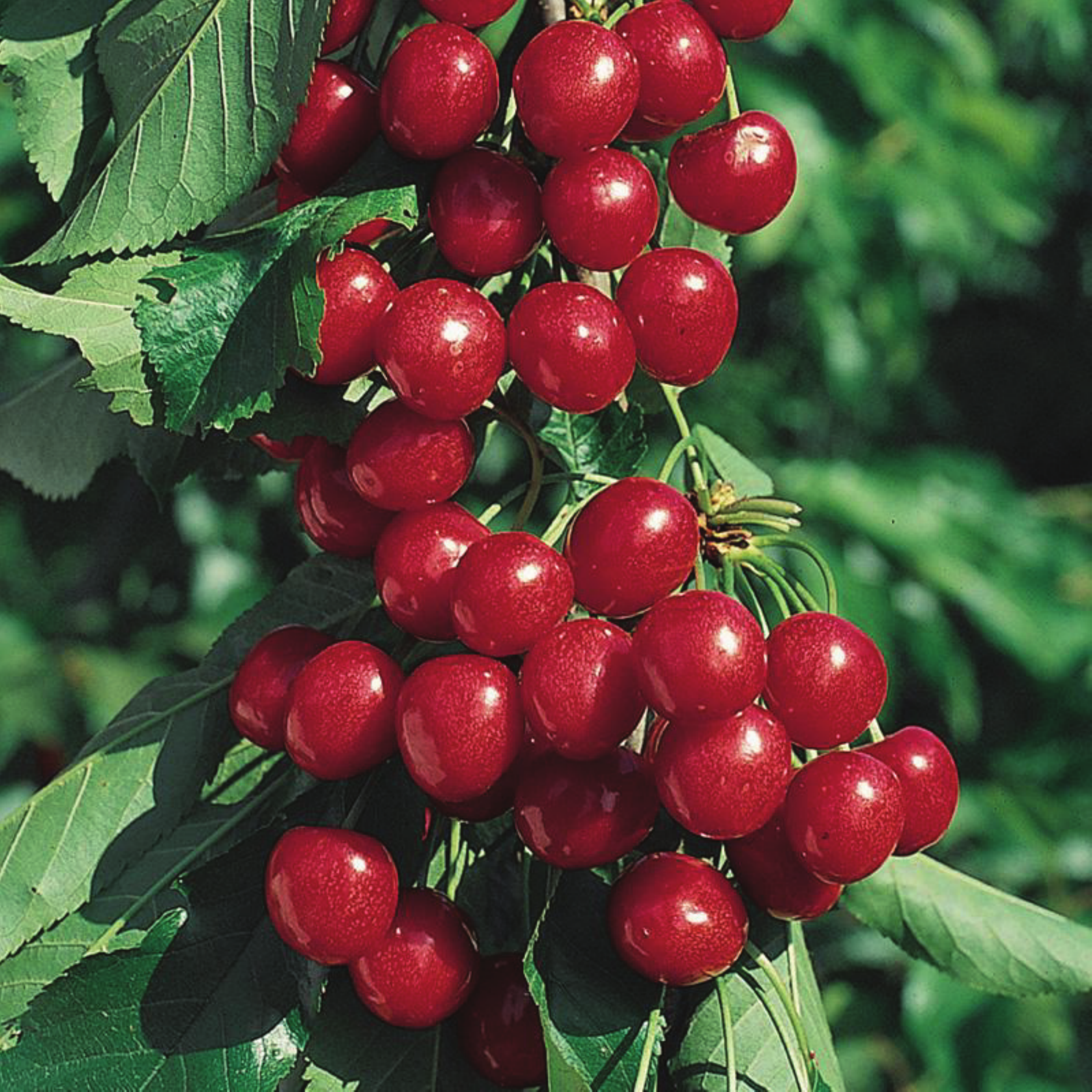}  \\ &\\
\end{tabular}
    \begin{tabular}{cccrr} 
\toprule
Given image & Manual Count &  CODI-S  & CODI-M  \\\midrule
(a)  & [205, 235]  & 202 (7.57s) & $[217,226]$ (4.85s)   \\

(b)   & [27,33]  & 31 (0.25s) & [27,33] (0.07s) \\

 \bottomrule
    \end{tabular}
\end{center}
\caption{[Counting apples and cherries]  (a) An apple tree (b) A bunch of cherries.  
Both CODI-S  and CODI-M find a number within the accepted range.  Experiments are performed 20 times on CODI-M. For (a), the results varies in [208, 226], where 14 out of 20 trials generate results in [217, 226].  For (b) the result varies between [25,40] where 14 out of 20 experiments  results in [27,33]. This result is consistent with  the large quantity of apples in (a) and the unclear boundaries between cherries in (b). The average cpu time is 4.85 and 0.07 for each image respectively.
}
\label{F: fruits} 
\end{figure}


{\bf Objects in the production line: }
The production line images are considered in  Figure \ref{F: industry}: (a) a cart of eggs and (b) a case of soda bottles.
We compare  CODI-M  and CODI-S  with the method in \cite{baygin2018image}.  In \cite{baygin2018image}, the authors considered the segmentation, Gaussian filter, Otsu Thresholding \cite{otsu1979threshold}, Sobel Edge Detection \cite{vincent2009descriptive,shrivakshan2012comparison}, and Hough Circle Transform \cite{cha2016vision,baker2016power}.
For $g$, we used a threshold $\chi_{t > 205}$ for (a), $\chi_{t > 120}$ for (b) and an erosion step on (b)  with  structuring element parameters to be (disk,1,4) to further distinguish the boundary.
Due to the use of Hough Circle Transform, the work \cite{baygin2018image} is geometry dependent. 
Figure \ref{F: industry} shows CODI is comparable while being geometry-free. 

\begin{figure}
  \begin{center}
\begin{tabular}{cccc} 
  (a)& (a1) \cite{baygin2018image}&(a2) CODI-S  &(a3) CODI-M \\
  \includegraphics[align = c, width = 0.18\textwidth ]{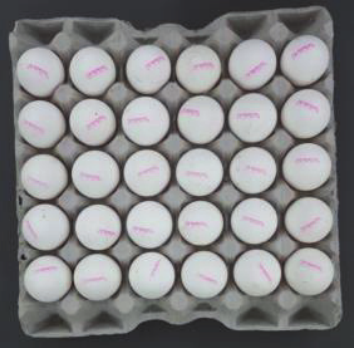}   
  &  \includegraphics[align = c, width = 0.18\textwidth]{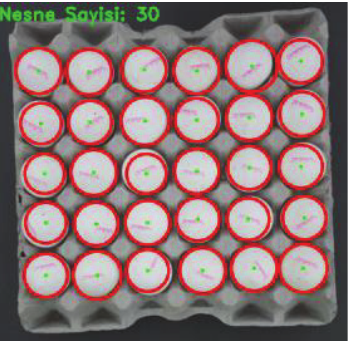}  
    &  \includegraphics[align = c, width = 0.22\textwidth]{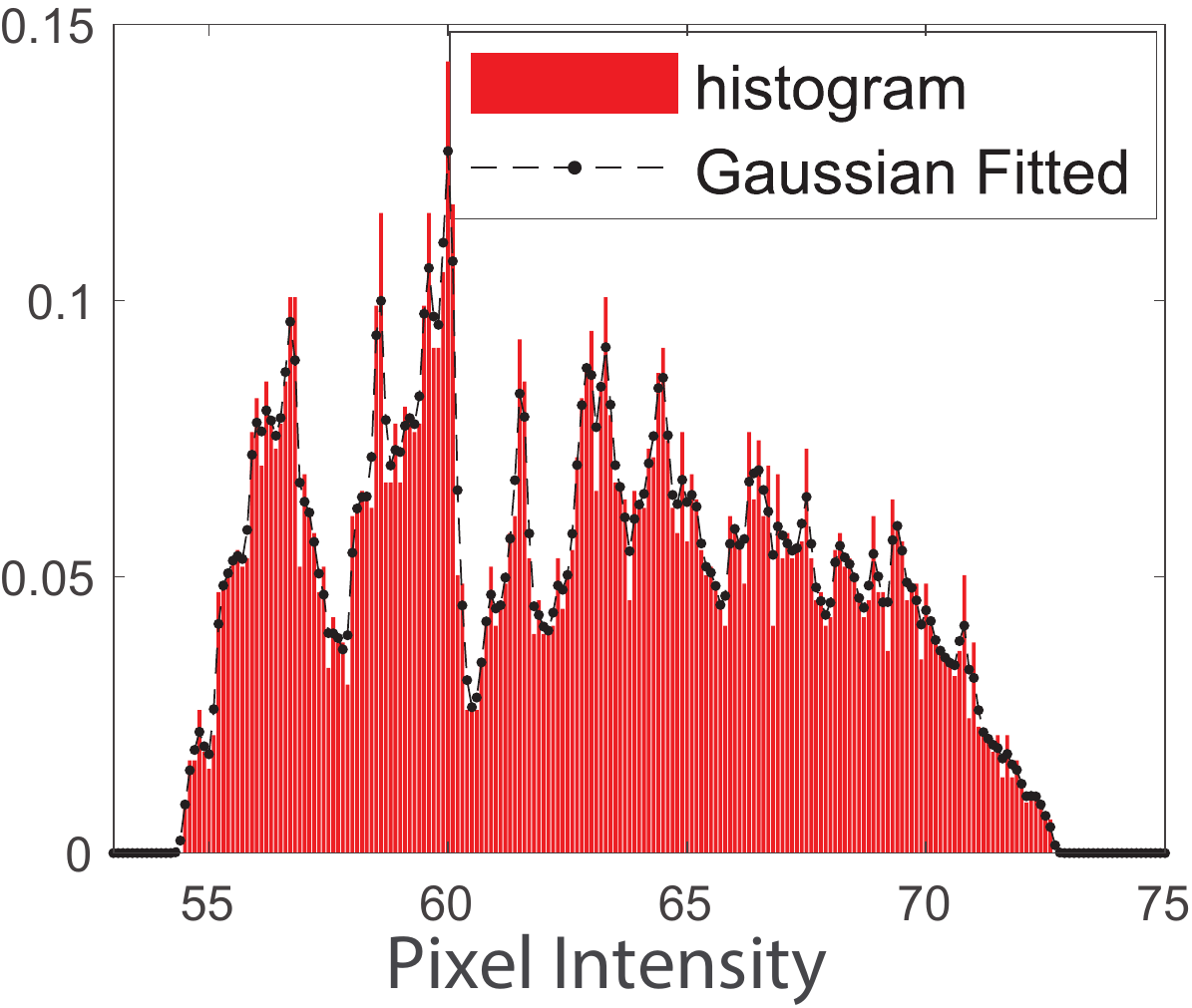}
       & \includegraphics[align = c, width = 0.18\textwidth]{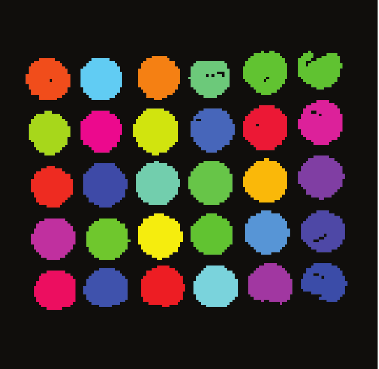}
  \\
    (b)& (b1) \cite{baygin2018image}&(b2) CODI-S  &(b3) CODI-M \\
 \includegraphics[align = c, width = 0.18\textwidth]{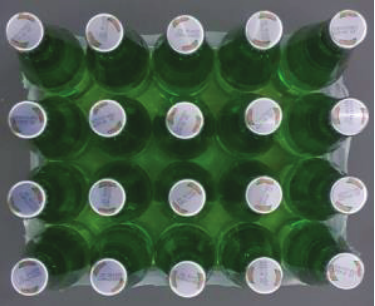}
& \includegraphics[align = c,width = 0.18\textwidth]{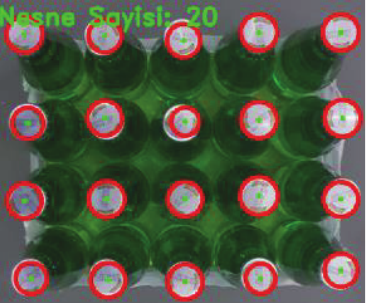}
& \includegraphics[align = c, width = 0.22\textwidth]{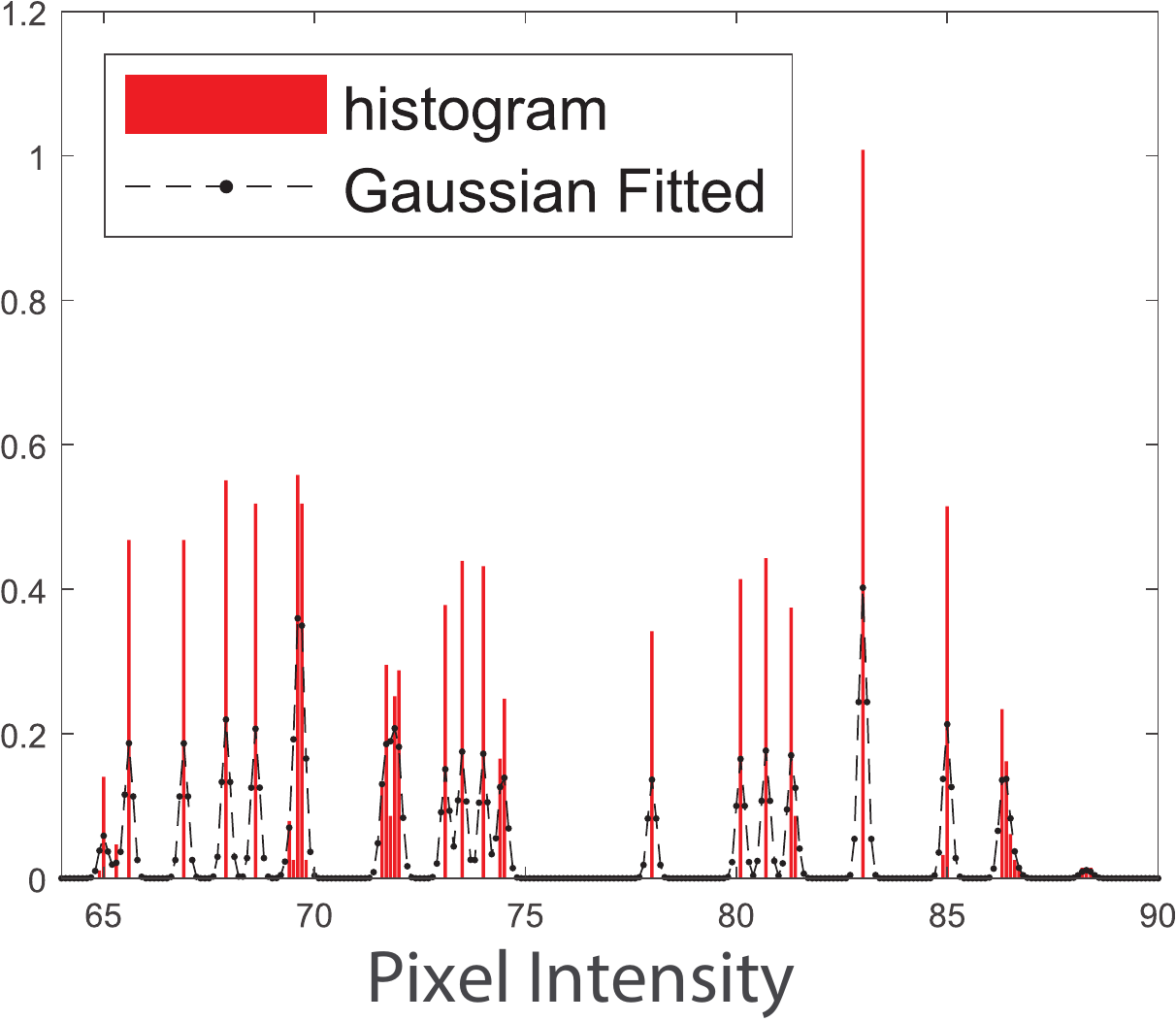}
& \includegraphics[align = c, width = 0.18\textwidth]{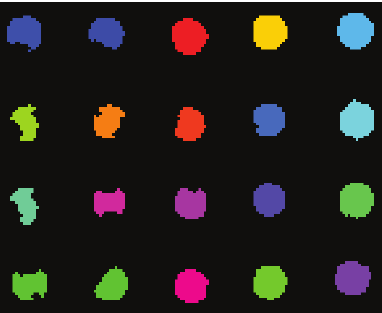}
\\ &&&\\
\end{tabular}
\begin{tabular}{ccccc} 
\toprule
Given image   & CODI-S  & CODI-M  & Others \\
\midrule
(a)   & 29 (0.38s) & $30$ (0.28s) & 30  \cite{baygin2018image}\\
(b)   & 20 (1.08s) &  19 (0.10s)  &   20 \cite{baygin2018image}\\
   \bottomrule
    \end{tabular}
\end{center}
  \caption{[Counting objects in the production lines] (a) A cart of eggs.  (b) A case of soda bottles. 
The second column shows results by \cite{baygin2018image},
the third column by CODI-M, and the forth column by CODI-S.
Experiments are performed 20 times on CODI-M. For (a), the results varies in [29, 31], where 18 out of 20 trials generate 30 as counting result.  For (b) the The result varies in [18, 23] where 18 out of 20 experiments generate results between [18, 20]. CODI gives comparable results to \cite{baygin2018image} without exploiting any geometrical information. }\label{F: industry} 
\end{figure}

{\bf Crowd and Vehicle:}
Figure \ref{F: crowd} (a) displays an image of a concert crowd and  (b) 
shows a GPS image from DOTA  dataset \cite{Ding_2019_CVPR,Xia_2018_CVPR}.  
An estimated number of people and vehicles are obtained by manual counts given in Figure \ref{F: crowd}. For $g$, we used $\chi_{t < 155}$ in (a), $\chi_{t> 220}$ followed by a  dilation step  with  structuring element parameters to be (disk,1,4).
We observe that the proposed CODI-S  and CODI-M  methods give good estimation of the counts.
\begin{figure}
 \begin{center}
\begin{tabular}{ccc} 
\includegraphics[align = c, height = 1.3in]{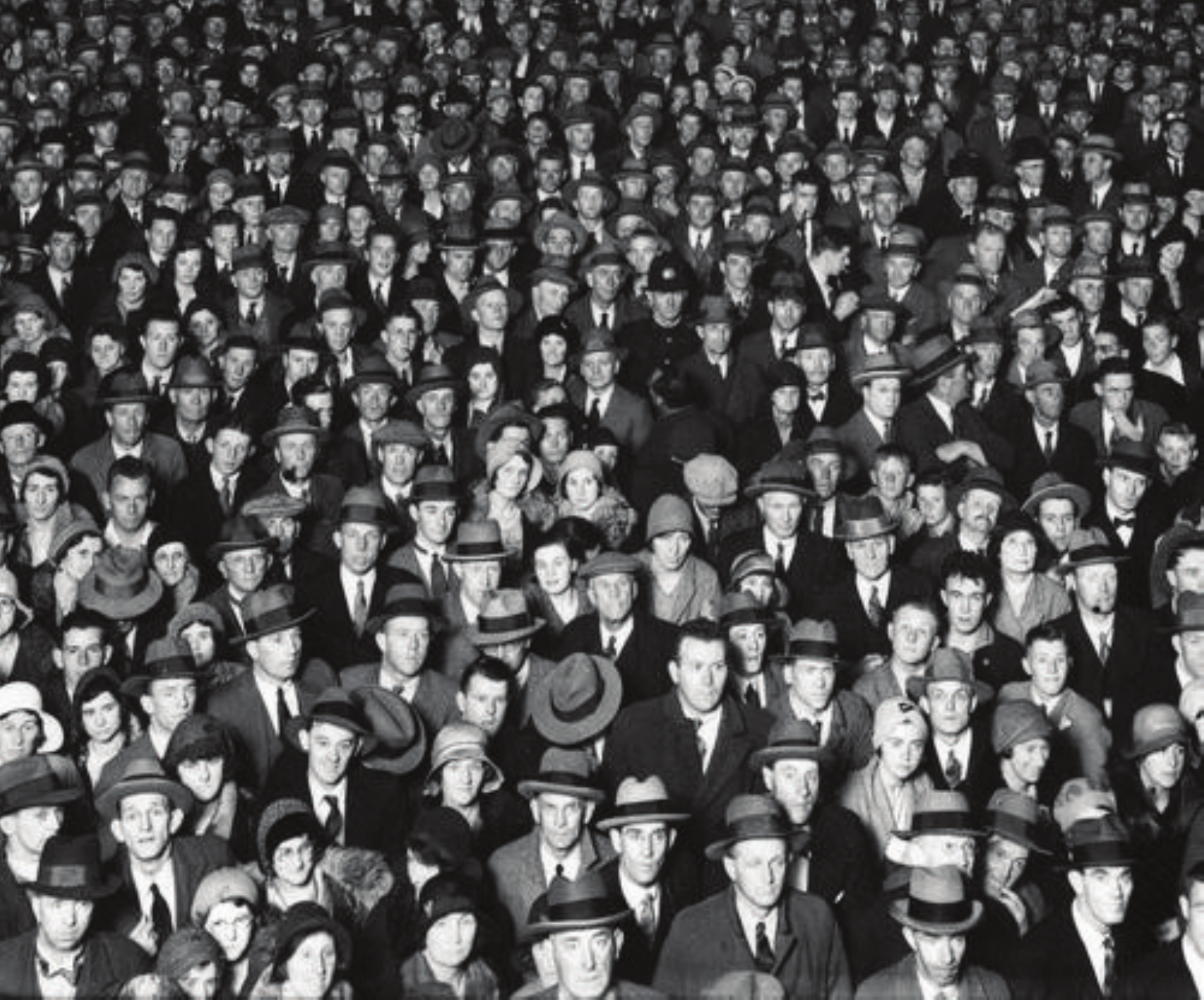} &
\includegraphics[align = c, height = 1.3in]{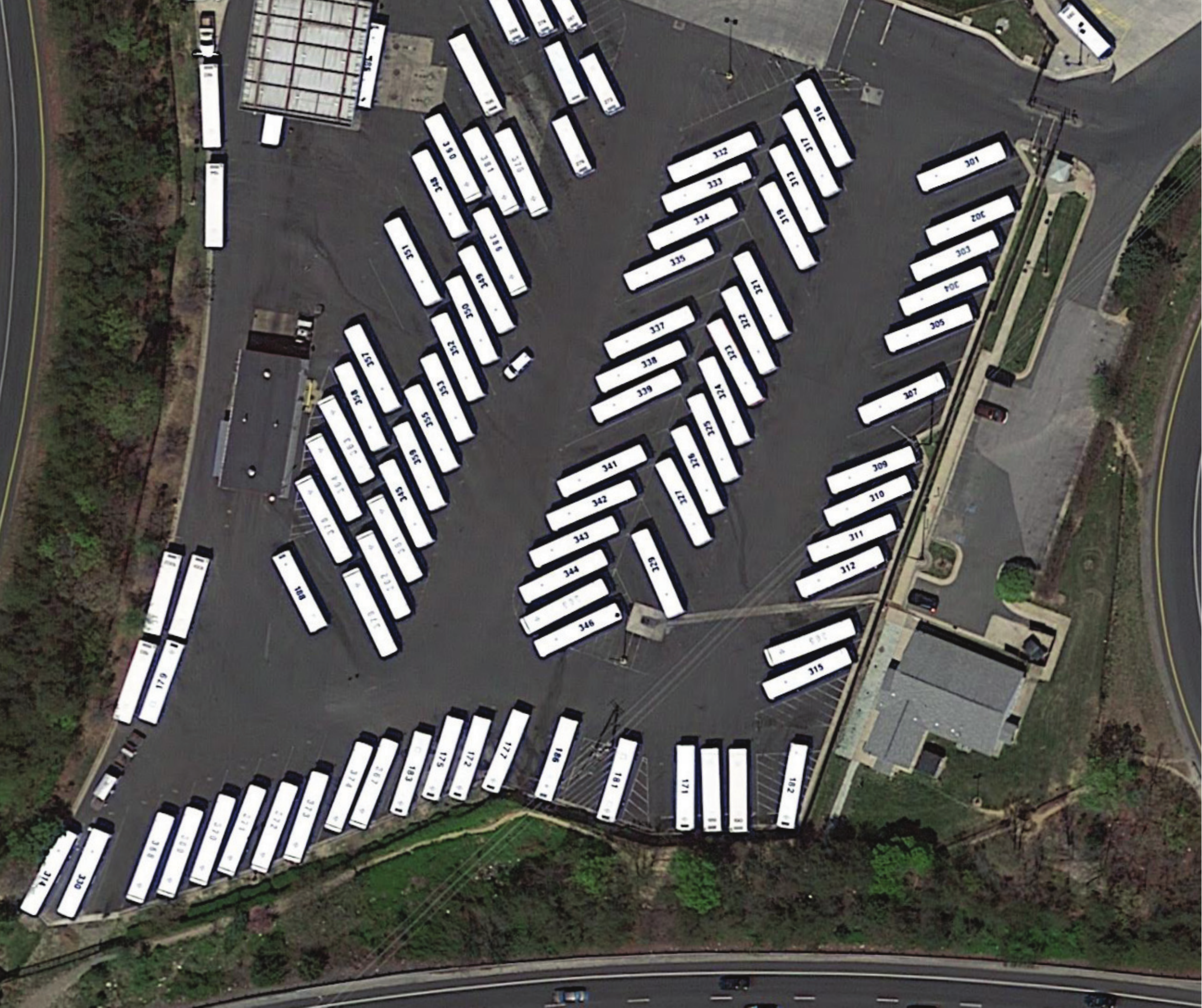}  \\
(a) & (b)\\
\end{tabular}
\end{center}
  \centering
  \begin{minipage}[b]{1.0\linewidth}
    \begin{center}
    \begin{tabular}{cccc} 
\toprule
Given image             &  Manual count & CODI-S & CODI-M \\\midrule
(a) & $292 \pm 10$ & 286 (6.48s) &[285,302] (6.29s)\\
(b) & 94$^\dagger$ &   94 (3.22s)& [93,95] (3.26s)\\
 \bottomrule
    \end{tabular}
    \end{center}
      \end{minipage}
\caption {(a) Concert crowd image (b)  GPS image from DOTA dataset \cite{Ding_2019_CVPR,Xia_2018_CVPR}.
An estimated number of people and vehicles are obtained by manual counting. Experiments are performed 20 times on CODI-M. For (a), the results varies in [283, 315], where 13 out of 20 trials generate result in [285,302].  For (b) the The result varies in [93,96] where 14 out of 20 experiments generate results between [93,95]. The subtle unstable of the result for (a) is due to the large quantity of people in the original image. $^\dagger$ The ground truth of 94 is provided in the dataset.}
\label{F: crowd} 
\end{figure}

\vspace{0.3cm}
In the following, we present a few aspects of CODI.  First, to ensure the quality of diffused index, we present ideas to properly choose the seed location and size. Then, we present the  effect of the downsampling of original image, and finally comment on the choice of parameter in computation.   


\begin{figure}
\begin{center}
\begin{tabular}{ccccc} 
(a) Original Image & (b) & (b1) Count = 2 & (b2) Count = 2 \\
\includegraphics[height = 0.9in]{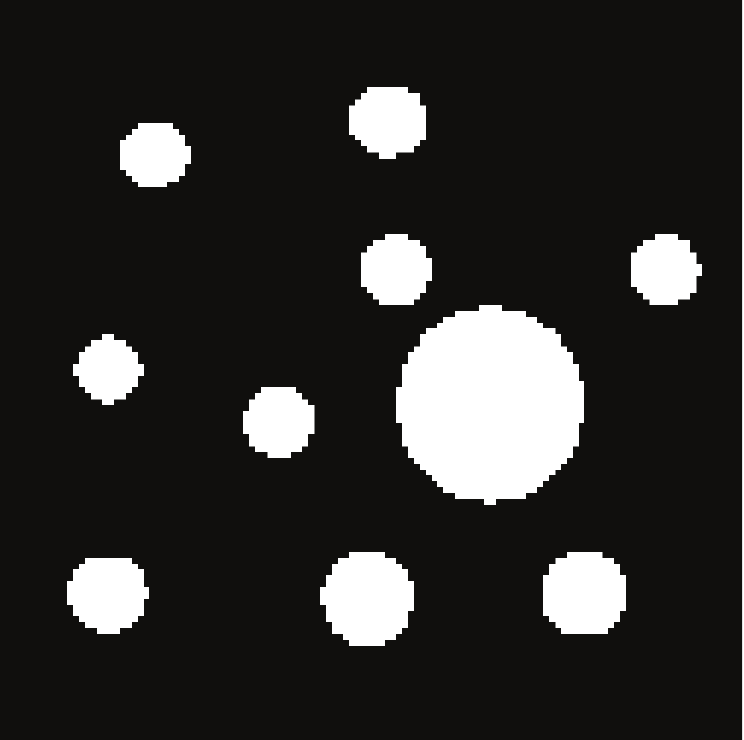}&
\includegraphics[height = 0.9in]{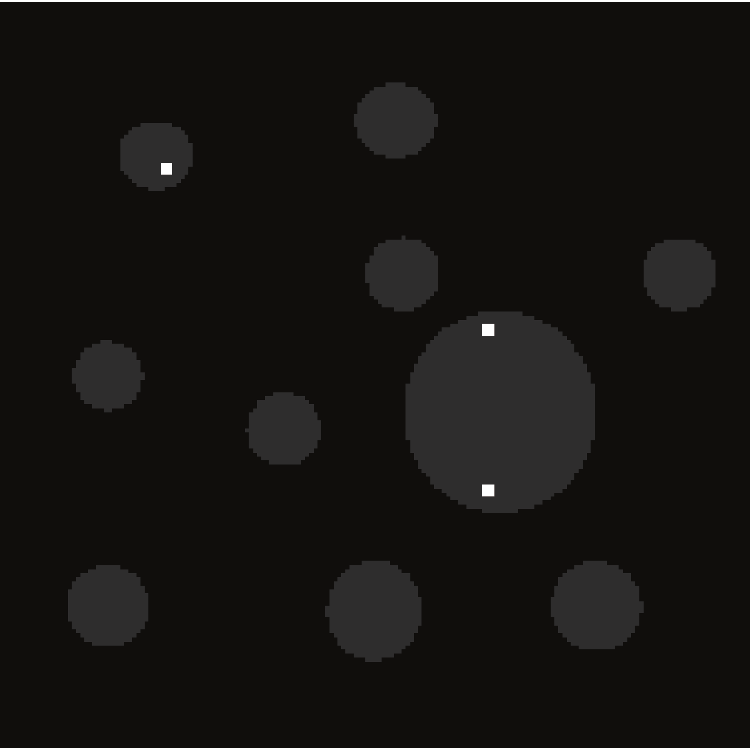}&
\includegraphics[height = 0.9in]{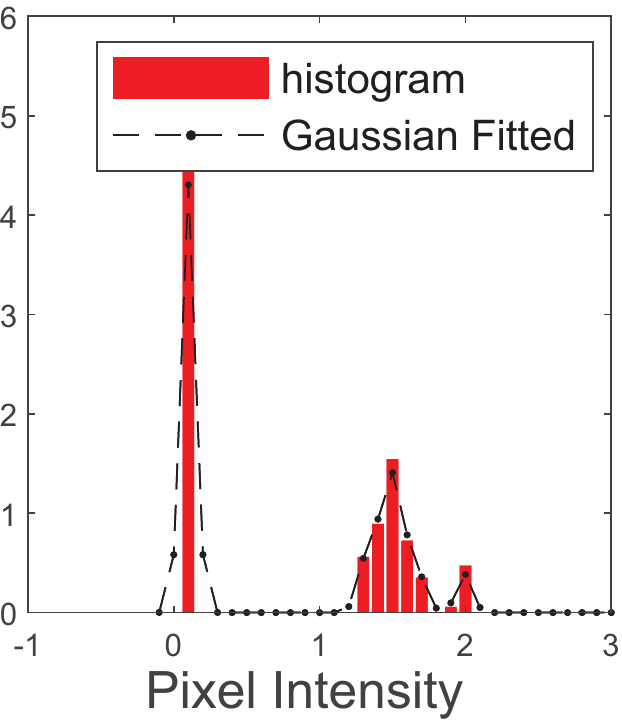}&
\includegraphics[height = 0.9in]{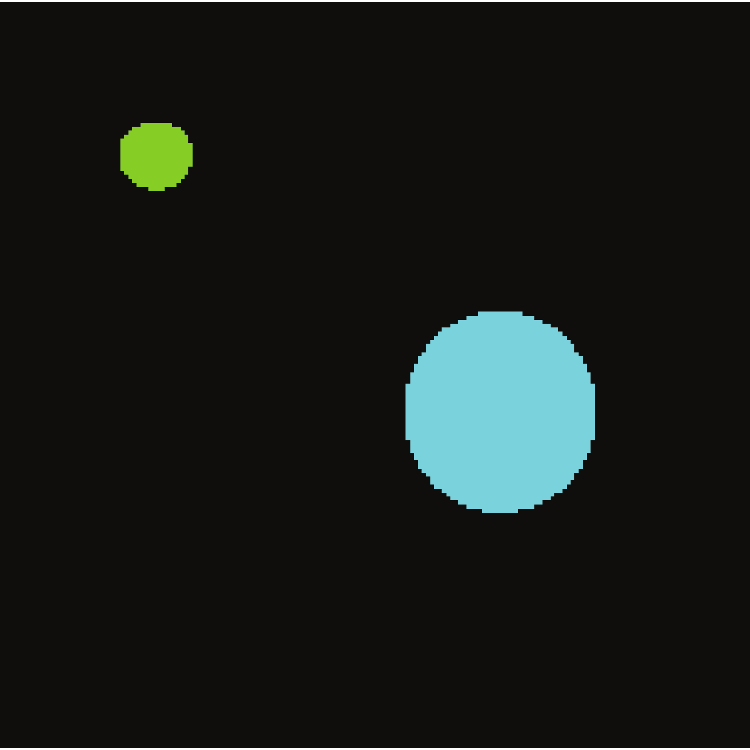}\\
& (c) & (c1) Count = 10 & (c2) Count = 10\\
 &
\includegraphics[height = 0.9in]{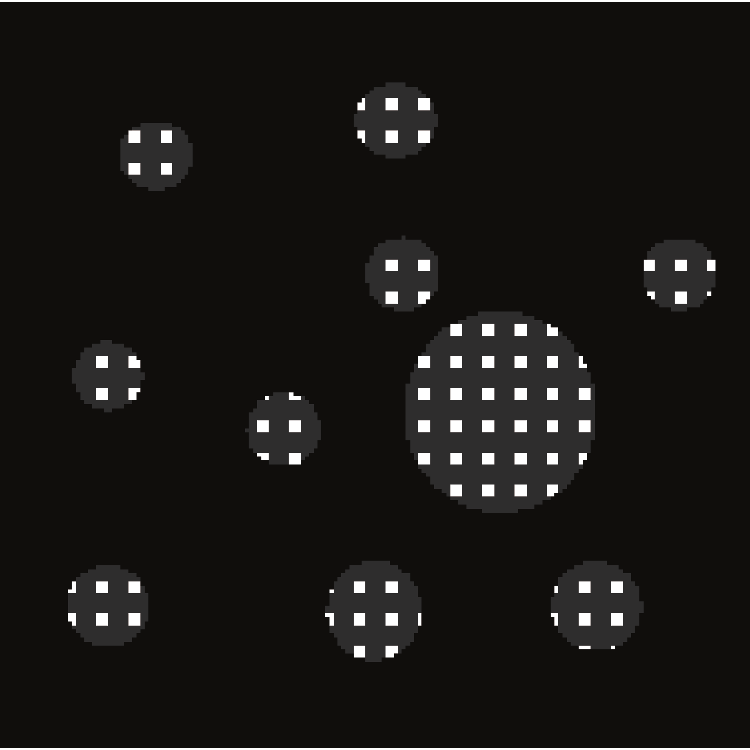}&
\includegraphics[height = 0.9in]{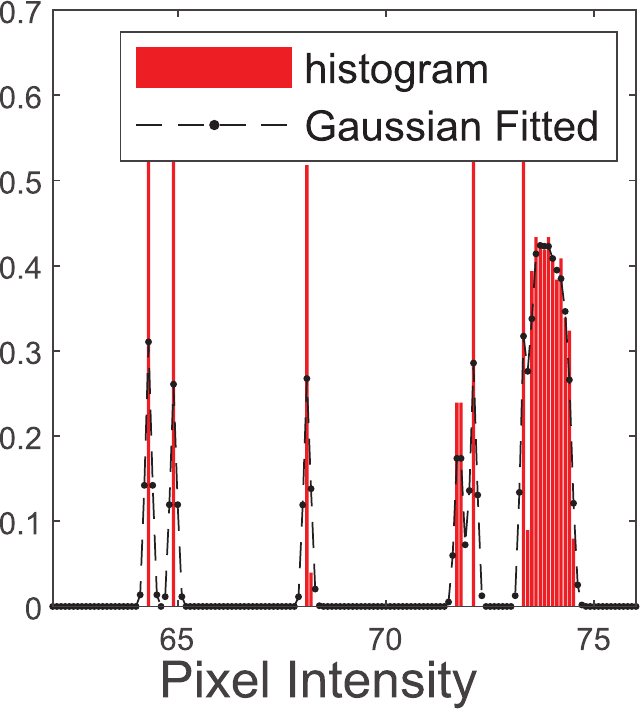}&
\includegraphics[height = 0.9in]{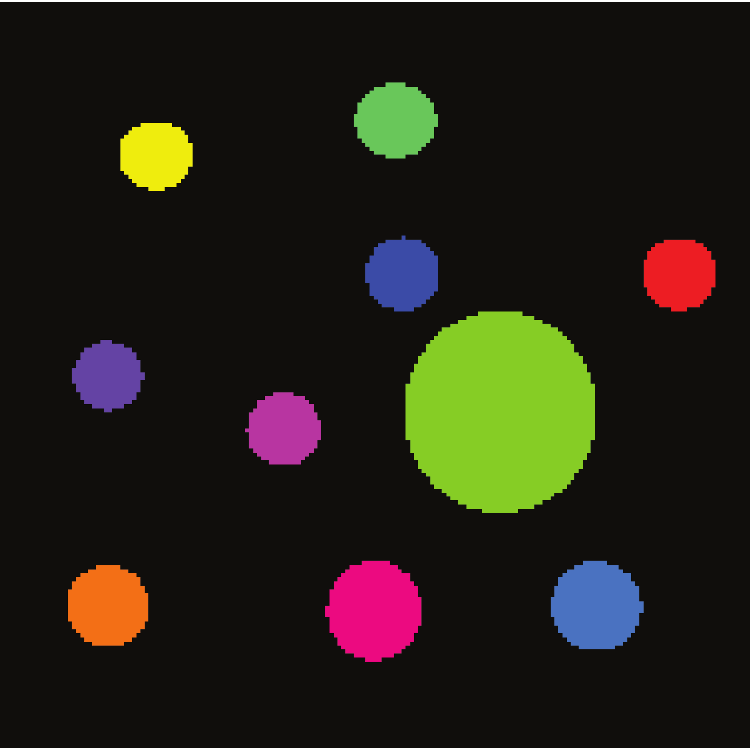}\\
\end{tabular}
\end{center}
\caption{[Seed sparsity/distance]  (a) The given image. (b) and (c) are two different seed images.  If there are objects without any seeds inside, CODI misses counting these objects as expected as in (b1) and (b2).   With multiple seeds within all objects, both method counts correctly as in (c1) and (c2).  This illustrates the importance of having the distance between seeds to be smaller than the minimum distance between objects.}
\label{F: seed distance test}
\end{figure}

Since CODI counts the diffused index, it is helpful to have the indexes to be as separated as possible.   We propose the following simple rules on the distance between seeds and size of seeds, for better performance of CODI: 
\begin{enumerate}
    \item The distance between (the boundary of) seeds should be smaller than the minimum distance between the boundary of objects, that every object has at least one seed inside.
    \item The size of seed itself should be small compared to the minimum size of objects, that no two objects are covered by only one unique seed.  In addition, we found that the convergence is faster with smaller seed size.  \end{enumerate}

Figure \ref{F: seed distance test} shows  the effect of counting results based on different sparsity of seeds. (a) is a synthetic image of size $126 \times 127$, and  experiments are preformed based on two seed images (b) and (c), with two different  distance between seed boundaries  $d=38$ and $d=6$ respectively.  The size of seeds are both $2 \times 2$.  (b1)-(b2) and (c1)-(c2) provide the counting results form CODI-S and CODI-M respectively.   If there are objects without any seeds inside, CODI  misses counting these objects as expected, shown in (b1) and (b2).   As a comparison, both proposed methods count 10 objects in (c1) and (c2), if there are multiple seeds within all objects to be counted.   This  illustrates the importance of Rule 1 that it is important to have the distance between seeds to be smaller than the minimum distance between objects. 

As for the size of the seeds, if multiple objects have only one large seed shared, it will be identified as one object in CODI.  Rule 2 suggest each seeds to be small compared to the minimum size of the objects to ensure separation between different objects.  We further experiment with the size of seed in Figure \ref{F: seed test IV}.    It shows that even if the size of the seed is smaller than the size of the objects to be counted, it is better to have smaller seeds for faster diffusion.   We experiment on a binary image of size $281 \times 87$ with 6 hexagons using seeds  of size $20 \times 20$ and $2 \times 2$ respectively.   The distance between the boundaries of big seeds and small seeds are both 10 pixels, which is smaller than the minimum distance between any two hexagons to be counted.  The first and second rows show CODI-S and CODI-M using big seeds, while the third and fourth rows show CODI-S and CODI-M using small seeds respectively.   
With smaller seeds, less than 40 iteration for both CODI-S and CODI-M give correct counting of 6, while for bigger seeds (top two rows) takes 300 to 400 iteration to find the correct counting.   To demonstrate the relation between seed size and convergence, we set the objective function in (\ref{usub-orig}) at $n$th iteration to be $E_n$, and consider 
\begin{equation}
    R_n= \left| \frac{E_n - E_{n-1} }{E_{n-1}}\right|
    \label{e:R_n}
\end{equation} 
for convergence measure.  If $R_n$ is small, it means the diffusion is converging.  For each experiments, the clustering results are shown in 3 stages: first column: $R_n = 0.09$, second column: $R_n = 0.05$, and third column: $R_n = 0.01$. 
In Figure \ref{F: seed test IV} the third row, after 32 iterations, $R_n = 0.09$ in CODI-S, 6 objects are found.  After another 9 iterations, $R_n$ decreased to 0.05 and 6 objects are found by CODI-S again.   This shows that using relatively small seeds results in good counting results with $R_n = 0.05-0.09$.   For bigger seeds $R_n = 0.01$ is needed, since changing given seed values to become a diffused index for each object takes longer. 

\begin{figure}
\begin{center}
\begin{tabular}{c}
\includegraphics[width = 0.8 \textwidth ]{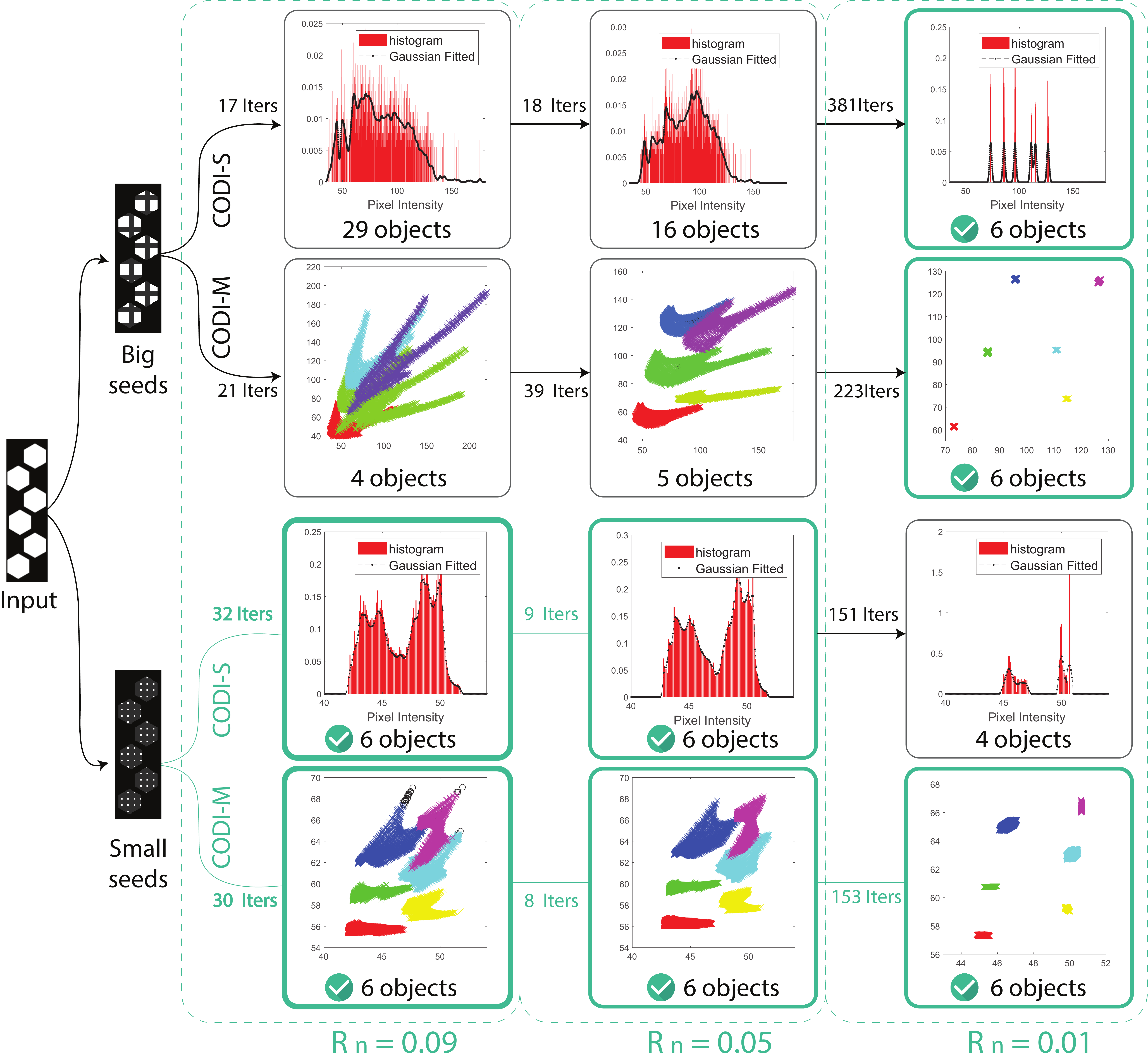} 
\end{tabular}
\end{center}
\caption{[Seed size v.s. Convergence]  From one given image, two different sizes of seeds are used while keeping the distance between the seeds to be the same (smaller than the minimum distance between the objects).    For smaller seeds in third and forth row, CODI gives good counting results with $R_n = 0.05-0.09$.  For bigger seeds $R_n = 0.01$ is needed, since changing given seed values to become a diffused index for each object takes. }
\label{F: seed test IV}
\end{figure}

Given an image of high resolution, reducing the size of image while keeping the boundary information  can significantly reduce the cpu time.  Figure \ref{F: downsample test} shows reduction of size vs the counting result.  (a) is the original image of size $1000 \times 1097$.  With manual counting, there are about $[203,213]$ number of cells, depending on how very small objects are counted. 
This image is reduced to 7 different levels of quality as shown in (b), level of reduction ranging from 76\% to 88\% reduced from the original image.  For example, after 88\% reduction, the given has been reduced to size $140 \times 154$.  For each of the seven reduced image in (b), we perform CODI-S for once and CODI-M for 50 times.  In (c), blue dots are CODI-S, blue bars are CODI-M, and the yellow color bar is a range of correct counting.   Red bar graphs show the CPU time in seconds for CODI-S and CODI-M showing the clear reduction on cpu time.   The $x$-axis shows the downsampling rate.   Notice while the counting results are near the correct range, cpu time clearly reduces with downsampling.  
\begin{figure}
\begin{center}
\begin{tabular}{ccc}
(a) Given image & (b) Visualization & (c) Downsample test (CODI-S) \\
\includegraphics[align=c,height= 1.2in]{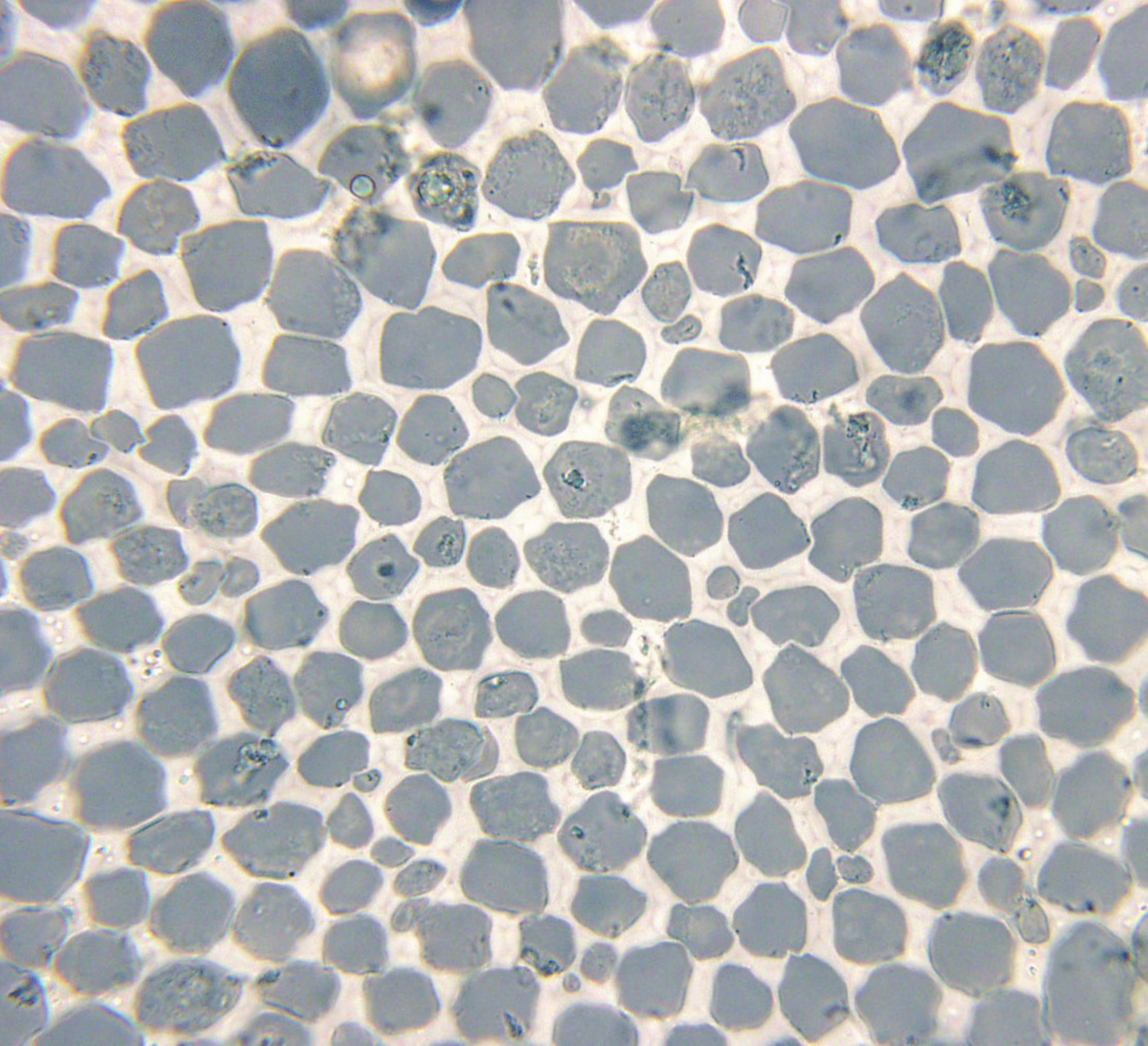} 
& \includegraphics[align=c,height= 1.2in]{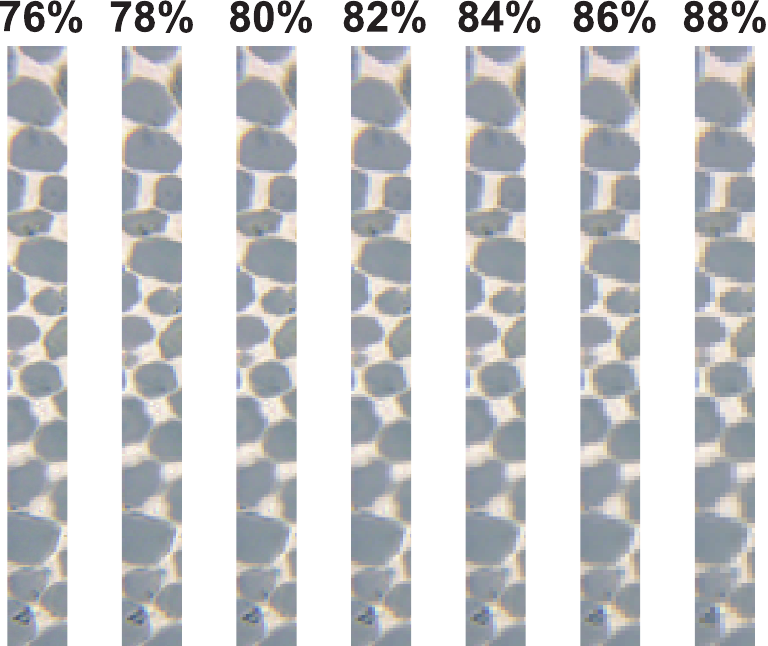} 
&\includegraphics[align=c,height= 2.2in]{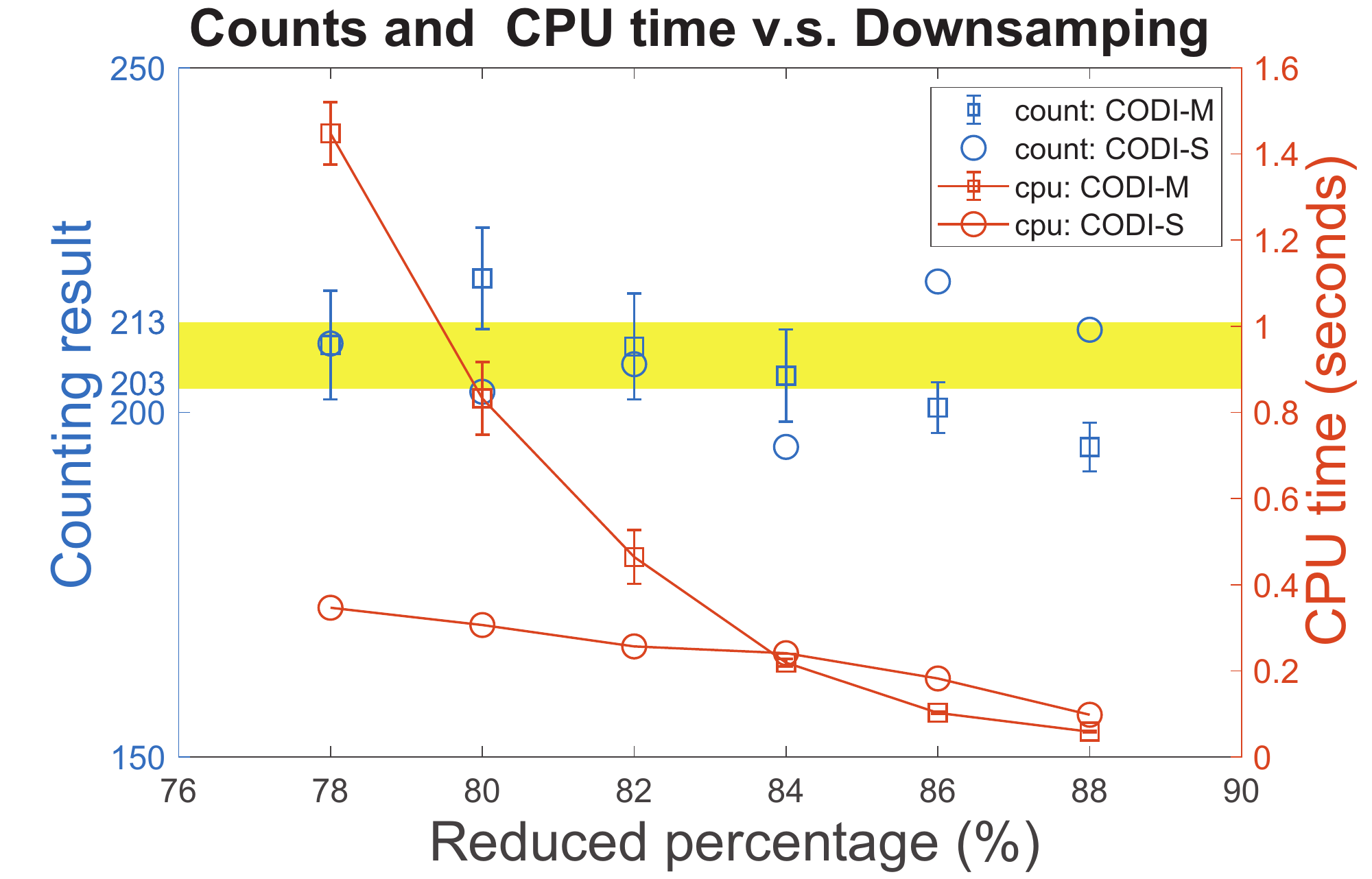}
\end{tabular}
\end{center}
\caption{[Downsample and cpu time]  (a) The given image of $1000 \times 1097$ with manual counting in the range of $[203,213]$ which is shown as the highlighted region in (c).  (b) a visualization of seven different downsampled image, ranging from 76\% to 88\%.  (c)  The blue circles represents CODI-S, the red circles the cpu time.   The blue error bars denotes the mean and standard deviation of 50 experiments of CODI-M, and the red error bars those of cpu times.  Notice while the counting results are near the correct range, cpu time clearly reduces with downsampling.  }
\label{F: downsample test}
\end{figure}

As for the stability of parameters for CODI-S and CODI-M, we consider the parameter space in terms of  $r$ and $\sigma$ for CODI-S, and in terms of MinPts and $\epsilon$ for CODI-M.   We test with Figure \ref{F: industry}(a) image.  
 In Figure \ref{converging test1} and \ref{converging test 2}, the most yellow region denotes the parameter set that produce 100\% correct counting results.  We present the parameter graph as the diffusion algorithm convergence.  We consider $R_n$ in (\ref{e:R_n}) for convergence measure.  If $R_n$ is small, it means the diffusion is converging. 
In Figure \ref{converging test1}, we show five experiments (a)-(e) where $R_n \in \{ 50\%,40\%,30\%,20\%,10\% \}$.  The ground truth of counting result is 30. 
When $R_n = 10\%$, there are larger green region in the parameter space that produces high accuracy.  These graphs also present the relation between the smoothing of histogram, the number of iteration and the counting results.   In general, there are large regions with yellow which represent good counting results. This result is consistent with Figure \ref{F: Open boundaries MI-40iter} where smaller number of iteration is favorable for CODI-S. 
In Figure \ref{converging test 2}, the same experiments are conducted for CODI-M where we have $R_n$ set to be $15\%, 10\%, 5\%$ in (a)-(c).  The red marks denotes two examples of the optimal parameters  we recommend for the experiment in similar cases.  When $R_n\leq 10\% $, the counting result won't be affected by small perturbation of the parameters.  As in the case of open boundary, CODI-M with longer iteration give stable results. 

\begin{figure}
\begin{center}
\begin{tabular}{c} 
\includegraphics[height = 1.8in]{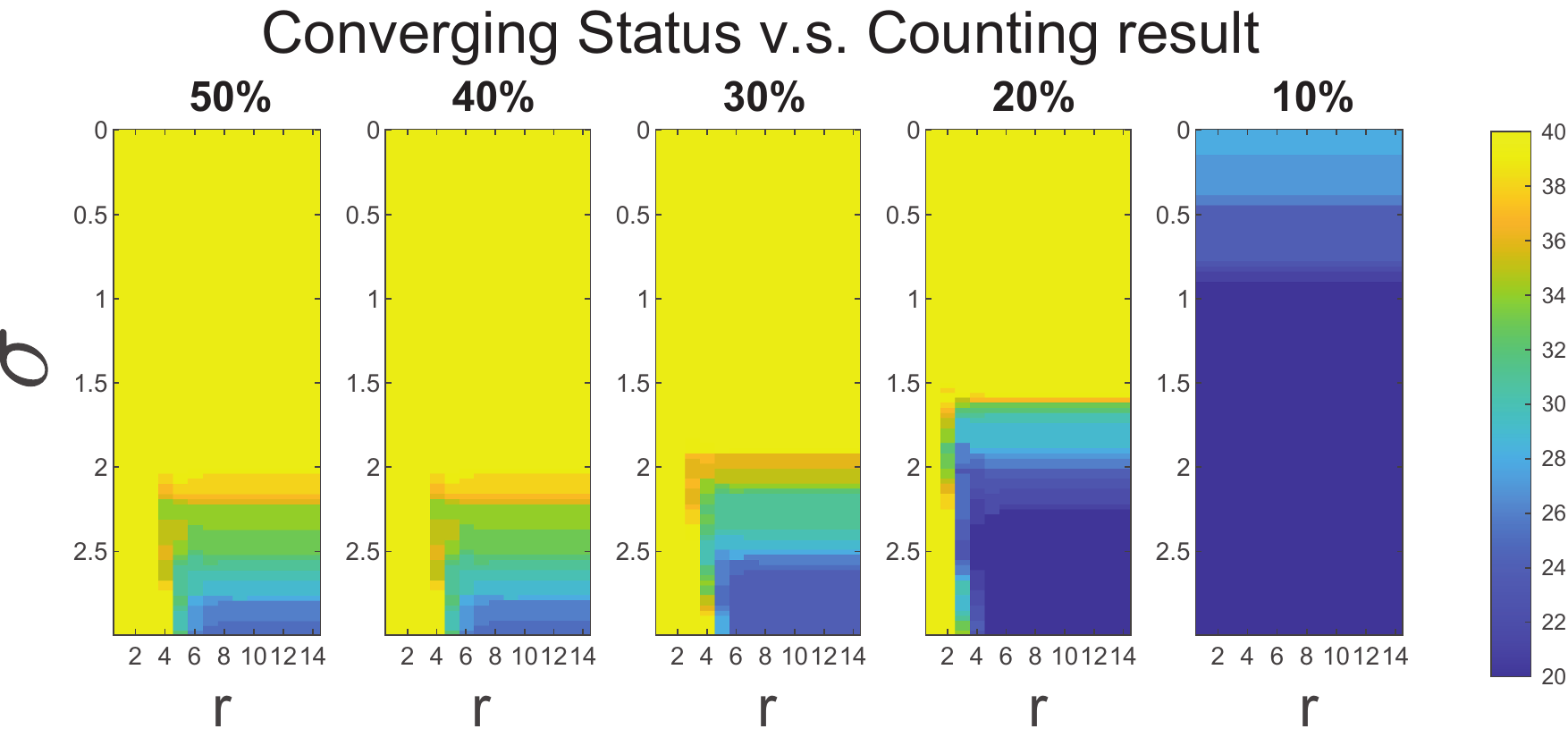} 
\end{tabular}
\end{center}
\caption{[CODI-S parameter space] Visualization of countingresult in the parameter space $(r,\sigma) \in [2,15] \times [0.01, 3]\cup [1.6,3] $ based on different diffusion stage.   
(a)-(e) shows when $R_n = 50\%,40\%,30\%,20\%,10\%$.  
The ground truth of counting result is 30, where the more yellow the color is more accurate the result.  This result is consistent with Figure \ref{F: Open boundaries MI-40iter} where smaller number of iteration is favorable for CODI-S. 
  }
\label{converging test1}
\end{figure}

\begin{figure}
\begin{center}
\begin{tabular}{cccc} 
(a)$R_n=15\%$  &  (b)$R_n=10\%$  & (c)$R_n=5\%$  \\\includegraphics[height = 1in]{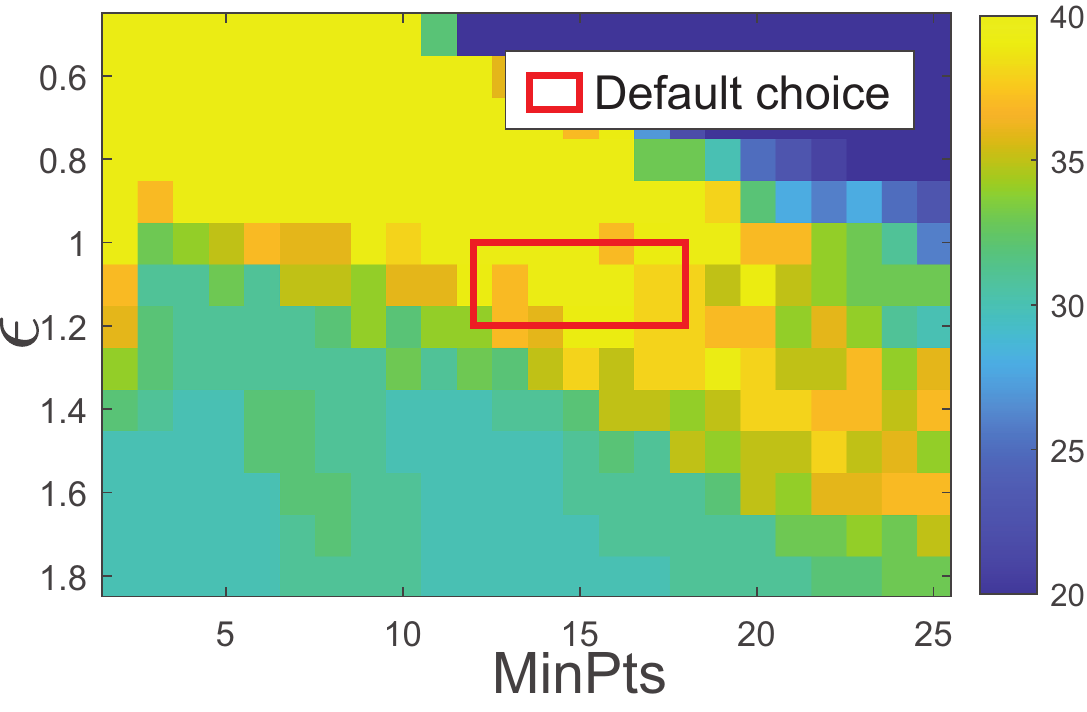}& 
\includegraphics[height = 1in]{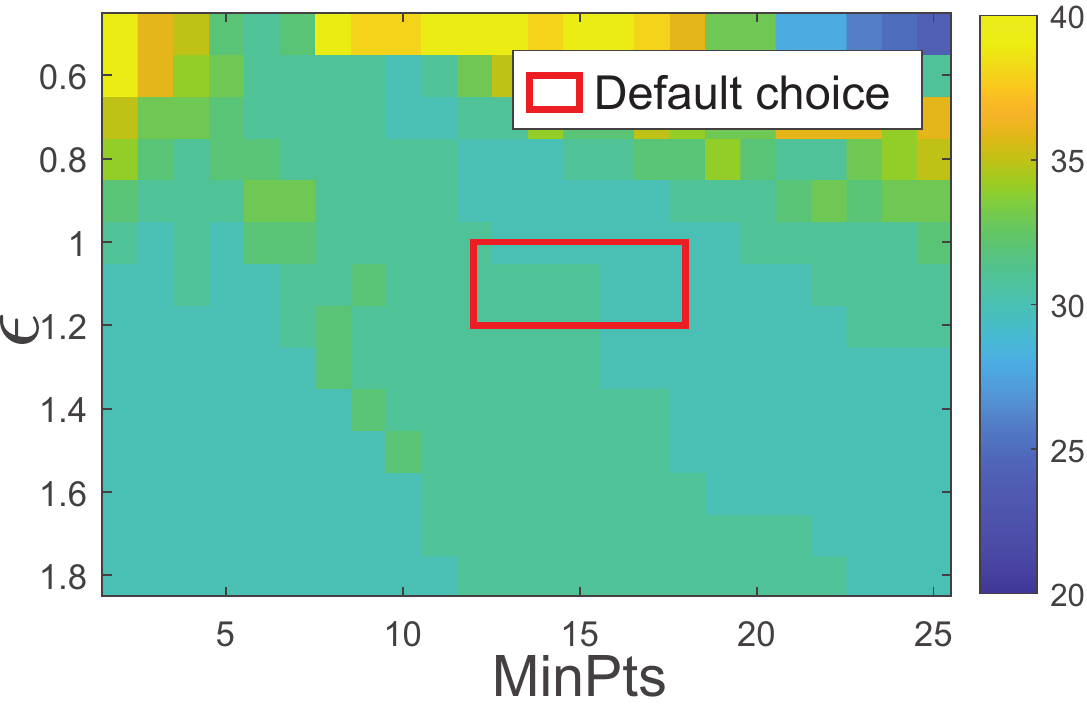}&
\includegraphics[height = 1in]{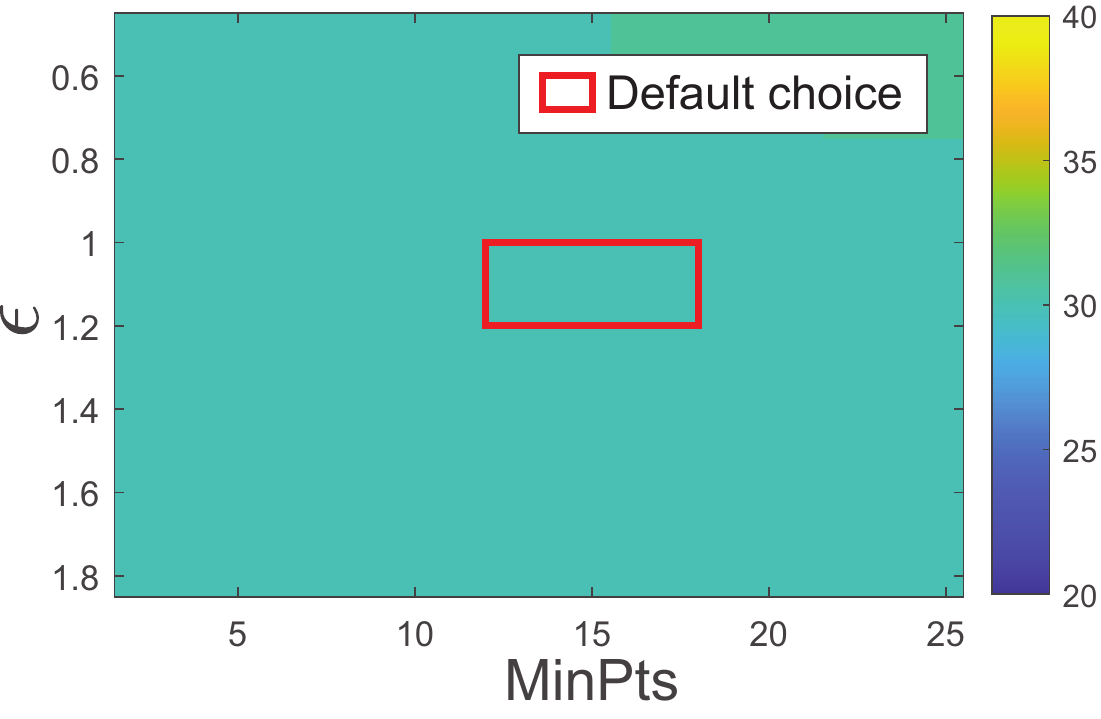} \\
\end{tabular}
\end{center}
\caption{[CODI-M parameter space]  Visualization of counting result in the parameter space $(\text{MinPts}, \epsilon) \in [2,25] \times [0.5, 1.8] $ based on different diffusion stage. The ground truth in this example is 30, i.e., the green area represents good result.  (a)-(c)  shows when $R_n$ to be $15\%, 10\%$, and $5\%$.  The red marks denotes the  parameters we recommend for similar cases.  With enough iterations, the counting result of CODI-M is not affected by a small perturbation of the parameters. }
\label{converging test 2}
\end{figure}

\section{Concluding Remarks} \label{sec:concluding}

We proposed Counting Object by Diffused Index with scalar and multi-dimensional seeds.  This method is  diffusion-based, geometry-free and learning-free method.
The diffusion phase is based on an edge-weighted harmonic optimization model, using the $g$ weight function or mask image and the seed image.
We proposed an efficient algorithm, called Diffusion Algorithm, to obtain the diffused image. 
CODI-S  is based on Gaussian fitted curve to the histogram data of the diffused image, that 
the number of local maximum of this curve gives the number of objects in the image.  For CODI-S, even with a small number of diffusion iteration, there is a large region with 100\% accurate counting in the parameter space.  
CODI-M utilizes more flexible 4-dimensional seeds which can help to distinguish objects better.  Typically, a longer iteration compared to CODI-S helps accurate and stable counting for CODI-M.   CODI-M can also find each object location in the given image for object identification.  This method can further separately count different size object by clustering the set $S$ of cluster size.   
In the numerical section, we experimented the proposed methods on various images  including cells, plants, fruits, and concert crowd.  The results confirm that the proposed methods are geometry-free,  and are able to provide good counts in various cases in a very short amount of cpu time.   We compared with different existing methods, many of which only works for particular types of images considered in their paper.  We also compared with methods which   require learning and training process.  The proposed methods show comparable results in terms of accuracy.

\section*{Acknowledgement}
Authors want to acknowledge Mr. Sayem Hoque who worked on this project a few years ago as an undergraduate student at Georgia Institute of Technology.   We thank his contribution and  valuable discussions. 

\section*{Appendix}
\appendix
\section{Proof of Theorem \ref{thm 2}}
\label{proof of thm}
\begin{proof}
(a). The first order optimality conditions of $U$ subproblem is given by
\[
-2\nabla\cdot(G_0\nabla U^{(k+1)})+2\nabla\cdot ((G_0-g_0)\nabla U^{(k)})+
\theta(U^{(k+1)}-U^{(k)})
+\mu (U^{(k+1)}-V^{(k)})-\lambda^{(k)}=0.
\] 
We express this equation in terms of error to get
\[
-2 \nabla \cdot G_0 \nabla U^{(k+1)}_e+
2\nabla\cdot ((G_0-g_0)\nabla U_e^{(k)})+
\theta(U_e^{(k+1)}-U_e^{(k)})
+\mu (U_e^{(k+1)}-V_e^{(k)})-\lambda_e^{(k)}=0.
\]
We multiply this equation by $U^{(k+1)}_e$, this gives
\begin{eqnarray*}
&2G_0 \|\nabla U^{(k+1)}_e\|^2
-2(G_0-g_0)\langle \nabla U_e^{(k)}, \nabla U_e^{(k+1)}\rangle+
\theta\langle U_e^{(k+1)}-U_e^{(k)}, U_e^{(k+1)}\rangle&\\
&+\mu \|U_e^{(k+1)}\|^2
-\mu \langle V_e^{(k)}, U_e^{(k+1)}\rangle
-\langle \lambda_e^{(k)},U_e^{(k+1)}\rangle=0.&
\end{eqnarray*}
For any vectors $x,y\in\mathbb R^n$, the following inequalities hold
\begin{eqnarray}\label{identity}
2 \langle x,x-y \rangle &=& \|x\|^2- \|y\|^2 + \|x-y\|^2
\\
2 \langle x,y\rangle &=&  \|x\|^2+ \|y\|^2- \|x-y\|^2.
\end{eqnarray}
We exploit these identities into the account to get 
\begin{eqnarray}\label{U-equ}
&(G_0-g)  \|\nabla U^{(k+1)}_e\|^2
+(\mu+\frac{\theta}{2})\|U_e^{(k+1)}\|^2
+\frac{\theta}{2}\|U_e^{(k+1)}-U_e^{(k)}\|^2
+(G_0-g_0)\|\nabla U_e^{(k+1)}-\nabla U_e^{(k)}\|^2&
\nonumber\\[.1in]
&=(G_0-g)  \|\nabla U^{(k)}_e\|^2
+\frac{\theta}{2} \|U_e^{(k)}\|^2
+ \mu \langle V_e^{(k)}, U_e^{(k+1)}\rangle
+\langle \lambda_e^{(k)},U_e^{(k+1)}\rangle&
\end{eqnarray}

The optimality conditions of $V$ subproblem is given by
\[
\Big\langle \eta_{D} (V^{(k+1)}-U_0)+\mu (V^{(k+1)} - U^{(k+1)} ) +\lambda^{k}, V-V^{(k+1)}
\Big\rangle \ge 0
\]
We set $V=V^*$ in the latter inequality and $V=V^{(k+1)}$ in the middle inequality in (\ref{kkt}),
and add the results, then we express it in terms of error to obtain
\begin{eqnarray}\label{V-equ}
(\eta_D+\mu)\|V_e^{(k+1)}\|^2 
\le \mu \langle U^{(k+1)}_e, V_e^{(k+1)}\rangle 
-\langle \lambda_e^{(k)}, V_e^{(k+1)}\rangle. 
\end{eqnarray}
By the algorithm we have 
\[
\lambda^{(k+1)}=\lambda^{(k)}+\mu (V^{(k+1)}-U^{(k+1)}),
\]
which in terms of error it is given by 
\[
\lambda_e^{(k+1)}=\lambda_e^{(k)}+\mu (V_e^{(k+1)}-U_e^{(k+1)}).
\]
We multiply this equation by $\lambda_e^{(k)}$ and use the identity (\ref{identity}) to get
\begin{eqnarray}\label{lambda-equ}
\frac{1}{\mu}\|\lambda_e^{(k+1)}\|^2
+\mu \|V_e^{(k+1)}-U_e^{(k+1)}\|^2
= \frac{1}{\mu}\|\lambda_e^{(k)}\|^2
+2\langle V_e^{(k+1)}, \lambda_e^{(k)}\rangle
-2 \langle U_e^{(k+1)}, \lambda_e^{(k)}\rangle.
\end{eqnarray}

We add (\ref{U-equ}),  (\ref{V-equ}), and  (\ref{lambda-equ}) to get
\begin{eqnarray}\label{sum}
&
(\mu+\frac{\theta}{2})\|U_e^{(k+1)}\|^2
+(\eta_D+\mu)\|V_e^{(k+1)}\|^2 
+\frac{1}{\mu}\|\lambda_e^{(k+1)}\|^2
+(G_0-g)  \|\nabla U^{(k+1)}_e\|^2
\nonumber&\\[.1in]&
+\frac{\theta}{2}\|U_e^{(k+1)}-U_e^{(k)}\|^2
+(G_0-g_0)\|\nabla U_e^{(k+1)}-\nabla U_e^{(k)}\|^2
+\mu \|V_e^{(k+1)}-U_e^{(k+1)}\|^2
\nonumber&\\[.1in]&\le
\frac{\theta}{2} \|U_e^{(k)}\|^2
+\frac{1}{\mu}\|\lambda_e^{(k)}\|^2
+ \mu \langle V_e^{(k)}, U_e^{(k+1)}\rangle
+\mu \langle U^{(k+1)}_e, V_e^{(k+1)}\rangle 
+\langle \lambda_e^{(k)}, V_e^{(k+1)}-U_e^{(k+1)}\rangle. 
\end{eqnarray}

By  (\ref{identity}) we  then have
\begin{eqnarray*}
\mu \langle V_e^{(k)}, U^{(k+1)}_e \rangle &=&
\frac{\mu}{2}\|V_e^{(k)}\|^2 + \frac{\mu}{2}\|U_e^{(k+1)}\|^2 -\frac{\mu}{2}\|V_e^{(k)}-U_e^{(k+1)}\|^2, 
\\[.1in]
\mu \langle V_e^{(k+1)}, U^{(k+1)}_e \rangle &=&
\frac{\mu}{2}\|V_e^{(k+1)}\|^2 + \frac{\mu}{2}\|U_e^{(k+1)}\|^2 -\frac{\mu}{2}\|V_e^{(k+1)}-U_e^{(k+1)}\|^2, 
\\[.1in]
\langle \lambda_e^{(k)}, V_e^{(k+1)}-U_e^{(k+1)}\rangle 
&=&
-\frac{1}{2\mu} \| \lambda_e^{(k)} \|^2
-\frac{\mu}{2} \| V_e^{(k+1)}-U_e^{(k+1)} \|^2+
\frac{1}{2\mu} \| \lambda_e^{(k+1)}\|^2.
\end{eqnarray*}
We replace these equations in the right hand side of (\ref{sum}) to get
\begin{eqnarray}\label{sum-rep}
&
\frac{\theta}{2}\|U_e^{(k+1)}\|^2
+(\eta_D+\frac{\mu}{2})\|V_e^{(k+1)}\|^2 
+\frac{1}{2\mu}\|\lambda_e^{(k+1)}\|^2
+(G_0-g)  \|\nabla U^{(k+1)}_e\|^2
\nonumber&\\[.1in]&
+\frac{\theta}{2}\|U_e^{(k+1)}-U_e^{(k)}\|^2
+(G_0-g_0)\|\nabla U_e^{(k+1)}-\nabla U_e^{(k)}\|^2
+2 \mu \|V_e^{(k+1)}-U_e^{(k+1)}\|^2
\nonumber&\\[.1in]&
+\frac 12 \mu \|V_e^{(k)}-U_e^{(k+1)}\|^2
\le
\frac{\theta}{2} \|U_e^{(k)}\|^2
+\frac{\mu}{2}\|V_e^{(k)}\|^2 
+\frac{1}{2\mu}\|\lambda_e^{(k)}\|^2.
\end{eqnarray}
We drop some positive terms on the left hand sides to get 

\[
E_{k+1}=\frac{\theta}{2}\|U_e^{(k+1)}\|^2
+\frac{\mu}{2}\|V_e^{(k+1)}\|^2 
+\frac{1}{2\mu}\|\lambda_e^{(k+1)}\|^2
\le
\frac{\theta}{2} \|U_e^{(k)}\|^2
+\frac{\mu}{2}\|V_e^{(k)}\|^2 
+\frac{1}{2\mu}\|\lambda_e^{(k)}\|^2=E_k.
\]
This shows that $\{E_k\}_{k\in\mathbb N}$
is a monotonically nonincreasing sequence.

Proof of (b). By Part 1, we have 
\[
\frac{\theta}{2}\|U_e^{(k+1)}-U_e^{(k)}\|^2
+2 \mu \|V_e^{(k+1)}-U_e^{(k+1)}\|^2
+\frac 12 \mu \|V_e^{(k)}-U_e^{(k+1)}\|^2
\le
E_k-E_{k+1}
\]
We sum this inequality from $k=1$ to any positive integer $K>1$ to obtain
\[
\frac{\theta}{2}\sum_{k=1}^K
\|U_e^{(k+1)}-U_e^{(k)}\|^2
+2  \mu \sum_{k=1}^K\frac{\theta}{2} \|V_e^{(k+1)}-U_e^{(k+1)}\|^2
+\frac {\mu}{2} \sum_{k=1}^K\frac{\theta}{2} \|V_e^{(k)}-U_e^{(k+1)}\|^2
\le
E_1-E_{K+1} \le E_1
\]
The latter is due to the fact that $\{E_k\}$ is decreasing.
We let $K$ approach to infinity, 
\[
\frac{\theta}{2}\sum_{k=1}^{\infty}
\|U_e^{(k+1)}-U_e^{(k)}\|^2
+2  \mu \sum_{k=1}^{\infty}\frac{\theta}{2} \|V_e^{(k+1)}-U_e^{(k+1)}\|^2
+\frac {\mu}{2}\sum_{k=1}^{\infty}\frac{\theta}{2} \|V_e^{(k)}-U_e^{(k+1)}\|^2
 \le E_1<\infty
\]
Thus, we have
\begin{eqnarray*}
\lim_{k\to\infty} \|U_e^{(k+1)}-U_e^{(k)}\|=0,\quad
\lim_{k\to\infty} \|V_e^{(k+1)}-U_e^{(k+1)}\|=0,\quad
\lim_{k\to\infty} \|V_e^{(k)}-U_e^{(k+1)}\|=0.
\end{eqnarray*}
Since $U^*=V^*$, then these results are equivalent to 
\begin{eqnarray}\label{inf}
\lim_{k\to\infty} \|U^{(k+1)}-U^{(k)}\|=0,\quad
\lim_{k\to\infty} \|V^{(k+1)}-U^{(k+1)}\|=0,\quad
\lim_{k\to\infty} \|V^{(k)}-U^{(k+1)}\|=0.
\end{eqnarray}
Moreover, by the triangle inequality we have
\[
\|V^{(k+1)}-V^{(k)}\|\le  \|V^{(k+1)}-U^{(k+1)}\|+ \|V^{(k)}-U^{(k+1)}\|.
\]
By (\ref{inf}) we then have 
\[
\lim_{k\to\infty}\|V^{(k+1)}-V^{(k)}\|=0.
\]
Moreover, as $\lambda^{(k+1)}-\lambda^{(k)}=\mu(V^{(k+1)}-U^{(k+1)})$, by (\ref{inf})
we also have 
\[
\lim_{k\to\infty} \|\lambda^{(k+1)}-\lambda^{(k)}\|=\lim_{k\to\infty}  \mu \|V^{(k+1)}-U^{(k+1)}\| =0.
\]

Proof of (c). 
By part (a), the sequence $\{E_k\}$ is monotonically decreasing and bounded below by zero,
hence it approaches a limit. This follows that the sequence 
$\{(U^{(k)}, V^{(k)}, \lambda^{(k)})\}_{k\in\mathbb N}$ is uniformly bounded.
Thus a convergence subsequence $(U^{(k_l)}, V^{(k_l)}, \lambda^{(k_l)})$, $l\ge 1$ exists,
 that approaches a limit, say $(U^{\infty}, V^{\infty}, \lambda^{\infty})$.  
 For the subsequence $\{(U^{(k_l)}, V^{(k_l)}, \lambda^{(k_l)})\}_{l\in\mathbb N}$ 
it holds
\[
-2\nabla\cdot(G_0(\nabla U^{(k_l+1)}-\nabla U^{(k_l)}-2\nabla\cdot g_0\nabla U^{(k_l)})+
\theta(U^{(k_l+1)}-U^{(k_l)})
+\mu (U^{(k_l+1)}-V^{(k_l)})-\lambda^{(k_l)}=0.
\] 
By part (b), 
$\lim_{l\to\infty} \|U^{(k_l+1)}-U^{(k_l)}\|=
\lim_{l\to\infty} \|U^{(k_l+1)}-U^{(k_l)}\|=
\lim_{l\to\infty} \|U^{(k_l+1)}-V^{(k_l)}\| =0.$
Hence by letting $l$ approach to infinity we obtain 
\begin{eqnarray}\label{s1}
-2\nabla\cdot (g_0\nabla U^{\infty})-\lambda^{\infty}=0.
\end{eqnarray}
The subsequence $\{(U^{(k_l)}, V^{(k_l)}, \lambda^{(k_l)})\}_{l\in\mathbb N}$ 
satisfies in the optimality conditions of $V$ subproblem
\[
\langle \eta_{D} (V^{(k_l+1)}-U_0)+\mu (V^{(k_l+1)} - U^{(k_l+1)} ) +\lambda^{k_l}, V-V^{(k_l+1)}
\rangle \ge 0
\]
for all $V$. By part (b) again, as $l$ approaches to infinity we have 
$\lim_{l\to\infty} \|V^{(k_l+1)} - U^{(k_l+1)} \|=0,$
hence  we obtain
\begin{eqnarray}\label{s2}
\langle \eta_{D} (V^{\infty}-U_0)+\lambda^{\infty}, V-V^{\infty}\rangle \ge 0.
\end{eqnarray}
By part (b) again, $\lim_{l\to\infty} \|\lambda^{(k+1)}-\lambda^{(k)}\|=0$. 
By the fact that $\lambda^{(k+1)}-\lambda^{(k)} =\mu (V^{(k_l+1)} - U^{(k_l+1)})$, we obtain
$U^{\infty}-V^{\infty}=0$. By this, (\ref{s1}), and (\ref{s2}), any limit point is a stationary point. 

Proof of (d). 
The proof of the theorem started with an arbitrary extreme point $(U^*,V^*,\lambda^*)$.
Let us consider the specific extreme point $(U^{\infty}, V^{\infty}, \lambda^{\infty})$ that is the 
limit of convergent subsequence $(U^{(k_l)}, V^{(k_l)}, \lambda^{(k_l)})$, $l\ge 1$.
Since the subsequence $(U_e^{(k_l)}, V_e^{(k_l)}, \lambda_e^{(k_l)})$, $l\ge 1$ converges to $0$,
it follows that $E_{k_l}$ tends to zero. Since $E_{k}$ is a monotone decreasing sequence 
it follows that $\{E_k\}_{k\in\mathbb N}$ tends to zero. We conclude that the whole sequence
$(U^{(k)}, V^{(k)}, \lambda^{(k)})$ converges to $(U^*,V^*,\lambda^*)$.

\end{proof}

\bibliographystyle{plain}
\bibliography{counting_ref_apr24}

\begin{thebibliography}{10}

\bibitem{arteta2012learning}
C.~{Arteta}, V.~{Lempitsky}, J.~A. {Noble}, and A.~{Zisserman}.
\newblock Learning to detect cells using non-overlapping extremal regions.
\newblock In Nicholas Ayache, Herv{\'e} Delingette, Polina Golland, and
  Kensaku. Mori, editors, {\em Medical Image Computing and Computer-Assisted
  Intervention -- MICCAI 2012}, pages 348--356, Berlin, Heidelberg, 2012.
  Springer Berlin Heidelberg.

\bibitem{interactivecount}
C.~Arteta, V.~Lempitsky, J.~A. Noble, and A.~Zisserman.
\newblock Interactive object counting.
\newblock In {\em Proceedings of Eur. Conf. Comput. Vis}, pages 504--518, 2014.

\bibitem{arteta2016detecting}
C.~Arteta, V.~Lempitsky, J.~A. Noble, and A.~Zisserman.
\newblock Detecting overlapping instances in microscopy images using extremal
  region trees.
\newblock {\em Medical Image Analysis}, 27:3--16, 2016.
\newblock Discrete Graphical Models in Biomedical Image Analysis.

\bibitem{countingwild}
C.~Arteta, V.~Lempitsky, and A.~Zisserman.
\newblock Counting in the wild.
\newblock In {\em Proc. Eur. Conf. Comput. Vis.}, pages 1--16, 2016.

\bibitem{ayalew2020unsupervised}
T.~W. Ayalew, J.~R. Ubbens, and I.~Stavness.
\newblock Unsupervised domain adaptation for plant organ counting.
\newblock In Adrien Bartoli and Andrea Fusiello, editors, {\em Computer Vision
  -- ECCV 2020 Workshops}, pages 330--346, Cham, 2020. Springer International
  Publishing.

\bibitem{baker2016power}
L.~{Baker}, S.~{Mills}, T.~{Langlotz}, and C.~{Rathbone}.
\newblock Power line detection using hough transform and line tracing
  techniques.
\newblock In {\em 2016 International Conference on Image and Vision Computing
  New Zealand (IVCNZ)}, pages 1--6, Nov 2016.

\bibitem{detect-hough}
O.~Barinova, V.~Lempitsky, and P.~Kholi.
\newblock On detection of multiple object instances using hough transforms.
\newblock {\em IEEE Transactions on Pattern Analysis and Machine Intelligence},
  34(9):1773 -- 1784, 2012.

\bibitem{baygin2018image}
M.~Baygin, M.~Karak{\"{o}}se, A.~Sarimaden, and E.~Akin.
\newblock An image processing based object counting approach for machine vision
  application.
\newblock {\em CoRR}, abs/1802.05911, 2018.

\bibitem{berge2011improved}
H.~{Berge}, D.~{Taylor}, S.~{Krishnan}, and T.~S. {Douglas}.
\newblock Improved red blood cell counting in thin blood smears.
\newblock In {\em 2011 IEEE International Symposium on Biomedical Imaging: From
  Nano to Macro}, pages 204--207, March 2011.

\bibitem{DistribOpt}
S.~Boyd, N.~Parikh, E.~Chu, B.~Peleato, and J.~Eckstein.
\newblock Distributed optimization and statistical learning via the alternating
  direction method of multipliers.
\newblock {\em Found. Trends Mach. Learn.}, 3(1):1–122, January 2011.

\bibitem{cha2016vision}
Y.~J. Cha, K.~You, and W.~Choi.
\newblock Vision-based detection of loosened bolts using the hough transform
  and support vector machines.
\newblock {\em Automation in Construction}, 71:181--188, 2016.

\bibitem{chen1999automatic}
Y.~{Chen}, K.~{Biddell}, A.~{Sun}, P.~A. {Relue}, and J.~D. {Johnson}.
\newblock An automatic cell counting method for optical images.
\newblock In {\em Proceedings of the First Joint BMES/EMBS Conference. 1999
  IEEE Engineering in Medicine and Biology 21st Annual Conference and the 1999
  Annual Fall Meeting of the Biomedical Engineering Society (Cat. N}, volume~2,
  pages 819 vol.2--, Oct 1999.

\bibitem{chy13}
Y.~Chen, W.~W. Hager, M.~Yashtini, X.~Ye, and H.~Zhang.
\newblock Bregman operator splitting with variable stepsize for total variation
  image reconstruction.
\newblock {\em Computational optimization and applications}, 54(2):317--342,
  2013.

\bibitem{cholakkal2019object}
H.~Cholakkal, G.~Sun, F.~S. Khan, and L.~Shao.
\newblock Object counting and instance segmentation with image-level
  supervision.
\newblock In {\em Proceedings of the IEEE/CVF Conference on Computer Vision and
  Pattern Recognition (CVPR)}, June 2019.

\bibitem{chourasiya2014automatic}
Sumeet Chourasiya and G~Usha Rani.
\newblock Automatic red blood cell counting using watershed segmentation.
\newblock {\em Hemoglobin}, 14:17, 2014.

\bibitem{cruz2016multi}
J.~A. Cruz, X.~Yin, X.~Liu, S.~M. Imran, D.~D. Morris, D.~M. Kramer, and
  J.~Chen.
\newblock Multi-modality imagery database for plant phenotyping.
\newblock {\em Machine Vision and Applications}, 27(5):735--749, 2016.

\bibitem{Ding_2019_CVPR}
J.~Ding, N.~Xue, Y.~Long, G.~S. Xia, and Q.~Lu.
\newblock Learning roi transformer for detecting oriented objects in aerial
  images, 2018.

\bibitem{ding2020classification}
X.~Ding, Q.~Zhang, and W.~J. Welch.
\newblock Classification beats regression: Counting of cells from greyscale
  microscopic images based on annotation-free training samples, 2020.

\bibitem{drury2011endometrial}
Josephine~A Drury, Helena Nik, Robbert~HF van Oppenraaij, Ai-Wei Tang, Mark~A
  Turner, and Siobhan Quenby.
\newblock Endometrial cell counts in recurrent miscarriage: a comparison of
  counting methods.
\newblock {\em Histopathology}, 59(6):1156--1162, 2011.

\bibitem{ester1996density}
M.~Ester, H.~P. Kriegel, J.~Sander, and X.~Xu.
\newblock A density-based algorithm for discovering clusters in large spatial
  databases with noise.
\newblock In {\em Proceedings of the Second International Conference on
  Knowledge Discovery and Data Mining}, KDD'96, page 226–231. AAAI Press,
  1996.

\bibitem{FLCS_count05}
M.~{Fritz}, B.~{Leibe}, B.~{Caputo}, and B.~{Schiele}.
\newblock Integrating representative and discriminant models for object
  category detection.
\newblock In {\em Tenth IEEE International Conference on Computer Vision
  (ICCV'05) Volume 1}, volume~2, pages 1363--1370 Vol. 2, Oct 2005.

\bibitem{GL_count09}
J.~{Gall} and V.~{Lempitsky}.
\newblock Class-specific hough forests for object detection.
\newblock In {\em Decision forests for computer vision and medical image
  analysis}, pages 143--157. Springer, 2013.

\bibitem{guo2013method}
X.~{Guo} and F.~{Yu}.
\newblock A method of automatic cell counting based on microscopic image.
\newblock In {\em 2013 5th International Conference on Intelligent
  Human-Machine Systems and Cybernetics}, volume~1, pages 293--296, Aug 2013.

\bibitem{hnyz15}
W.~Hager, C.~Ngo, M.~Yashtini, and H.~Zhang.
\newblock An alternating direction approximate newton algorithm for
  ill-conditioned inverse problems with application to parallel mri.
\newblock {\em Journal of the Operations Research Society of China},
  3(2):139--162, 2015.

\bibitem{hestenes1969multiplier}
M.~R. Hestenes.
\newblock Multiplier and gradient methods.
\newblock {\em Journal of optimization theory and applications}, 4(5):303--320,
  1969.

\bibitem{crowd_dense_est13}
H.~Idrees, I.~Saleemi, C.~Seibert, and M.~Shah.
\newblock Multi-source multi-scale counting in extremely dense crowd images.
\newblock In {\em Proceedings of the IEEE conference on computer vision and
  pattern recognition}, pages 2547--2554, 2013.

\bibitem{kang2007variational}
S.~H. {Kang} and R.~{March}.
\newblock Variational models for image colorization via chromaticity and
  brightness decomposition.
\newblock {\em IEEE Transactions on Image Processing}, 16(9):2251--2261, Sep.
  2007.

\bibitem{kang2011regularized}
S.~H. Kang, B.~Sandberg, and Andy. A.~M.~Yip.
\newblock A regularized k-means and multiphase scale segmentation.
\newblock {\em Inverse Problems \& Imaging}, 5(2):407, 2011.

\bibitem{kolhatkar2016detection}
D.~Kolhatkar and N.~Wankhade.
\newblock Detection and counting of blood cells using image segmentation: A
  review.
\newblock In {\em 2016 World Conference on Futuristic Trends in Research and
  Innovation for Social Welfare (Startup Conclave)}, pages 1--5. IEEE, 2016.

\bibitem{kothari2009automated}
S.~{Kothari}, Q.~{Chaudry}, and M.~D. {Wang}.
\newblock Automated cell counting and cluster segmentation using concavity
  detection and ellipse fitting techniques.
\newblock In {\em 2009 IEEE International Symposium on Biomedical Imaging: From
  Nano to Macro}, pages 795--798, June 2009.

\bibitem{loukas2003image}
C.~G. Loukas, G.~D. Wilson, B.~Vojnovic, and A.~Linney.
\newblock An image analysis-based approach for automated counting of cancer
  cell nuclei in tissue sections.
\newblock {\em Cytometry Part A: the journal of the International Society for
  Analytical Cytology}, 55(1):30--42, 2003.

\bibitem{lu2018class}
E.~Lu, W.~Xie, and A.~Zisserman.
\newblock Class-agnostic counting.
\newblock In C.~V. Jawahar, Hongdong Li, Greg Mori, and Konrad Schindler,
  editors, {\em Computer Vision -- ACCV 2018}, pages 669--684, Cham, 2019.
  Springer International Publishing.

\bibitem{maitra2012detection}
M.~Maitra, R.~K. Gupta, and M.~Mukherjee.
\newblock Detection and counting of red blood cells in blood cell images using
  hough transform.
\newblock {\em International journal of computer applications}, 53(16), 2012.

\bibitem{MM_count09}
S.~{Maji} and J.~{Malik}.
\newblock Object detection using a max-margin hough transform.
\newblock In {\em 2009 IEEE Conference on Computer Vision and Pattern
  Recognition}, pages 1038--1045, June 2009.

\bibitem{marsden2018people}
M.~Marsden, K.~McGuinness, S.~Little, C.~E. Keogh, and N.~E. O'Connor.
\newblock People, penguins and petri dishes: Adapting object counting models to
  new visual domains and object types without forgetting.
\newblock In {\em Proceedings of the IEEE Conference on Computer Vision and
  Pattern Recognition}, pages 8070--8079, 2018.

\bibitem{onoro2016towards}
D.~Onoro-Rubio and R.~J. L{\'o}pez-Sastre.
\newblock Towards perspective-free object counting with deep learning.
\newblock In {\em European conference on computer vision}, pages 615--629.
  Springer, 2016.

\bibitem{otsu1979threshold}
N.~Otsu.
\newblock A threshold selection method from gray-level histograms.
\newblock {\em IEEE transactions on systems, man, and cybernetics},
  9(1):62--66, 1979.

\bibitem{ranjan2018iterative}
V.~Ranjan, H.~Le, and M.~Hoai.
\newblock Iterative crowd counting.
\newblock In {\em Proceedings of the European Conference on Computer Vision
  (ECCV)}, pages 270--285, 2018.

\bibitem{shrivakshan2012comparison}
G.T. Shrivakshan and C.~Chandrasekar.
\newblock A comparison of various edge detection techniques used in image
  processing.
\newblock {\em International Journal of Computer Science Issues (IJCSI)},
  9(5):269, 2012.

\bibitem{tulsani2013segmentation}
Hemant Tulsani, Saransh Saxena, and Naveen Yadav.
\newblock Segmentation using morphological watershed transformation for
  counting blood cells.
\newblock {\em IJCAIT}, 2(3):28--36, 2013.

\bibitem{venkatalakshmi2013automatic}
B.~{Venkatalakshmi} and K.~{Thilagavathi}.
\newblock Automatic red blood cell counting using hough transform.
\newblock In {\em 2013 IEEE Conference on Information Communication
  Technologies}, pages 267--271, April 2013.

\bibitem{vincent2009descriptive}
O.~R. Vincent and O.~Folorunso.
\newblock A descriptive algorithm for sobel image edge detection.
\newblock In {\em Proceedings of informing science \& IT education conference
  (InSITE)}, volume~40, pages 97--107. Informing Science Institute California,
  2009.

\bibitem{CNNboosting}
E.~{Walach} and L.~{Wolf}.
\newblock Learning to count with cnn boosting.
\newblock In {\em ECCV, Springer}, pages 660--676, 2016.

\bibitem{deeppeaple}
C.~{Wang}, H.~{Zhang}, L.~{Yang}, S.~{Liu}, and X.~{Cao}.
\newblock Deep people counting in extremely dense crowds.
\newblock In {\em ACM MM}, pages 1299--1302, 2015.

\bibitem{wang2016fast}
Y.~{Wang} and Y.~{Zou}.
\newblock Fast visual object counting via example-based density estimation.
\newblock In {\em 2016 IEEE International Conference on Image Processing
  (ICIP)}, pages 3653--3657, Sep. 2016.

\bibitem{WT10}
C.~Wu and X.~C. Tai.
\newblock Augmented lagrangian method, dual methods, and split bregman
  iteration for rof, vectorial tv, and high order models.
\newblock {\em SIAM Journal on Imaging Sciences}, 3(3):300--339, 2010.

\bibitem{Xia_2018_CVPR}
G.~S. Xia, X.~Bai, J.~Ding, Z.~Zhu, S.~Belongie, J.~Luo, M.~Datcu, M.~Pelillo,
  and L.~Zhang.
\newblock Dota: A large-scale dataset for object detection in aerial images.
\newblock In {\em Proceedings of the IEEE Conference on Computer Vision and
  Pattern Recognition}, pages 3974--3983, 2018.

\bibitem{xie2018microscopy}
W.~Xie, J.~A. Noble, and A.~Zisserman.
\newblock Microscopy cell counting and detection with fully convolutional
  regression networks.
\newblock {\em Computer methods in biomechanics and biomedical engineering:
  Imaging \& Visualization}, 6(3):283--292, 2018.

\bibitem{YangZhangYin2010}
J.~{Yang}, Y.~{Zhang}, and W.~{Yin}.
\newblock A fast alternating direction method for tvl1-l2 signal reconstruction
  from partial fourier data.
\newblock {\em IEEE Journal of Selected Topics in Signal Processing},
  4(2):288--297, April 2010.

\bibitem{YKssvm15}
M.~Yashtini and S.~H. Kang.
\newblock Alternating direction method of multiplier for euler's elastica-based
  denoising.
\newblock In Jean-Fran{\c{c}}ois Aujol, Mila Nikolova, and Nicolas Papadakis,
  editors, {\em Scale Space and Variational Methods in Computer Vision}, pages
  690--701, Cham, 2015. Springer International Publishing.

\bibitem{MaryamKangSIIM}
M.~Yashtini and S.~H. Kang.
\newblock A fast relaxed normal two split method and an effective weighted tv
  approach for euler's elastica image inpainting.
\newblock {\em SIAM Journal on Imaging Sciences}, 9(4):1552--1581, 2016.

\bibitem{yashtini2019efficient}
M.~Yashtini, S.~H. Kang, and W.~Zhu.
\newblock Efficient alternating minimization methods for variational
  edge-weighted colorization models.
\newblock {\em Advances in Computational Mathematics}, 45(3):1735--1767, 2019.

\bibitem{MCNN}
Y.~Zhang, D.~Zhou, S.~Chen, S.~Gao, and Y.~Ma.
\newblock Single-image crowd counting via multi-column convolutional neural
  network.
\newblock In {\em Proc. IEEE Conf. Comput. Vis. Pattern Recognit}, pages
  589--597, June 2016.

\bibitem{zuiderveld1994contrast}
K.~Zuiderveld.
\newblock Contrast limited adaptive histogram equalization.
\newblock {\em Graphics gems}, pages 474--485, 1994.

\end{thebibliography}

\end{document}